\newtheorem{theorem}{Theorem}
\newtheorem{lemma}{Lemma}
\newtheorem{corollary}{Corollary}
\newtheorem{definition}{Definition}
\newtheorem{proposition}{Proposition}
\newtheorem{assumption}{Assumption}
\newtheorem{remark}{Remark}
\newtheorem{example}{Example}
\newcommand{\bc}{\mathbf{c}}
\newcommand{\C}{\mathbf{C}}
\newcommand{\x}{\mathbf{x}}
\newcommand{\mup}{\boldsymbol{\mu_+}}
\newcommand{\mum}{\boldsymbol{\mu_-}}
\newcommand{\f}{\boldsymbol{f}}
\newcommand{\mupm}{\boldsymbol{\mu_\pm}}
\newcommand{\fp}{\boldsymbol{f_+}}
\newcommand{\fm}{\boldsymbol{f_-}}
\newcommand{\fpm}{\boldsymbol{f_\pm}}
\newcommand{\p}[1]{\left(#1 \right)}
\newcommand{\defeq}{\vcentcolon=}
\newcommand{\eqdef}{=\vcentcolon}
\newcommand{\ones}{\mathbf{1}}
\newcommand{\dd}{\mathrm{d}}
\newcommand{\OT}{\mathrm{OT}}
\newcommand{\supp}{\mathrm{supp}}
\newcommand{\id}{\mathrm{id}}
\newcommand{\dom}{\mathrm{dom}}
\newcommand{\osc}{\mathrm{osc}}
\DeclareMathOperator*{\argmin}{arg\,min}
\renewcommand{\phi}{\varphi}
\newcommand{\eps}{\varepsilon}
\newcommand{\E}{\mathbb{E}}
\newcommand{\N}{\mathbb{N}}
\renewcommand{\P}{\mathbb{P}}
\newcommand{\Q}{\mathbb{Q}}
\newcommand{\R}{\mathbb{R}}
\newcommand{\cF}{\mathcal{F}}
\newcommand{\cG}{\mathcal{G}}
\newcommand{\cP}{\mathcal{P}}
\newcommand{\cQ}{\mathcal{Q}}
\newcommand{\cR}{\mathcal{R}}
\newcommand{\cS}{\mathcal{S}}
\newcommand{\cW}{\mathcal{W}}
\newcommand{\cX}{\mathcal{X}}
\newcommand{\cY}{\mathcal{Y}}
\newcommand{\cZ}{\mathcal{Z}}
\newcommand{\bbR}{\mathbb{R}}
\newcommand{\bx}{\boldsymbol{x}}
\newcommand{\fbayes}{f^{\text{Bayes}}}
\newcommand{\gbayes}{g^{\text{Bayes}}}
\providecommand{\keywords}[1]
{
  \small	
  \textbf{\textit{Keywords---}} #1
}
\begin{document}

\title{Demographic parity in regression and
classification within the unawareness framework}

\author[1]{Vincent Divol \thanks{vincent.divol@ensae.fr}}
\author[1]{Solenne Gaucher \thanks{solenne.gaucher@ensae.fr}}
\affil[1]{CREST, ENSAE, IP Paris}
\maketitle

\begin{abstract}
This paper explores the theoretical foundations of fair regression under the constraint of demographic parity within the unawareness framework, where disparate treatment is prohibited, extending existing results where such treatment is permitted. Specifically, we aim to characterize the optimal fair regression function when minimizing the quadratic loss. Our results reveal that this function is given by the solution to a barycenter problem with optimal transport costs. Additionally, we study the connection between optimal fair cost-sensitive classification, and optimal fair regression. We demonstrate that nestedness of the decision sets of the classifiers is both necessary and sufficient to establish a form of equivalence between classification and regression. Under this nestedness assumption, the optimal classifiers can be derived by applying thresholds to the optimal fair regression function; conversely, the optimal fair regression function is characterized by the family of cost-sensitive classifiers.
\end{abstract}

\keywords{Statistical fairness, demographic parity, optimal transport, unawareness framework}

\section{Introduction}
\subsection{Motivation}

Recent breakthroughs in artificial intelligence have led to the widespread adoption of machine learning algorithms, exerting an increasingly influential and insidious impact on our lives. Essentially, these algorithms learn to detect and reproduce patterns using massive datasets. It is now widely recognized that these predictions carry the risk of perpetuating, or even exacerbating, the social discriminations and biases often present in these datasets \citep{Propublica, barocas-hardt-narayanan}. Algorithmic fairness seeks to measure and mitigate the unfair impact of algorithms; we refer the reader to the reviews by \cite{barocas-hardt-narayanan, delbarrio2020review, Oneto_2020} for an introduction.

Different approaches have been developed to mitigate algorithmic unfairness. One approach focuses on \textit{individual fairness}, ensuring that similar individuals are treated similarly, regardless of potentially discriminatory factors. Another approach targets \textit{group fairness}, aiming to prevent algorithmic  predictions
from discriminating against groups of individuals.
\textit{Statistical fairness} falls under the latter approach and relies on the formalism of supervised learning to impose fairness criteria while minimizing a risk measure. In this work, we study risk minimization under the demographic parity criterion, which requires that predictions be statistically independent of sensitive attributes. Although this criterion, introduced by \cite{Calders09, pmlr-v97-agarwal19d}, has some known limitations \citep{Hardt16, JMLR:v20:18-262}, it finds application in a wide range of scenarios \citep{Makhlouf23, JMLR:v25:23-0322}. Its simplicity arguably makes it the most extensively studied criterion.

%

The statistical fairness literature can be broadly divided into two currents, depending on whether the direct use of the protected attribute in predictions is permitted or not. A first line of works, studying the \textit{awareness framework}, considers regression functions that make explicit use of discriminating attributes, thus treating individuals differently based on discriminating factors. For this reason, this approach is also often referred to as \textit{disparate treatment}. In this work, we adopt the \textit{unawareness framework}, in which disparate treatment is prohibited and the regression function cannot directly use the sensitive attribute. Empirical evidence from simulations \citep{liptonDisparity} indicates that within the unawareness framework, predictions often result in suboptimal trade-offs between fairness and accuracy and may induce within-group discrimination. Moreover, the authors conjecture that while the unawareness framework aims to prevent discrimination based on sensitive attributes, predictions in this setting implicitly rely on estimates of these attributes—a phenomenon later proven in \cite{pmlr-v206-gaucher23a} for classification problems. Nevertheless, this framework remains crucial in practice, as the direct use of sensitive attributes may be legally prohibited or simply unavailable at prediction time.

In this paper, we investigate the problem of fair regression under demographic parity constraints within the unawareness framework. A key difficulty in overcoming algorithmic unfairness is the limited understanding of how fair algorithms make predictions. Therefore, we focus on providing a simple mathematical characterization of the optimal regression function in the presence of fairness constraints. 

\subsection{Problem statement}  
Let $(X,S,Y)$ be a tuple in $\cX \times \cS \times \R$ with distribution $\P$, where $X$ corresponds to a non-sensitive feature in a feature space $\cX$, $S$ is a sensitive attribute in a finite set $\cS$, and $Y$ is a response variable that we want to predict, which has a finite second moment. To illustrate this problem with an example, assume, as in \cite{Chzhen2020AMF}, that $X$ represents a candidate's skill, $S$ is an attribute indicating groups of populations, and $Y$ is the current market salary of the candidate. Due to historical biases, the distribution of the salary may be unbalanced between the groups. Our aim is to make predictions that are fair, and as close as possible to the current market value $Y$. In the unawareness framework, we cannot make explicit use of the sensitive attribute to make our predictions. Therefore, we consider regression functions of the form $f :  \cX \rightarrow \bbR$ in the set of score functions $\cF$. We want to ensure that our regression function satisfies the following demographic parity criterion.
\begin{definition}[Demographic parity]
The function $f :  \cX \rightarrow \bbR$ verifies the Demographic Parity criterion if 
\begin{align*}
    f(X) \perp S.
\end{align*}
\end{definition}
In essence, the demographic parity criterion requires that the distribution of predictions (in our example, the salary) be identical across all groups. We assess the quality of a regression function $f$ through its quadratic risk 
\begin{align*}
    \cR_{sq}(f) = \mathbb{E}\left[\left(Y - f(X)\right)^2\right].
\end{align*}
\begin{definition}[Fair regression]\label{def:fair_regression}
An optimal fair regression function $f^*$ satisfies
\begin{align}\label{eq:unaware_regression}
    f^* \in \argmin_{f \in \cF}\left \{\cR_{sq}(f)\ : \ f(X) \perp S \right\},
\end{align}
where $\cF$ is the set of regression functions from $\cX$ to $\R$. 
\end{definition}
Classical results show that when no fairness constraints are imposed, the Bayes regression function $\fbayes$ minimizing the squared risk $\cR_{sq}$ is a.s. equal to the conditional expectation $\eta$, where 
\begin{align*}
   \eta(x) = \mathbb{E}\left[Y \vert X = x\right].
\end{align*}
In this paper, we also investigate the relationship between classification and regression problem. When $Y \in \{0,1\}$ a.s., the quality of a classification function $g : \cX \rightarrow \{0,1\}$ can be assessed through its expected weighted $0-1$ loss $\cR_{y}(g)$, where for $y \in [0,1]$, $\cR_{y}(g)$ is defined as
\begin{align*}
    \cR_{y}(g) = y\cdot \mathbb{P}\left[Y = 0,  g(X) = 1\right] + (1-y)\cdot \mathbb{P}\left[Y = 1,  g(X) = 0\right].
\end{align*}
For the choice $y = 1/2$, minimizing this risk measure corresponds to maximizing the classical accuracy measure. 
\begin{definition}[Fair classification]\label{def:fair_classif}
For a given value $y\in [0,1]$, an \textit{optimal fair classification function} $g^*_y$ verifies
\begin{align}\label{eq:unaware_classif}
    g^*_y \in \argmin_{g \in \cG}\left \{\cR_{y}(g)\ : \ g(X) \perp S \right\},
\end{align}
where $\cG$ is the set of classification
 functions from $\cX$ to $\{0,1\}$. 
\end{definition}
Let us again illustrate this problem with an example from recruitment. Assume that $X$ represents a candidate's skill, $S$ is an attribute indicating different population groups, and $Y$ denotes whether a human recruiter would consider the candidate qualified for a given position. Due to historical biases, the distribution of the binary response $Y$ may be unbalanced across the groups. We aim to make a prediction, or equivalently take the decision to accept or reject a candidate. Our goal is to make predictions for the value of $Y$, or equivalently, to decide whether to accept or reject a candidate, in a way that is both accurate and fair. Specifically, under demographic parity, we aim to ensure that the probability of acceptance is the same across all groups.

Classical results show that when no fairness constraints are imposed, the classifier $\gbayes_y(x) = \mathds{1}\left\{\fbayes(x)  \geq y \right\}$ is a Bayes classifier that minimizes $\cR_{y}(g)$. This relationship is at the heart of the design and study of plug-in classifiers \citep{Yang1999,Massart_2006,Tsybakov2007,Biau2008}. Interestingly, it was recently shown that a similar relationship holds under demographic parity constraints in the awareness framework \cite{pmlr-v206-gaucher23a}. Extending this result to the unawareness framework has remained an open problem, which we address in this paper.


\paragraph{Notation} We first set some notation.  Recall that we are given a tuple $(X,S,Y)$ in $\cX\times \cS\times \R$ with distribution $\P$, where $\cX$ is any measurable space (the space of features) and $\cS$ is a finite set (the set of labels). For $s\in \cS$, we denote by $p_s$ the probability $\P(S=s)$ and by $\mu_s$ the conditional law of $X|S=s$. We let $\mu =  \sum_{s\in \cS} p_s\mu_s$ be the marginal distribution of $X$. We let $\cP(\cX)$ be the set of probability measures on the measurable space $\cX$. Moreover, we let $L^1(\nu)$  be the space of functions integrable with respect to the probability measure $\nu$. Finally, $\mathring C$ denotes the interior of the set $C$.

\subsection{Related work}
\paragraph{Fair classification} Research on optimal prediction under demographic parity constraints has primarily focused on classification, where the goal is to predict a binary response in $\{0,1\}$, as this problem is intrinsically linked to the issue of fair candidate selection, central in algorithmic fairness. This problem is well understood in the awareness setting from an algorithmic point of view \citep{Feldman15,pmlr-v81-menon18a,NEURIPS2020_29c0605a, pmlr-v161-schreuder21a,NEURIPS2022_b1d9c7e7,JMLR:v25:23-0322}. On the theoretical side, \cite{pmlr-v206-gaucher23a}  recently proved  that the optimal classifier for the risk $\cR_{y}$ can be obtained as the indicator that the optimal fair prediction function for the squared loss  $f^*$ is above the threshold $y$, a result that was later extended in \cite{pmlr-v202-xian23b} to multi-class classification. 

Less is known about fair classification in the unawareness framework. On the algorithmic side, several works have proposed various relaxations of the demographic parity constraint, leading to tractable algorithms for computing  classifiers \citep{NIPS2016_dc4c44f6, JMLR:v20:18-262,Oneto2020}. On the theoretical side, \cite{liptonDisparity} provided empirical evidence suggesting that fair classifiers may base their decisions on non-relevant features correlated with the sensitive attribute, potentially disrupting within-group ordering. This hypothesis was further confirmed by \cite{pmlr-v206-gaucher23a}, who characterized the optimal fair classifier in the unawareness framework. They showed that it is given by the indicator that the conditional expectation $\eta(X)$ is above a threshold, which depends on the probabilities that the individual described by $X$ belongs to the different groups. Notably, the question of whether this classifier can be obtained by thresholding the optimal fair prediction function for the squared loss remains an open problem.

\paragraph{Fair regression} In the awareness framework, fair regression is well understood  from both the algorithmic and theoretical points of view \citep{Chzhen2020AMF,gouic2020projection,Chzhen2020a}. On the theoretical front, it has been shown that the problem of fair regression under demographic parity can be rephrased as the problem of finding the weighted barycenter of the distributions of $\eta(X,S)=\E[Y|X,S]$ across different groups, with costs given by optimal transport problems.
\begin{theorem}[\cite{Chzhen2020AMF,gouic2020projection}]\label{thm:awareness}
    Assume that for all $s \in \cS$, the distribution $\nu_{s}$ of $\eta(X,S)$ for $S=s$ has no atoms, and let $p_s = \mathbb{P}(S = s)$. Then,
    \begin{align*}
        \underset{\text{f is fair}}{\min} \cR_{sq}(f) = \min_{\nu\in \cP(\R)}\sum_{s\in \cS}p_s\cW_2^2(\nu_s, \nu)
    \end{align*}
    where $\cW_2^2(\nu_s, \nu)$ is the squared Wasserstein distance between $\nu_s$ and $\nu$. Moreover, if $f^*$ and $\nu$ solve the left-hand side and the right-hand side problems respectively, then $\nu$ is equal to the distribution of $f^*(X,S)$, and 
    \begin{align*}
        f^*(x,s) = \left(\sum_{s' \in \cS} p_{s'}\cQ_{s'}\right)\circ F_s(\eta(x,s)),
    \end{align*}
    where $\cQ_s$ and $F_s$ are respectively the  quantile function and the c.d.f. of $\nu_s$.
\end{theorem}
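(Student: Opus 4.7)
My plan is to reduce the constrained minimization to a one-dimensional Wasserstein barycenter problem in two stages: first decompose the squared risk using $\eta$, then invoke the definition of $\cW_2$ group-by-group. The optimal $f^*$ is then recovered by composing the monotone rearrangement between $\nu_s$ and the barycenter.

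First, conditioning on $(X,S)$ and using $\eta(X,S) = \E[Y\mid X,S]$, the standard bias--variance split gives
\begin{equation*}
    \cR_{sq}(f) = \E\b{(Y - \eta(X,S))^2} + \E\b{(\eta(X,S) - f(X,S))^2}
\end{equation*}
for any measurable $f$. Since the first term is independent of $f$, minimizing the fair risk amounts to minimizing the second term subject to $f(X,S) \perp S$. The constraint says that the conditional distribution of $f(X,S)$ given $S=s$ equals a common measure $\nu$ for every $s\in\cS$. Conditioning on $S$ then yields
\begin{equation*}
    \E\b{(\eta(X,S) - f(X,S))^2} = \sum_{s\in \cS} p_s\, \E\b{(\eta(X,s) - f(X,s))^2 \mid S=s}.
\end{equation*}
Since the conditional laws of $\eta(X,s)$ and $f(X,s)$ under $S=s$ are $\nu_s$ and $\nu$, the definition of $\cW_2$ as an infimum over couplings gives $\E\b{(\eta(X,s)-f(X,s))^2 \mid S=s} \geq \cW_2^2(\nu_s, \nu)$, so the constrained risk is bounded below by $\E[(Y-\eta(X,S))^2] + \sum_{s\in \cS} p_s \cW_2^2(\nu_s,\nu)$.

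Next, I would establish matching achievability. Because $\nu_s$ is atomless, the monotone rearrangement $T_s^{\nu} = \cQ_\nu \circ F_s$ is the unique optimal transport map from $\nu_s$ onto any target $\nu$; in particular $T_s^{\nu}(\eta(X,s))$, conditionally on $S=s$, has law $\nu$ and saturates the $\cW_2^2$ bound. So for any candidate $\nu$, the regressor $f(x,s) = \cQ_\nu(F_s(\eta(x,s)))$ is measurable, fair, and attains the lower bound simultaneously for every $s$. The outer minimization therefore reduces to the 1D barycenter problem $\min_\nu \sum_s p_s \cW_2^2(\nu_s, \nu)$. Using the one-dimensional representation $\cW_2^2(\mu_1,\mu_2) = \int_0^1 (\cQ_{\mu_1}(u) - \cQ_{\mu_2}(u))^2 \d u$, this becomes a pointwise-in-$u$ quadratic minimization in $\cQ_\nu(u)$, whose solution is the weighted mean of quantiles $\cQ_{\nu^*}(u) = \sum_{s'\in\cS} p_{s'} \cQ_{s'}(u)$. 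Substituting back yields the claimed formula $f^*(x,s) = \bigl(\sum_{s'\in\cS} p_{s'} \cQ_{s'}\bigr) \circ F_s(\eta(x,s))$.

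The main obstacle, in my view, is the achievability step rather than the lower bound: one must verify that the ansatz $f^*(x,s) = T_s^{\nu^*}(\eta(x,s))$ realizes the optimal coupling between $\nu_s$ and $\nu^*$ \emph{conditionally on $S=s$}, and that the resulting $f^*(X,S)$ indeed has the common marginal $\nu^*$. The atomlessness assumption on each $\nu_s$ is essential here: without it, $F_s(\eta(X,s))$ need not be uniform on $[0,1]$ under $\P(\cdot\mid S=s)$, the monotone coupling is no longer induced by a deterministic map, and the clean quantile formula for $f^*$ breaks down.
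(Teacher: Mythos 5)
Your proof is correct and follows the standard argument from the cited references (Chzhen et al.\ and Le Gouic et al.); note that the present paper states Theorem~\ref{thm:awareness} as a cited result and does not reprove it, so there is no internal proof to compare against. The three key steps — the bias--variance decomposition, the observation that demographic parity is equivalent to a common conditional marginal $\nu$, and the 1D quantile representation $\cW_2^2(\nu_s,\nu)=\int_0^1(\cQ_s-\cQ_\nu)^2$ reducing the barycenter to a pointwise quadratic minimization — are exactly the right ingredients, and your identification of where atomlessness of $\nu_s$ enters (achievability via the monotone rearrangement map) is accurate. The one place you gloss over is the ``moreover'' clause: you construct \emph{a} minimizer of the claimed form, but to conclude that \emph{every} optimal $f^*$ has distribution $\nu^*$ and equals $\bigl(\sum_{s'}p_{s'}\cQ_{s'}\bigr)\circ F_s\circ\eta$ a.s., you should also note that (i) the map $u\mapsto\cQ_{\nu^*}(u)$ is the unique pointwise minimizer by strict convexity, so $\nu^*$ is the unique barycenter, and (ii) for atomless $\nu_s$ the optimal plan to $\nu^*$ is the unique monotone one, hence deterministic, which pins down $f^*$ $\mu_s$-a.e.\ for each $s$.
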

This result relates the problem of fair regression in the awareness framework to  a more general optimal transport problem. Interestingly, this problem has an explicit solution, given by the quantiles and c.d.fs of the conditional expectation $\eta(X,S)$ across the different groups. This explicit formulation yields, as an immediate consequence, that the optimal fair regression function preserves order, a property introduced in \cite{Chzhen2020a, Chzhen2020AMF} within the awareness framework. Recall that the Bayes regression function in the awareness framework is $\eta$. A prediction function $f$ is said to \textit{preserve order} if for any two candidates $(x,x') \in \cX^2$ in the same group $s \in \cS$, $\eta(x,s) \leq \eta(x',s)$ implies $f(x, s) \leq f(x',s)$. Thus, this property implies that the fairness correction does not alter the ordering of the predictions within a group.

In contrast, the problem of fair regression within the unawareness framework has been seldom studied, particularly from a theoretical perspective. One reason for this is that the demographic parity constraint is more challenging to implement without disparate treatment. While algorithms complying with these constraints have been proposed by \cite{Chzhen2020AnEO} and \cite{zhou2023groupblind},  the authors do not claim that the estimators obtained are optimal in terms of risk. \cite{pmlr-v97-agarwal19d} propose an algorithm based on a discretization of the problem, followed by a reduction to cost-sensitive classification. However, their algorithm requires calling an oracle cost-sensitive classifier, which may not be available in practice. Additionally, their results are limited to a class of regression functions with bounded Rademacher complexity.

\subsection{Outline and contribution}

In this paper, we focus on the theoretical aspects of the problem of fair regression in the unawareness framework, specifically on characterizing and studying the optimal regression function. We extend  results presented earlier in the awareness framework to this setting, albeit under the assumption that the sensitive attribute takes only two values; henceforth, we assume that $\cS = \{1,2\}$. Although restrictive, this assumption is not uncommon in the literature \citep{liptonDisparity} and covers the important case where one of the two groups includes protected individuals. Our results shed light on important phenomena, and we leave the extension to scenarios with more than two groups to future work.

Similarly to the awareness case characterized in Theorem \ref{thm:awareness}, we show that the solution to the fair regression problem in the unawareness framework is given by the solution to a barycenter problem with optimal transport costs. We begin  in Section \ref{sec:OT} with a brief introduction to optimal transport theory and to the main tools used in the proofs of our results. In Section \ref{sec:barycenter}, we characterize the optimal fair regression function. First, we prove in Proposition \ref{prp:not-envy-free} that \textit{in general, the optimal fair regression function $f^*$ does not preserve order}. Next, we demonstrate the following result, which relates fair regression in the unawareness framework to an optimal transport problem.

\begin{theorem}[Informal\footnotemark{}]\label{thm:informal_OT}
\footnotetext{This result is formalized in Theorem \ref{thm:ot_predictor}.}
    Under mild assumptions, the optimal fair regression function $f^*$ is given by the solution to a barycenter problem with optimal transport costs. In particular, there exists a function $\boldsymbol{f}^*$ such that 
    \begin{align*}
        f^*(x) = \boldsymbol{f}^*(\eta(x), \Delta(x)),
    \end{align*}
    where $\Delta(x) \propto \frac{\mathbb{P}(S= 1\vert X=x)}{p_1} - \frac{\mathbb{P}(S= 2\vert X=x)}{p_2}$.
\end{theorem}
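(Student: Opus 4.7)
My plan is to reduce the fair-regression problem to pointwise first-order conditions via a variational argument. Using the decomposition $\cR_{sq}(f) = \cR_{sq}(\eta) + \|f - \eta\|_{L^2(\mu)}^2$, minimizing $\cR_{sq}$ under the DP constraint is equivalent to projecting $\eta$ in $L^2(\mu)$ onto the (non-convex) set $\{f : f_\#\mu_1 = f_\#\mu_2\}$. Since $\cS = \{1,2\}$ and $d\mu_s/d\mu(x) = \P(S=s\mid X=x)/p_s$, the DP constraint is equivalent to
$$\int_{\cX} \phi(f(x))\,\Delta(x)\,d\mu(x) = 0 \quad \text{for every bounded measurable } \phi:\R\to\R,$$
where $\Delta(x) \propto \P(S=1\mid X=x)/p_1 - \P(S=2\mid X=x)/p_2$ is precisely the quantity appearing in the theorem.

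Next, I would extract the first-order/KKT structure. Perturb an optimal $f^*$ by $\epsilon g$ and differentiate at $\epsilon = 0$: preserving the constraint to first order forces $\E_\mu[g\Delta \mid f^*] = 0$ almost surely, while optimality of the objective yields $\int (f^* - \eta)\,g\,d\mu = 0$ for all such admissible $g$. By the $L^2(\mu)$-orthogonality characterization of the closed subspace spanned by functions of the form $\psi(f^*)\Delta$, this produces a measurable function $h:\R \to \R$ with
$$f^*(x) - \eta(x) = h(f^*(x))\,\Delta(x).$$
The implicit equation $r - h(r)\delta = e$ then determines $r = f^*(x)$ in terms of $e = \eta(x)$ and $\delta = \Delta(x)$; provided that $r \mapsto r - h(r)\delta$ is invertible for each relevant $\delta$, this yields the representation $f^*(x) = \boldsymbol{f}^*(\eta(x), \Delta(x))$ claimed by the theorem.

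The main obstacle is rigorously justifying the variational/KKT analysis. The feasible set is non-convex (an equality of pushforward measures), so classical Lagrange-multiplier theory does not apply directly, and the existence and regularity of the multiplier function $h$ must be established with care. My plan is to pass through the barycenter reformulation alluded to in the informal statement and invoke OT duality as developed in Section~\ref{sec:OT}: $h$ should emerge (after suitable differentiation) from an optimal $c$-conjugate Kantorovich potential whose existence and regularity are guaranteed by the mild hypotheses of the formal theorem (expected to include atomlessness conditions analogous to those of Theorem~\ref{thm:awareness}). A secondary point is uniqueness of the pointwise minimizer, needed so that $\boldsymbol{f}^*$ is well-defined as a genuine function; this should follow from the strong convexity of the quadratic term combined with structural properties of the optimal potential.
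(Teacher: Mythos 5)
Your high-level strategy is sound and the pointwise relation $f^*(x) - \eta(x) = h(f^*(x))\Delta(x)$ is in fact correct --- it is recoverable from the paper's Lemma~\ref{prop:potential}, where the Kantorovich potential $v(y) = -2\int_0^y \kappa(t)\,\dd t$ and the first-order condition $\partial_y \bc(\x, f^*(x)) = v'(f^*(x))$ yield exactly $f^*(x) - \eta(x) = -\kappa(f^*(x))\Delta(x)$, so your $h$ is $-\kappa$. However, the route you take to get there does not hold up as a self-contained argument, and the part where you fall back on ``the barycenter reformulation and OT duality'' is precisely where the paper's actual work lives, which you do not carry out.

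Two concrete gaps. First, the KKT step is not justified: the feasible set $\{f : f\sharp\mu_+ = f\sharp\mu_-\}$ is non-convex and non-smooth, so the tangent cone at $f^*$ need not coincide with the linearized constraint set $V = \{g : \E_\mu[g\Delta \mid \sigma(f^*)]=0\}$. Optimality only gives $\langle f^*-\eta, g\rangle = 0$ for $g$ in the (possibly much smaller) tangent cone, so the conclusion $f^*-\eta \in V^\perp = \overline{\mathrm{span}}\{\Delta\psi(f^*)\}$ does not follow; you have identified this risk but not a way around it. Second, the invertibility of $r \mapsto r - h(r)\delta$ is not free: it is exactly the uniqueness of the maximizer $y \mapsto v(y) - \bc(\x,y)$ and rests on the $c$-concavity of the potential, which in turn has to be constructed. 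The paper does this by (i) reducing the barycenter problem to a single two-to-two dimensional transport $\OT_{\C}(\mup,\mum)$ with $\C(\x_1,\x_2) = (h_1-h_2)^2/(|d_1|+|d_2|)$ via the multimarginal formulation, (ii) extracting a Kantorovich potential $\phi$ for this problem, and (iii) proving via Lemma~\ref{lem:regularity} --- a measurable-selection argument using Castaing's theorem and the convexity of $h \mapsto \phi(h,d) + h^2/|d|$ --- that the set where $\partial_h\phi$ fails to exist is a countable union of graphs, so is $\mupm$-null under Assumption~\ref{ass:continuity}; then $\fp(\x) = h - \tfrac{|d|}{2}\partial_h\phi(\x)$ is the map. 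Your sketch does not touch (i) or (iii); also note that the ``mild assumption'' you guess is not mere atomlessness of the pushed-forward laws but the stronger graph-null condition of Assumption~\ref{ass:continuity}, which Remark~\ref{rem:aware_special_case} explains reduces to atomlessness only in the awareness special case.
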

Comparing this result to the one provided in Theorem \ref{thm:awareness} within the awareness framework, we note that there is no explicit formula for the optimal fair regression function within the unawareness framework. Moreover, Theorem \ref{thm:informal_OT} underscores that the optimal fair regression function effectively relies on an estimate $\Delta(X)$ of the unobserved sensitive attribute $S$ to make predictions, thereby indirectly implementing disparate treatment. This result provides a theoretical explanation for the empirical phenomenon observed by \cite{liptonDisparity}. As noted in their work,  this behavior is problematic as it can lead to basing predictions on factors that are not relevant to predicting the outcome $Y$, simply because they are informative for predicting the sensitive attribute $S$. 

In Section \ref{sec:nested}, we investigate the relationship between fair regression and fair classification when $Y \in \{0,1\}$. We demonstrate the existence of a dichotomy based on a \textit{nestedness} criterion.  Recall that as the threshold $y$ increases, the Bayes classifier $\gbayes_y$ predicts $1$ for a decreasing proportion of candidates; we show that this also holds for the optimal fair classifier $g_y^*$. We say that the fair classification problem is \textit{nested} if, almost surely with respect to the measure $\mu$ of $X$, the prediction $g^*_y(X)$ for the candidate $X$ decreases as $y$ increases. In other words, candidates rejected (i.e., with prediction $0$) at low values of $y$ cannot be accepted at higher values of $y$, when the proportion of accepted candidates is lower. For example, the Bayes classifier defined by $\gbayes_y(x) = \mathds{1}\left\{\fbayes(x)  \geq y \right\}$ satisfies this condition. When the nestedness criterion holds, the decision boundaries for the optimal fair classifier for different risk $\cR_y$ form nested sets. 
The following informal result summarizes our findings.
\begin{theorem}[Informal\footnotemark{}]\label{thm:informal_eq_classif}
    Under mild assumptions, if the fair classification problem is nested, then the regression function
    \begin{align*}
        f^*(x) = \sup\left\{y \in \R\ :\ g^*_y(x) = 1\right\}
    \end{align*}
    is optimal for the fair regression problem \eqref{eq:unaware_regression}; equivalently, the classifier
    \begin{align*}
        g_y(x) = \mathds{1}\left\{f^*(x) \geq y\right\}
    \end{align*}
    is optimal for the fair classification problem \eqref{eq:unaware_classif} for the risk $\cR_y$. Conversely, if the classification problem is not nested and if $f^*$ is the optimal fair regression function, then there exists $y \in (0,1)$ such that 
    \begin{align*}
        g_y(x) = \mathds{1}\left\{f^*(x) \geq y\right\}
    \end{align*}
    is sub-optimal for the fair classification problem with risk $\cR_y$.
\end{theorem}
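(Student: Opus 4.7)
The key tool I would use is a layer-cake identity that, for $Y \in \{0,1\}$ and any measurable $f : \cX \to [0,1]$, rewrites the regression loss as an $L^1$-average of cost-sensitive classification losses of the threshold classifiers of $f$:
\begin{equation*}
    \cR_{sq}(f) \;=\; 2\int_0^1 \cR_y\!\bigl(\mathds{1}\{f \geq y\}\bigr)\,\mathrm{d}y.
\end{equation*}
This follows from expanding $(Y-f(X))^2 = Y(1-f(X))^2 + (1-Y)f(X)^2$ and rewriting each quadratic term via $a^2 = 2\int_0^a t\,\mathrm{d}t$, then swapping integrals with Fubini; boundary sets $\{f(X)=y\}$ are Lebesgue-negligible for a.e.\ $y$. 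Since $Y \in [0,1]$, clipping any regression function to $[0,1]$ does not increase $\cR_{sq}$ and preserves fairness, so the restriction $f : \cX \to [0,1]$ is without loss of generality. This identity is the bridge between the two optimization problems.

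For the forward direction, I set $f^*(x) \defeq \sup\{y \in (0,1) : g^*_y(x) = 1\}$, with $\sup\emptyset = 0$. Nestedness ensures that $\{y : g^*_y(x)=1\}$ is $\mu$-a.s.\ a left-unbounded interval with right endpoint $f^*(x)$, so $\{f^* \geq y\}$ and $\{g^*_y = 1\}$ agree modulo $\mu$-null sets for each $y$. Choosing Borel representatives for $\{g^*_y\}_{y \in \Q \cap (0,1)}$ makes $f^*$ Borel as a countable supremum. Fairness of $f^*$ follows by matching c.d.f.s: for each $y$,
\begin{equation*}
    \P(f^*(X) \geq y \mid S = s) = \P(g^*_y(X)=1 \mid S = s) = \P(g^*_y(X)=1) = \P(f^*(X) \geq y),
\end{equation*}
so $f^*(X) \perp S$. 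Plugging $f^*$ into the layer-cake identity yields $\cR_{sq}(f^*) = 2\int_0^1 \cR_y(g^*_y)\,\mathrm{d}y$. For any competing fair $f : \cX \to [0,1]$, the thresholded classifier $\mathds{1}\{f \geq y\}$ is a deterministic function of $f(X)$ and hence fair, so $\cR_y(\mathds{1}\{f\geq y\}) \geq \cR_y(g^*_y)$ pointwise in $y$; integration gives $\cR_{sq}(f) \geq \cR_{sq}(f^*)$. The equivalent claim that $\mathds{1}\{f^* \geq y\}$ is an optimal fair classifier for $\cR_y$ follows from the identification $\mathds{1}\{f^* \geq y\} = g^*_y$ a.s.

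The converse I would prove by contrapositive. Suppose that for every $y \in (0,1)$, $g_y \defeq \mathds{1}\{f^*(\cdot) \geq y\}$ is an optimal fair classifier for $\cR_y$. Then I may select this particular version as the optimal classifier $g^*_y$, and since $y \mapsto \mathds{1}\{f^*(x) \geq y\}$ is non-increasing in $y$ for every single $x$, the classification problem is then nested. Consequently, if nestedness fails, some $y \in (0,1)$ must witness that $\mathds{1}\{f^* \geq y\}$ is strictly sub-optimal for $\cR_y$.

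The principal obstacle I anticipate is technical rather than conceptual. Defining $f^*$ as a pointwise supremum over an uncountable family of optimal classifiers requires fixing Borel versions of $g^*_y$ at a countable dense set of thresholds and then appealing to nestedness $\mu$-a.s.\ to conclude that the uncountable supremum agrees with the countable one; the identification $\{f^* \geq y\} = \{g^*_y = 1\}$ for arbitrary real $y$ (not just rational) then needs a right-continuity argument on the monotone flag $y \mapsto g^*_y$. Once these measurability and null-set issues are settled, the structural content of the proof is short: the layer-cake identity expresses regression as the $L^1$-integral of classification, and nestedness is exactly the compatibility condition that lets the family $\{g^*_y\}_y$ be packaged into a single fair regression function.
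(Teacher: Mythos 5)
Your proposal is essentially correct, and it takes a genuinely different route from the paper's. For the forward direction, the paper proves Theorem~\ref{thm:eq_classif} by optimal transport duality: it exhibits the explicit Kantorovich potential $v(y) = -2\int_0^y \kappa(t)\,\dd t$, shows (Lemma~\ref{prop:potential}) that $\boldsymbol{f}^*$ is an optimal transport map from $\mupm$ to $\nu^*$, and then (Lemma~\ref{lem:nu_star_barycenter}) verifies that $\nu^*$ solves the barycenter problem of Lemma~\ref{lem:reducOT}. You instead invoke the layer-cake identity $\cR_{sq}(f) = 2\int_0^1 \cR_y(\ones\{f\geq y\})\,\dd y$ (for $Y\in\{0,1\}$, $f:\cX\to[0,1]$), which reduces the comparison to the pointwise-in-$y$ optimality of the classifiers $g_y^*$ -- a substantially more elementary argument that avoids $\C$-convexity and Kantorovich duality entirely, and also sidesteps the technical Lemma~\ref{lem:tail_kappa}. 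The trade-off is generality: the paper's Theorem~\ref{thm:eq_classif} is stated without assuming $\cY=\{0,1\}$ (the ``classifiers'' $g_y^{\kappa(y)}$ are still defined and used for real-valued $Y$), whereas the layer-cake identity requires $Y\in\{0,1\}$; for the informal statement at hand, which speaks of the classification risks $\cR_y$, this restriction is harmless. For the converse, the paper (Proposition~\ref{prp:not_opt}) gives a direct two-case argument from Lemma~\ref{lem:charac_nested_easy}; your contrapositive is logically equivalent and shorter, though it implicitly relies on the uniqueness of the optimal fair classifier (Proposition~\ref{prp:classif}) and on Lemma~\ref{lem:charac_nested_easy} to transfer monotonicity of $y\mapsto\ones\{f^*(x)\geq y\}$ to the canonical family $g_y^{\kappa(y)}$.

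You are right to flag the measurability and boundary-set issues as the main technical burden; they are where most of the paper's effort goes (Lemma~\ref{lem:def_F} establishes that the push-forward c.d.f.\ $F$ is continuous, which is exactly what makes the identification $\{g^*_y=1\}=\{f^*\geq y\}$ work modulo null sets \emph{for every} $y$, not merely Lebesgue-a.e.\ $y$). In your write-up the step ``$\{f^*\geq y\}$ and $\{g^*_y=1\}$ agree modulo $\mu$-null sets for each $y$'' needs either this continuity, or the milder observation that it holds for Lebesgue-a.e.\ $y$ together with right/left-continuity of c.d.f.s (for the fairness claim) and Fubini (for the risk identity); you gesture at the latter, and with that made explicit the argument is sound.
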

\footnotetext{This result is formalized in Proposition \ref{prp:not_opt} and in Corollary \ref{cor:eq_classif}}
While nestedness may initially appear to be a natural assumption, it does not always hold. In Section \ref{sec:examples}, we show how to design examples of problems where this assumption is either met or violated.

\section{A short introduction to optimal transport}\label{sec:OT}
In this section, we provide a brief introduction to optimal transport. We present the main tools that will be used in the proofs of the theorems in Sections \ref{sec:barycenter} and \ref{sec:nested}. We begin by providing an overview of  optimal transport in Section \ref{subsec:OT}, before discussing the multi-to-one dimensional transport problem in Section \ref{subsec:MTO-OT}.

\subsection{The optimal transport problem}\label{subsec:OT}
Optimal transport provides a mathematical framework  to compare probability distributions. Consider a Borel probability measure $\mu$ on a Polish space $\cX$ and a Borel probability measure $\nu$ on some other Polish space $\cY$. We are given a  continuous cost function $c:\cX\times \cY\to [0,+\infty]$, where $c(x,y)$ represents the cost of moving a unit of mass from $x\in \cX$ to $y\in \cY$. The optimal transport problem consists in finding the optimal way of moving the distribution of mass $\mu$ to $\nu$ by minimizing the total displacement cost. Formally, a transport map is a measurable map $T:\cX\to \cY$ such that the pushforward measure $T\sharp \mu$ of $\mu$ by $T$ is equal to $\nu$, where the pushforward measure is defined for all measurable sets $B\subset \cY$ by
\begin{equation*}
    T\sharp \mu(B)=\mu(T^{-1}(B)).
\end{equation*}
The optimal transport problem  is then the following
\begin{equation}\label{eq:cost_map}
\begin{split}
 & \text{minimize}  \quad \int c(x,T(x))\dd \mu(x)\\
  &\text{under the constraint $T\sharp \mu = \nu$.} 
\end{split}
\end{equation}
The existence of minimizers of the optimization problem \eqref{eq:cost_map} is a delicate problem that depends on both  the regularity of the cost function $c$ and the properties of $\mu$ and $\nu$. For instance, when $\cX=\cY=\R^d$  and $c(x,y)=\|x-y\|^2$, a solution exists whenever $\mu$ gives zero mass to sets of dimensions smaller than $d-1$; otherwise, a solution may not exist, see \cite[Chapter 10]{villani2009optimal}.  When $\cX=\cY=\R^d$  and $c(x,y)=\|x-y\|^2$, the corresponding minimum is known  as the (squared) Wasserstein distance between $\mu$ and $\nu$, denoted by $\cW_2^2(\mu,\nu)$.
More generally, an optimal transport map exists whenever $\mu$ gives zero mass to sets of dimensions smaller than $d-1$ and the cost function $c(x,y)=\|x-y\|^2$ is replaced by any smooth cost function $c$ satisfying the so-called \textit{twist condition}, which states that the determinant $\det(\frac{\partial^2 c}{\partial y_j\partial x_i})$ never vanishes.

The optimal transport problem also admits a relaxed version in terms of transport plans, which is often more convenient to work with. A transport plan is a probability measure $\pi$ on the product space $\cX\times \cY$ which has first marginal equal to $\mu$ and second marginal equal to $\nu$: for all measurable sets $A\subset \cX$ and $B\subset \cY$,
\begin{equation*}
    \pi(A\times \cY)=\mu(A),\quad \pi(\cX\times B)=\nu(B),
\end{equation*}
or, in probabilistic terms, if $(X,Y)\sim \pi$, then $X\sim \mu$ and $Y\sim \nu$. Informally, for $x\in \cX$, the conditional law of $Y|X=x$ describes the different locations where the mass initially at $x$ will be sent. 
The cost of a transport plan $\pi$ is given by
\begin{equation*}
    \iint c(x,y) \dd \pi(x,y).
\end{equation*}
Note that a transport map $T$ induces a transport plan by considering the law $\pi$ of $(X,T(X))$ (formally, $\pi=(\id, T)\sharp \mu$). The optimal transport cost is defined by the following minimization problem:
\begin{equation}\label{eq:cost_plan}
    \OT_c(\mu,\nu)=\min_{\pi\in \Pi(\mu,\nu)}\int c(x,y)\dd \pi(x,y),
\end{equation}
where $\Pi(\mu,\nu)$ is the set of transport plans between $\mu$ and $\nu$. Optimal transport plans always exist, whereas optimal transport maps may fail to do so. When optimal transport maps exist and the source measure $\mu$ has no atoms, the minimization problem \eqref{eq:cost_map} gives the same value as the optimal transport cost defined in \eqref{eq:cost_plan}, see \cite{pratelli2007equality}.

Our proofs will rely heavily on the dual formulation of the optimal transport problem, which we now introduce. The $c$-transform of a function $\phi:\cY\to \R\cup\{+\infty\}$ is defined as
\begin{equation*}
    \forall x\in \cX,\ \phi^c(x)=\sup_{y\in \cY}(\phi(y)-c(x,y)).
\end{equation*}
The subdifferential of $\phi$ is defined as
\begin{equation*}
    \partial_c\phi=\{(x,y)\in \cX\times \cY:\ \phi(y)-\phi^c(x)=c(x,y)\}.
\end{equation*}
For the quadratic cost, these notions are closely related to the usual notions of convexity, with $c$-transforms being analogous to the concept of convex conjugates.

Kantorovich duality  \cite[Theorem 5.10]{villani2009optimal} states that 
\begin{equation*}
    \begin{split}
    \OT_c(\mu,\nu)    &=\sup_{\phi\in L^1(\nu)}\p{\int \phi(y)\dd\nu(y)-\int \phi^c(x)\dd\mu(x)}.
    \end{split}
\end{equation*}
Moreover, under the mild assumption that there exist two  functions $a\in L^1(\mu)$ and $b\in L^1(\nu)$ such that $c(x,y)\leq a(x)+b(y)$ for all $x\in \cX$, $y\in \cY$, the previous supremum is attained by a function $\phi$, which we call a Kantorovich potential. In that case, any optimal transport $\pi$ is supported on the subdifferential of the $c$-convex function $\phi$, meaning that
\begin{equation*}
    \pi(\partial_c\phi)=1.
\end{equation*}
This last condition imposes significant constraints on the structure of optimal transport plans. For the quadratic cost, this fact is the key ingredient in proving that optimal transport plans are induced by optimal transport maps.



\subsection{Multi-to-one dimensional optimal transport}\label{subsec:MTO-OT}
In the next section, we demonstrate that the fair regression problem within the unawareness framework can be reduced to a barycenter problem of the form:
\begin{equation}\label{eq:barycenter_dim2}
     \min_{\nu\in \cP(\R)} p_1\OT_c(\mu_1,\nu)+p_2\OT_c(\mu_2,\nu),
\end{equation}
where $\mu_1$, $\mu_2$ are \textit{two-dimensional} probability measures and $c:\R^2\times \R\to [0,+\infty)$ is a cost function. This reduction raises the question of whether the solutions to the barycenter problem \eqref{eq:barycenter_dim2} can be characterized by transport maps.

Proving that the optimal transport problem $\OT_c(\mu_s,\nu)$ is solved by a transport map is nontrivial. Complications arise because the measures $\mu_s$ and $\nu$ are defined on spaces of different dimensions. Optimal transport problems involving spaces of different dimensions have not been as extensively studied and exhibit distinct properties compared to the standard case where both measures are defined on the same space, see  \citep{pass2010regularity,chiappori2016multidimensional, multi-to-one,mccann2020optimal}. For instance, the classical twist condition $\det(\frac{\partial^2 c}{\partial y_j\partial x_i})\neq 0$ does not make sense in this setting: the Hessian matrix of $c$ is not squared, so that the determinant is not even well-defined. 

 \cite{multi-to-one} focus on the optimal transport problem between a measure $\mu$ supported on a domain $\cX\subset \R^m$ (with $m>1$) and a measure $\nu$ on an interval $\cY\subset \R$ for some cost function $c:\cX\times \cY\to [0,+\infty)$. They demonstrate that an optimal transport map $T:\cX\to \cY$ between $\mu$ and $\nu$ exists under a natural condition on $(c,\mu,\nu)$ known as \textit{nestedness}. For $y\in \cY$, $k\in \R$, let
\begin{equation*}
    \cX_{\leq}(y,k)=\left\{x\in \cX:\ \partial_y c(x,y)\leq k\right\}.
\end{equation*}
Kantorovich duality implies that an optimal transport plan between $\mu$ and $\nu$ will match an interval $(-\infty,y]$ to a set $\cX_{\leq}(y,k)$, where $k=k(y)$ is a solution of the equation $\nu\left((-\infty,y]\right)=\mu\left(\cX_\leq(y,k)\right)$. 
The triplet $(c,\mu,\nu)$ is called nested if the collection of sets $(\cX_\leq(y,k(y)))_y$  increases with $y$. \cite{multi-to-one} prove that an optimal transport map $T$ between $\mu$ and $\nu$ exists when the problem is nested: informally, the monotonicity of $(\cX_\leq(y,k(y)))_y$ ensures that a given $x_0\in \cX$ belongs to the boundary of a single set $\cX_\leq(y_0,k(y_0))$, with $y_0$ being equal to $T(x_0$).

This nestedness condition will be crucial in \Cref{sec:nested}, where it will be  used to establish the equivalence between regression and classification problems. However, in \Cref{sec:barycenter}, we will be able to show the existence of optimal transport maps for the barycenter problem \eqref{eq:barycenter_dim2} (and consequently of optimal fair regression functions) without any nestedness condition. 

\section{Fair regression and the barycenter problem}\label{sec:barycenter} 
In this section, we characterize the solution to the fair regression problem. We begin by showing in Section \ref{subsec:ordering} that, under mild assumptions, the fair regression function does not preserve order. Then, in Section \ref{subsec:reduction-to-OT}, we show that the fair regression problem can be reduced to a barycenter problem with optimal transport costs. Using the tools introduced in Section \ref{sec:OT}, we prove the existence of a fair optimal prediction function and study some of its properties.

\subsection{Fair regression functions do not preserve order}\label{subsec:ordering}
Before analyzing in detail the fair regression problem in the unawareness framework, we establish a simple yet important property of fair regression functions. We begin by extending the definition of order preservation \citep{Chzhen2020a, Chzhen2020AMF} to the unawareness framework. Recall that in this case, the Bayes prediction for a candidate $x$ is given by $\eta(x)=\E[Y|X=x]$. A prediction function $f$ is said to preserve order if, for any two candidates $(x,x') \in \cX^2$ in the same group $s \in \cS$, $\eta(x) \leq \eta(x')$ implies $f(x) \leq f(x')$. This definition is formalized below.

\begin{definition}[Order preservation in regression - unawareness framework]
A prediction function $f$ preserves order if $\P\otimes\P$-almost surely,
\begin{equation*}
\left \{\eta(X)<\eta(X') \text{ and } S = S' \right\}\implies f(X)< f(X').
\end{equation*}  
\end{definition}
This property implies that the fairness correction does not alter the ordering of the predictions of the Bayes prediction function within a group. It is related to the concept of ``rational ordering'' introduced by \cite{liptonDisparity} in the context of classification, where the authors require that within a group, the most able candidates are the ones accepted. 

\begin{proposition}\label{prp:not-envy-free}
    Let $f:\cX\to\R$ be a  regression function with $\E[f(X)^2]< \infty$ satisfying the demographic parity constraint. Assume that the Bayes regression function $\eta$ does not satisfy the demographic parity constraint and that 
    $\P(S=s|X=x)\in (0,1)$ for all $s\in \cS$, $x\in \cX$. Then, $f$ does not preserve order.
\end{proposition}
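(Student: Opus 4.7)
I argue by contradiction: assume $f$ preserves order and derive $\eta(X)\perp S$, which contradicts the hypothesis. First, I use the assumption $\bbP(S=s|X=x)\in(0,1)$ to note that each conditional law $\mu_s$ is mutually absolutely continuous with $\mu$ (the Radon--Nikodym density $\bbP(S=s|X=x)/p_s$ is bounded away from $0$ and $\infty$), so that $\mu_s\otimes\mu_s$ and $\mu\otimes\mu$ share the same null sets. Since the order preservation condition amounts to $\bbP\otimes\bbP(\{\eta(X)<\eta(X'),\,S=S',\,f(X)\geq f(X')\})=0$, which decomposes as $\sum_s p_s^2\,\mu_s\otimes\mu_s(\{\eta(x)<\eta(x'),\,f(x)\geq f(x')\})=0$, this is equivalent to the simpler statement: for $\mu\otimes\mu$-a.s.\ $(x,x')$,
\begin{equation*}
\eta(x)<\eta(x')\ \Longrightarrow\ f(x)<f(x')\qquad\text{(OPS)}.
\end{equation*}

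The core step is to show that (OPS) forces $\eta(X)$ to be $\sigma(f(X))$-measurable. For each $u\in\bbR$, set $\phi(u):=\operatorname{ess\,sup}_{\mu}\{f(x):\eta(x)\leq u\}$, which is non-decreasing in $u$. I then establish the sandwich
\begin{equation*}
\{f(X)<\phi(u)\}\ \subseteq\ \{\eta(X)\leq u\}\ \subseteq\ \{f(X)\leq\phi(u)\}\qquad\text{up to $\mu$-null sets}.
\end{equation*}
The right inclusion is immediate from the definition of $\phi$. For the left inclusion---the delicate part---I fix a $\mu$-generic $x$ with $f(x)<\phi(u)$ and pick, from the $\mu$-full-measure set of $x'$ for which (OPS) applies to the pair $(x',x)$ (such a set exists by Fubini), a sequence $(x_n)$ with $\eta(x_n)\leq u$ and $f(x_n)\to\phi(u)$. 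For $n$ large, $f(x)<f(x_n)$; the contrapositive of (OPS) applied to $(x_n,x)$ then yields $\eta(x)\leq\eta(x_n)\leq u$, as desired.

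To conclude, the demographic parity of $f(X)$ gives that both $\bbP(f(X)<\phi(u)\mid S=s)$ and $\bbP(f(X)\leq\phi(u)\mid S=s)$ are independent of $s$; combined with the sandwich, this implies $\bbP(\eta(X)\leq u\mid S=s)=\bbP(\eta(X)\leq u)$ for every $u$ such that $f(X)$ has no atom at $\phi(u)$. Since the atoms of $f(X)$ are countable and $\phi$ is monotone, the exceptional set of such $u$'s is countable, and right-continuity of c.d.f.s extends the equality to every $u$. Hence $\eta(X)\perp S$, contradicting the hypothesis. The main obstacle is the left inclusion of the sandwich: the contrapositive of (OPS) yields only the non-strict inequality $\eta(x)\leq\eta(x_n)$, which is precisely why the sandwich is not a priori an equality and why one must handle the countably many levels where $\phi(u)$ coincides with an atom of $f(X)$ via the right-continuity argument at the end.
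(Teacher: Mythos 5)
Your approach is a genuinely different and more elementary route than the paper's. You build the monotone ``inverse'' $\phi(u)=\operatorname{ess\,sup}_\mu\{f(x):\eta(x)\le u\}$ and sandwich $\{\eta(X)\le u\}$ between $\{f(X)<\phi(u)\}$ and $\{f(X)\le\phi(u)\}$, whereas the paper views the joint law of $(\eta(X),f(X))$ under $\mu_s$ as a strictly monotone, hence $\cW_2$-optimal coupling, invokes \cite[Lemma~2.8]{santambrogio2015optimal} (together with a gluing argument for the atomic part of $\nu$) to obtain a map $T_s$ with $\eta=T_s\circ f$ $\mu_s$-a.e., and then uses $\P(S=s\mid X=x)>0$ to show $T_s$ is independent of $s$. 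Your Step~2 (mutual absolute continuity of $\mu_s$ and $\mu$, reducing order preservation to the $\mu\otimes\mu$-a.s.\ statement (OPS)) and the sandwich argument via the contrapositive of (OPS) are correct, and they avoid optimal transport entirely.

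The gap is in the final sentence. The claim ``since the atoms of $f(X)$ are countable and $\phi$ is monotone, the exceptional set of such $u$'s is countable'' is false in general: $\phi$ is only non-decreasing, so the preimage $\phi^{-1}(a)$ of a single atom $a$ is an \emph{interval}, possibly non-degenerate. Concretely, if $\eta(X)$ has an atom at $b$ and gives no mass to $(b,b+\delta)$, and $f$ takes the value $a$ on $\{\eta=b\}$, then $\phi\equiv a$ on $[b,b+\delta)$, so $E=\{u:\P(f(X)=\phi(u))>0\}\supseteq[b,b+\delta)$. Thus $E$ is a countable union of intervals and can be uncountable — indeed all of $\R$ when $f(X)$ is purely atomic. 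In particular $E^c$ need not be right-dense at every point, and right-continuity of the conditional c.d.f.s does not propagate the identity $\P(\eta(X)\le u\mid S=s)=\P(\eta(X)\le u)$ from $E^c$ to $E$. This is precisely the difficulty the paper's proof handles by treating the atomic part of $\nu$ separately.

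The gap is closable with one more use of (OPS). If $a$ is an atom of $f(X)$, then $\{f=a\}$ has positive $\mu$-mass, and (OPS) forces $\eta$ to be $\mu$-a.s.\ constant on it, say $\eta=b$ (otherwise one could find two positive-mass pieces of $\{f=a\}$ with ordered $\eta$-values, producing a positive-$\mu\otimes\mu$-measure set of pairs violating (OPS)). Combined with your sandwich, for any $u$ with $\phi(u)=a$: if $u<b$, then $\{\eta\le u\}$ is disjoint (mod $\mu$) from $\{\eta=b\}\supseteq\{f=a\}$, and the sandwich collapses to $\{\eta\le u\}=\{f<a\}$; if $u\ge b$, then $\{\eta\le u\}\supseteq\{f=a\}$ and the sandwich collapses to $\{\eta\le u\}=\{f\le a\}$, both up to $\mu$-null sets. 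Either way, demographic parity of $f(X)$ gives $\P(\eta(X)\le u\mid S=s)$ independent of $s$, with no countability argument needed.
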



\begin{proof}
We prove the contrapositive: if $f$ is a regression function satisfying the demographic parity constraint and preserving order, then the Bayes regression function also satisfies the demographic parity constraint. For a fixed group $s\in \cS$, consider the joint law $\pi_s$ of $(\eta(X),f(X))$, where $X\sim \mu_s$. As $f$ is envy-free, the support of the measure $\pi_s$ is monotone, in the sense that
\begin{equation}\label{eq:monoton_plan}
\forall (y_1,z_1), (y_2,z_2)\in \supp(\pi_s),\ y_1< y_2\implies z_1<  z_2.
\end{equation}
According to \cite[Lemma 2.8]{santambrogio2015optimal}, this implies that $\pi_s$ is actually the optimal transport plan for the quadratic cost between the first marginal of $\pi_s$, equal to $\eta\sharp \mu_s\eqdef \nu_s$ and the second marginal of $\pi_s$, equal to $f\sharp \mu_s\eqdef\nu$ (the second measure does not depend on $s$ because of demographic parity). We claim that  strict monotonicity implies that the  transport plan $\pi_s$ takes the form of a transport map $T_s$ transporting $\nu$ towards $\nu_s$, that is $\pi_s = (T_s,\id)\sharp \nu$ (see a proof below). So, if $X\sim\mu_s$, we have $(\eta(X),f(X))=(T_s(f(X)),f(X))$ almost surely. To put it another way, we have for every $s$,
\begin{equation*}
 \eta(x) = T_s\circ f(x) \ \text{ $\mu_s$-almost everywhere.}
\end{equation*}
As $\P(S=s|X=x)>0$ for all $x\in \cX$, this equality is also satisfied $\mu$-almost everywhere. Hence, for $\mu$-almost all $x$, the quantity $T_s\circ f(x)$ does not depend on $s$ (it is equal to $\eta(x)$). This defines a function $T$ with  $\eta \sharp \mu_s = T\sharp f\sharp \mu_s= T\sharp \nu$. As this measure does not depend on $s$, this proves that $\eta$ satisfies the demographic parity constraint. 

To conclude our proof, it remains to prove our claim. Decompose $\nu$ as $\nu_1+\nu_2$ where $\nu_2$ is atomless and $\nu_1 = \sum_j p_j \delta_{z_j}$. If $f(X)=z_j$, then we have $\eta(X)=y_j$ for some value $y_j$: this value $y_j$ has to be unique, for otherwise it would contradict the monotonicity assumption. Therefore, $\nu_s$ can be written as $\nu_s = \nu_{1s}+\nu_{2s}$, where $\nu_{1s} = \sum_j p_j\delta_{y_j}$. Consider the plan $\pi_1 = \sum_j p_j \delta_{(y_j,z_j)}$. Then $\pi-\pi_1$ is a plan between $\nu_{2s}$ and $\nu_{2}$. By \cite[Lemma 2.8]{santambrogio2015optimal}, as $\nu_2$ is atomless, the monotonicity condition implies that it is induced by a transport map $\tilde T_s$. In total, we can define $T_s$ by $T_s(z_j)=y_j$ and by $T_s=\tilde T_s$ on the complementary set of the atoms. 
\end{proof}
Proposition \ref{prp:not-envy-free} implies, in particular, that in many instances, the optimal fair regression function does not preserve order. Consequently, highly qualified individuals who belong to  protected groups could potentially suffer from  fairness corrections due to the demographic parity constraint.

The condition that  $\P(S=s|X=x)\in (0,1)$ for all $s\in \cS$, $x\in \cX$ ensures that the sensitive attribute $S$ cannot be determined from the observation of $X$. When this condition is not satisfied, the distinction between the unawareness and awareness frameworks becomes blurred: if $S$ can be inferred from  $X$ alone, it becomes meaningless to differentiate  between a regression function that depends on both $X$ and $S$, and one that depends solely  on $X$. Furthermore, it is important to note that in the awareness framework, there do exist regression functions that satisfy the demographic parity constraint and preserve order, with the optimal fair regression function described in  \Cref{thm:awareness} being one such example.

\subsection{Reduction to an optimal transport problem}\label{subsec:reduction-to-OT}
In the following, we let $\cS=\{1,2\}$. 
We assume that $\mu_1\neq\mu_2$ (otherwise the Bayes regression function $\eta$ already solves the fair regression problem). We now show how to transform the fair regression problem into a barycenter problem using optimal transport costs. To do so, we first leverage a reformulation of the demographic parity constraint due to \cite{Chzhen2020AnEO}, which is based on  the Jordan decomposition of the signed measure $\mu_1 - \mu_2$. Then, we show how to rephrase the regression problem as a barycenter problem, using this new constraint. Finally, we show that, under mild assumptions, the barycenter problem admits a unique solution, which is given by a transport map.

\subsubsection{Reformulation of the demographic parity constraint} Let $|\mu_1-\mu_2|$ be the  variation of $\mu_1-\mu_2$ and define
\[
\begin{cases}
(\mu_1 - \mu_2)_+ = \frac{1}{2}(|\mu_1-\mu_2|+\mu_1-\mu_2), \\
 (\mu_1 - \mu_2)_- = \frac{1}{2}(|\mu_1-\mu_2|-\mu_1+\mu_2) 
 \end{cases}
 \]
the Jordan decomposition of $\mu_1-\mu_2$. The two measures $(\mu_1 - \mu_2)_+$ and $(\mu_1 - \mu_2)_-$ have the same mass, which we denote by $m$. We define the scaled Jordan decomposition of $\mu_1-\mu_2$ as the pair of probability measures
\[ \mu_+ = (\mu_1 - \mu_2)_+/m \ \text{ and }\ \mu_- = (\mu_1 - \mu_2)_-/m.\]
Let $\frac{\dd \mu_+}{\dd \mu}$ (resp. $\frac{\dd \mu_-}{\dd \mu}$) be the density of $\mu_+$ (resp. $\mu_-$) with respect to $\mu$ (that are defined uniquely $\mu$-almost everywhere). As $\mu_+$ and $\mu_-$ are mutually singular measures, we can always find versions of $\frac{\dd \mu_+}{\dd \mu}$ and $\frac{\dd \mu_-}{\dd \mu}$ such that the sets
\[
    \begin{cases}
        \cX_+=\{x\in \cX :\ \frac{\dd \mu_+}{\dd \mu}(x)>0\},\\
        \cX_-=\{x\in \cX :\ \frac{\dd \mu_-}{\dd \mu}(x)>0\},\\
        \cX_{=}=\cX \backslash ( \cX_+\sqcup  \cX_-).
    \end{cases}
\]
form a partition of $\cX$, with $\mu_+$ giving mass $1$ to $\cX_+$ and $\mu_-$ giving mass $1$ to $\cX_-$. Then, for for any three functions $f_+$, $f_-$, and $f_=$ from $\cX$ to $\mathbb{R}$, we can define the associated function $\cF(f_+, f_-, f_=)$ equal to $f_+$ on $\cX_+$, $f_-$ on $\cX_-$, and $f_=$ on $\cX_=$:
\begin{align*}
    \cF(f_+, f_-, f_=)(x) = \begin{cases}
        f_+(x) &\text{if } x \in \cX_+\\
        f_-(x) &\text{if }  x \in \cX_-\\
        f_=(x) &\text{if }  x \in \cX_=.
    \end{cases}
\end{align*}
Conversely, for any function $f:\cX \rightarrow\mathbb{R}$, there exist functions $f_+$, $f_-$, and $f_=$ corresponding respectively to the restriction of $f$ on $\cX_+$, $\cX_-$, and $\cX_=$, i.e., such that $f = \cF(f_+, f_-, f_=)$.
The following lemma, due to \cite{Chzhen2020AnEO}, rephrases the demographic parity constraint in terms of $\mu_+$ and $\mu_-$.
\begin{lemma}\label{lem:EvNico}
    A regression function $f : \cX \rightarrow \mathbb{R}$ verifies the demographic parity constraint if and only if 
    $$f\sharp \mu_+ = f\sharp \mu_-.$$
\end{lemma}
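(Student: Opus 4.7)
The plan is to unwind both sides of the claimed equivalence and reduce to a one-line linearity argument at the level of signed measures. First I would rephrase the demographic parity constraint $f(X) \perp S$ in distributional form: since $\cS = \{1,2\}$, conditioning on $S$ takes only two values, so $f(X) \perp S$ is equivalent to saying that the conditional law of $f(X)$ given $S=1$ equals the conditional law given $S=2$; by the definitions of $\mu_1$ and $\mu_2$, this in turn is exactly
\begin{equation*}
    f\sharp \mu_1 = f\sharp \mu_2.
\end{equation*}
The case $\mu_1 = \mu_2$ is immediate (and excluded by the standing assumption), so I assume $m > 0$.

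Next I would recall that the Jordan decomposition yields the identity
\begin{equation*}
    \mu_1 - \mu_2 = (\mu_1-\mu_2)_+ - (\mu_1-\mu_2)_- = m\,\mu_+ - m\,\mu_-,
\end{equation*}
an equality of signed Borel measures on $\cX$. The pushforward $f\sharp$ extends linearly to signed measures (for any Borel $B \subset \R$, the map $\nu \mapsto \nu(f^{-1}(B))$ is linear), so applying $f\sharp$ to both sides yields
\begin{equation*}
    f\sharp \mu_1 - f\sharp \mu_2 = m\,(f\sharp \mu_+ - f\sharp \mu_-).
\end{equation*}

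Finally, since $m > 0$, the left-hand side vanishes if and only if the right-hand side does. Combining with the reformulation of demographic parity from the first step, this gives the desired equivalence
\begin{equation*}
    f\sharp \mu_1 = f\sharp \mu_2 \;\;\Longleftrightarrow\;\; f\sharp \mu_+ = f\sharp \mu_-.
\end{equation*}
There is no real obstacle here; the only point requiring a word of justification is the linearity of the pushforward on signed measures and the fact that $m > 0$ under our assumption $\mu_1 \neq \mu_2$. The lemma can thus be proved in a few lines without appealing to any additional machinery.
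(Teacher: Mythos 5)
Your proof is correct. The paper itself does not prove this lemma but cites it from \cite{Chzhen2020AnEO}, so there is no in-paper argument to compare against; your argument is the natural and standard one. The two ingredients you isolate are exactly right: with $\cS=\{1,2\}$, the independence $f(X)\perp S$ is equivalent to $f\sharp\mu_1 = f\sharp\mu_2$ (and also to $f\sharp\mu_1 = f\sharp\mu_2 = f\sharp\mu$, since $\mu = p_1\mu_1 + p_2\mu_2$); and the pushforward operator is linear on signed measures, so applying it to the identity $\mu_1-\mu_2 = m\,\mu_+ - m\,\mu_-$ from the Jordan decomposition, with $m>0$ guaranteed by $\mu_1\neq\mu_2$, immediately yields the equivalence. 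Nothing more is needed, and your flagging of the two small points deserving justification (linearity of $f\sharp$ on signed measures, positivity of $m$) is appropriate.
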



Lemma \ref{lem:EvNico} reveals that for any 
functions $f$, and $f_+$, $f_-$,  $f_=$ such that $f = \cF(f_+, f_-, f_=)$, the regression function $f$
satisfies the demography parity constraint if and only if $f_+\sharp \mu_+ =f_-\sharp \mu_-$. The two functions $f_+$ and $f_-$ can be chosen with disjoint support (in $\cX_+$ and $\cX_-$, respectively). Thus, the demographic parity constraint essentially reduces to the equality of the pushforward measures of two distinct probabilities ($\mu_+$ and $\mu_-$) by two distinct functions ($f_+$ and $f_-$). 

\subsubsection{A barycenter problem}
In order to rephrase the regression problem as a barycenter problem, we introduce further notation. We define 
\begin{equation}
    \begin{cases}
    \Delta(x)= \frac{\dd\mu_+}{\dd\mu}(x) &\text{ if } x\in \cX_+,\\
        \Delta(x)= -\frac{\dd\mu_-}{\dd\mu}(x) &\text{ if } x\in \cX_-,\\
    \Delta(x)= 0 &\text{ if } x\in \cX_=.
    \end{cases}
\end{equation}
Equivalently, $\Delta(x)$ is proportional to $\frac{\dd \mu_1}{\dd\mu}(x)-\frac{\dd \mu_2}{\dd\mu}(x)$.
We also define the cost $c : \cX \times \R \to [0,+\infty]$ given by $c(x,y) = \frac{(\eta(x) - y)^2}{|\Delta(x)|}$ for all $x\in \cX$ and all $y\in \mathbb{R}$. When $X\sim \mu_{\pm}$, the variables $(\eta(X), \Delta(X))$ belong to $\Omega \defeq \{(h,d)\in\R^2:\ d\neq 0\}.$ 
In the following, we use bold notation to denote functions related to these two-dimensional variables. For example, we denote by $\mup$ (resp. $\mum$) the distributions of $(\eta(X), \Delta(X))$ when $X$ follows the distribution $\mu_+$ (resp. $ \mu_-$). Note that the support of $\mup$ is included in the upper half-plane $\{d>0\}$ while $\mum$ is included in the lower half-plane $\{d<0\}$. 
We define the two-to-one dimensional cost $\bc$, given by $\bc(\x, y) = \frac{(h - y)^2}{|d|}$ for all $\x = (h,d)\in\Omega$ and $y\in \R$. 

Consider the barycenter problem
\begin{equation}\label{eq:barycenter_problem}
      \text{minimize}\quad \OT_{\bc}(\mup, \nu) + \OT_{\bc}(\mum,\nu) \text{ over } \nu\in\cP(\R)  
\end{equation}
where we recall that $\OT_{\bc}(\mupm, \nu)$ is the optimal transport cost for sending $\mupm$ to $\nu$ with cost function $\bc$, defined in Equation \eqref{eq:cost_plan}. We say that a solution $\nu^{\mathrm{bar}}$ of the barycenter problem is solved by optimal transport maps if 
\[ \OT_{\bc}(\mupm, \nu^{\mathrm{bar}}) = \int \bc(\x, \fpm(\x))\dd \mupm(\x)\]
for some transport maps  $\fpm:\Omega\to \R$ from $\mupm$ to $\nu^{\mathrm{bar}}$.

\begin{lemma}\label{lem:reducOT}
There is a one-to-one correspondence between the set of solutions to the barycenter problem \eqref{eq:barycenter_problem} solved by optimal transport maps and the set of optimal fair regression functions solving \eqref{eq:unaware_regression}. This correspondence associates  a barycenter $\nu^{\mathrm{bar}}$ with optimal transport maps $\fpm$ to the optimal fair regression function $f=\cF(f_+,f_-,\eta)$, where $f_{\pm}(x)= \fpm(\eta(x),\Delta(x))$ for $x\in \cX$.
\end{lemma}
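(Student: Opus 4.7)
The plan is to reduce the fair regression problem to the barycenter problem \eqref{eq:barycenter_problem} through two successive reformulations and then read off the bijection. Starting from the identity $\cR_{sq}(f) = \cR_{sq}(\eta) + \int (\eta - f)^2 \dd\mu$, the minimization amounts to finding a fair $f$ that is closest to $\eta$ in $L^2(\mu)$. Splitting the integral along the partition $\cX = \cX_+ \sqcup \cX_- \sqcup \cX_=$ and using the Radon--Nikodym identities $\dd\mu_\pm = |\Delta|\dd\mu$ on $\cX_\pm$, one rewrites
\begin{equation*}
\int (\eta - f)^2 \dd\mu = \int \frac{(\eta - f_+)^2}{|\Delta|}\dd\mu_+ + \int \frac{(\eta - f_-)^2}{|\Delta|}\dd\mu_- + \int_{\cX_=} (\eta - f_=)^2 \dd\mu.
\end{equation*}
By Lemma \ref{lem:EvNico}, the demographic parity constraint is equivalent to $f_+\sharp\mu_+ = f_-\sharp\mu_-$ and does not involve $f_=$, so the last term can be set to zero by the canonical choice $f_= = \eta$ on $\cX_=$.

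Next, I would lift the remaining problem to the two-dimensional space $\Omega$ via the change of variable $x \mapsto (\eta(x), \Delta(x))$, under which $\mu_\pm$ pushes forward to $\mupm$. The cost $c(x,y)=(\eta(x)-y)^2/|\Delta(x)|$ coincides with $\bc((\eta(x),\Delta(x)), y)$, so for any target $\nu \in \cP(\R)$ and any $f_+$ with $f_+\sharp\mu_+ = \nu$, pushing the graph coupling $(\id, f_+)\sharp\mu_+$ forward under $(\eta,\Delta)\times\id$ produces a coupling in $\Pi(\mup,\nu)$ of identical total cost. This gives
\begin{equation*}
\int \frac{(\eta - f_+)^2}{|\Delta|}\dd\mu_+ \;\geq\; \OT_{\bc}(\mup,\nu),
\end{equation*}
with equality whenever $\OT_{\bc}(\mup,\nu)$ is attained by a transport map $\fp$ (via $f_+ = \fp\circ(\eta,\Delta)$), and the same holds for $\mu_-$. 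Minimizing jointly over $\nu$ shows that \eqref{eq:unaware_regression} and \eqref{eq:barycenter_problem} share the same optimal value. The forward direction of the correspondence then follows directly: any triple $(\nu^{\mathrm{bar}}, \fp, \fm)$ solved by OT maps produces $f = \cF(\fp\circ(\eta,\Delta), \fm\circ(\eta,\Delta), \eta)$, which is fair by Lemma \ref{lem:EvNico} and attains the common optimal value by a direct computation using the identity $(\eta,\Delta)\sharp\mu_\pm = \mupm$.

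The main obstacle is the converse direction: given an optimal fair $f$, extracting OT maps $\fpm$ on $\Omega$ such that $f_\pm = \fpm\circ(\eta,\Delta)$ almost surely. The difficulty is that the cost $c$ is invariant under rearrangements along the fibers of $(\eta,\Delta)$, so in principle $f_+$ need not literally factor through this map. To handle this I would set $\nu \defeq f_+\sharp\mu_+ = f_-\sharp\mu_-$ and observe, via the equality of infima just established, that this $\nu$ is a barycenter and that each $\int (\eta - f_\pm)^2/|\Delta|\dd\mu_\pm$ equals $\OT_{\bc}(\mupm,\nu)$. Consequently the pushforward of the graph coupling $(\id, f_+)\sharp\mu_+$ under $(\eta,\Delta)\times\id$ is an optimal plan for $\OT_{\bc}(\mup,\nu)$; when this OT problem is solved by a map $\fp$, this optimal plan must coincide with $(\id,\fp)\sharp\mup$, and disintegrating $\mu_+$ along the fibers of $(\eta,\Delta)$ then yields $f_+(X)= \fp(\eta(X),\Delta(X))$ $\mu_+$-almost surely. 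The symmetric argument produces $\fm$, which completes the bijection modulo the natural almost-sure identifications.
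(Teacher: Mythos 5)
Your proof takes essentially the same route as the paper's: the risk decomposition $\cR_{sq}(f) = \cR_{sq}(\eta) + \|\eta - f\|_{L^2(\mu)}^2$, splitting along $\cX_+ \sqcup \cX_- \sqcup \cX_=$ with $\dd\mu_\pm = |\Delta|\dd\mu$, reformulating fairness via Lemma \ref{lem:EvNico}, lifting to $\Omega$ via $x\mapsto(\eta(x),\Delta(x))$, and pushing the graph coupling forward to establish the inequality and the converse when the barycenter is solved by maps. All of these steps are exactly the paper's.

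Where you go beyond the paper is the final paragraph: you explicitly raise the question of whether an optimal fair $f$ must factor through $(\eta,\Delta)$, i.e., whether $f_\pm = \fpm\circ(\eta,\Delta)$ for some maps $\fpm$ on $\Omega$. The paper's proof of the lemma does not address this; it only proves the forward direction (barycenter solved by maps $\Rightarrow$ optimal fair regression) and the inequality \eqref{eq:first_ineq}, leaving the full bijection implicit. The observation is useful, but the last step as you state it has a gap: from the existence of an optimal transport map $\fp$ solving $\OT_{\bc}(\mup,\nu)$, it does not follow that every optimal plan — in particular, the pushed-forward graph coupling — coincides with $(\id,\fp)\sharp\mup$. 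Existence of an optimal map does not in general imply uniqueness of optimal plans (two atoms with tied transport costs already break this). What is actually needed is a uniqueness-of-plan argument, which the paper only supplies later, inside the proof of Theorem \ref{thm:ot_predictor}, under Assumption \ref{ass:continuity} via the Kantorovich-potential structure. Your converse direction is therefore the right idea but requires either invoking plan uniqueness (which does not hold at the generality of this lemma) or deferring to the setting of Theorem \ref{thm:ot_predictor}, where the issue is genuinely resolved.
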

\begin{proof}
Classical computations show that $\cR_{sq}(f) = \mathbb{E}\left[(\eta(X) - f(X))^2\right] + \mathbb{E}\left[(\eta(X) - Y)^2\right].$ Thus, minimizing the risk is equivalent to minimizing $\mathbb{E}\left[(\eta(X) - f(X))^2\right]$. Now,
\begin{equation}\label{eq:reformulation} 
    \begin{split}
        &\mathbb{E}\left[(\eta(X) - f(X))^2\right] = \int (\eta(x)-f(x))^2 \dd\mu(x)\\
&\quad =\int_{\mathcal{X}_+}  (\eta(x)-f(x))^2 \frac{\dd\mu}{\dd \mu_+}(x) \dd \mu_+(x) + \int_{\mathcal{X}_-} (\eta(x)-f(x))^2 \frac{\dd\mu}{\dd \mu_-}(x) \dd \mu_-(x)\\
&\quad\qquad+  \int_{\mathcal{X}_=}  (\eta(x)-f(x))^2 \dd \mu(x).
    \end{split}
\end{equation}
Using the definition of $\Delta$ along with Lemma \ref{lem:EvNico}, we see that any solution to the fair regression problem can be written as $f = \cF(f_+, f_-, \eta)$, where $(f_+, f_-)$ is solution to the problem
\begin{align*}
    &\text{minimize} \quad \int_{\mathcal{X}_+}  \frac{(\eta(x) - f_+(x))^2}{|\Delta(x)|}\dd \mu_+(x) + \int_{\mathcal{X}_-}  \frac{(\eta(x) - f_-(x))^2}{|\Delta(x)|} \dd \mu_-(x) \\
    &\text{such that} \quad  f_+\sharp \mu_+ = f_-\sharp \mu_-.
\end{align*}
The triplet $(\eta(X), \Delta(X), f(X))$ defines a  coupling $\pi_{f+}$ between $\mup$ and $\nu_{f+} = f_+ \sharp \mu_+$. Likewise, we define a coupling $\pi_{f-}$ between  $\mum$ and $\nu_{f-}$. We can rewrite \eqref{eq:reformulation} as
\begin{align*}
\mathbb{E}\left[(\eta(X) - \cF(f_+, f_-, \eta)(X))^2\right] &= \int \bc(\x,y)\dd \pi_{+f}(\x,y) + \int c(\x,y)\dd \pi_{-f}(\x,y) \\
&\geq \OT_{\bc} (\mup,\nu_{f+})+\OT_{\bc}(\mum,\nu_{f-}).
\end{align*}
The constraint  $f_+\sharp \mu_+ = f_-\sharp \mu_-$ implies that $\nu_{f+} = \nu_{f-}$. Hence,
\begin{equation}\label{eq:first_ineq}
\inf_{f\text{ fair}}\mathbb{E}\left[(\eta(X) - f(X))^2\right] \geq \inf_{\nu\in \cP(\R)} \OT_{\bc} (\mup,\nu)+\OT_{\bc}(\mum,\nu).
\end{equation}

Reciprocally, assume that there exists $\nu^{\mathrm{bar}}$ solving the above barycenter problem, and that an optimal transport map between $\mup$ and $\nu^{\mathrm{bar}}$ is given by an application $\fp : \Omega\rightarrow \R$, with $(\fp) \sharp \mup=\nu^{\mathrm{bar}}$. Likewise, we assume that there exists an optimal transport map $\fm$ between $\mum$ and $\nu^{\mathrm{bar}}$. Then, $\fm\sharp \mum = \fp\sharp \mup = \nu^{\mathrm{bar}}$. Defining $f_{\pm}(x) =\fpm(\eta(x),\Delta(x))$, we have $f_-\sharp \mu_- =  f_+\sharp \mu_+ = \nu^{\mathrm{bar}}$, and so $\cF(f_+, f_-, \eta)$ is a fair regression function. Also, we have by optimality that
\begin{align*}
 \OT_{\bc}(\mup,\nu^{\mathrm{bar}})+\OT_{\bc}(\mum,\nu^{\mathrm{bar}}) &= \int c(x,f_+(x))\dd \mu_+(x) + \int c(x,f_-(x)) \dd \mu_-(x) \\
 &= \E[(\eta(X)-\cF(f_+, f_-, \eta)(X))^2].
\end{align*}
Hence, by \eqref{eq:first_ineq}, the regression function $\cF(f_+, f_-, \eta)$ is optimal. This concludes the proof of Lemma \ref{lem:reducOT}.
\end{proof}

\subsubsection{Transport maps for the barycenter problem}

The rest of this section is devoted to proving that the barycenter problem indeed admits a solution given by transport maps, which will imply that there exists a solution to the fair regression problem. We show that this holds under the following mild regularity assumption.

\begin{assumption} \label{ass:continuity}
The measures $\mup$ and $\mum$ give zero mass to graphs of functions in the sense that for  any measurable function $F:\R\backslash\{0\}\to \R$, $\mupm(\{(F(d),d):\ d\neq 0\})=0$.
\end{assumption}
By Fubini's theorem, this assumption is trivially satisfied if $\mup$ and $\mum$ have a density with respect to the Lebesgue measure. Another interesting example is given by the awareness framework, seen as a particular instance of the unawareness framework. 

\begin{remark}[Awareness as a special case of unawareness\label{rem:aware_special_case}] 
Consider a triplet of random variable $(X,S,Y)\sim \P$, where $X\in \cX$ is a feature, $S\in \{1,2\}$ is a sensitive attribute and $Y\in \R$ is a response variable of interest. Let $Z = (X,S)$ and let $\Q$ be the law of the triplet $(Z,S,Y)$. Then, there is an  equivalence between considering an aware regression function $f(X,S)$ under law $\P$ and an unaware regression function $f(Z)$ under law $\Q$. Note that $Z$ is a random variable on $\tilde \cX=\cX\times \{1,2\}$. The laws $\mu_1$ of $Z|S=1$ and $\mu_2$ of $Z|S=2$ have disjoint support. It follows that $\cX_+=\cX\times \{1\}$ with $\mu_+=\mu_1$ and $\cX_-=\cX\times\{2\}$ with $\mu_-=\mu_2$. Then, $\Delta(x)=1/p_1$ if $x\in \cX_+$ and $\Delta(x) = -1/p_2$ if $x\in \cX_-$. In particular, both measures $\mup$ and $\mum$ are supported on horizontal lines in $\Omega$. 

In that case, Assumption \ref{ass:continuity} is equivalent to the fact that $\mup$ and $\mum$ have no atoms, which is exactly equivalent to the fact that the law of $\eta(X)$ (for $X\sim \mu$) has no atoms. This assumption is often considered to be a minimal assumption to ensure the existence of optimal fair regression functions in the awareness framework. Hence, Assumption \ref{ass:continuity} constitutes a  generalization of this assumption to the unawareness framework.
\end{remark}

\begin{theorem}\label{thm:ot_predictor}
Assume that $(X,Y,S)\sim \P$ is such that $\E[Y^2]<\infty$. 
Under Assumption \ref{ass:continuity}, there is a unique  minimizer $\nu^{\mathrm{bar}}$ of the barycenter problem 
\begin{equation} \label{eq:barycentre}
    \inf_\nu \OT_{\bc} (\mup,\nu)+\OT_{\bc}(\mum,\nu).
\end{equation}
Moreover, this problem is solved by optimal transport maps $\fpm$. In particular, there exists a unique solution $f^*$ of the regression problem under the demographic parity constraint \eqref{eq:unaware_regression}, which is given by
\begin{align*}
\forall x\in \cX,\    f^*(x) = \cF\left(\fp\bigl(\eta(x), \Delta(x) \bigr), \fm\bigl(\eta(x), \Delta(x)\bigr), \eta(x)\right).
\end{align*}

\end{theorem}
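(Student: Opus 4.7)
The plan is to derive the theorem from Lemma \ref{lem:reducOT} by establishing (i) existence of a minimizer $\nu^{\mathrm{bar}}$ of the barycenter problem \eqref{eq:barycenter_problem}, (ii) that the corresponding optimal transport problems are solved by transport maps $\fpm$, and (iii) uniqueness.

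For existence, I use the direct method. The infimum is finite because the constant regression function $f\equiv\E[Y]$ is trivially fair and has finite quadratic risk, so the equivalent barycenter objective obtained via Lemma \ref{lem:reducOT} is finite at the associated $\nu$. Along a minimizing sequence $(\nu_n)$, a uniform bound on $\int y^2\,\dd\nu_n$ is extracted from the uniform bound on $\OT_{\bc}(\mupm,\nu_n)$, giving tightness by Prokhorov; the lower semi-continuity of $\nu\mapsto\OT_{\bc}(\mupm,\nu)$ under weak convergence then yields a minimizer $\nu^{\mathrm{bar}}$.

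For the existence of transport maps---the key step---fix $\nu^{\mathrm{bar}}$ and let $\pi$ be an optimal plan for $\OT_{\bc}(\mup,\nu^{\mathrm{bar}})$. By Kantorovich duality, there exists a $\bc$-concave potential $\phi:\R\to\R\cup\{-\infty\}$ with $\pi$ concentrated on the $\bc$-subdifferential $\partial_{\bc}\phi$. For $\x=(h,d)$ with $d>0$, the first-order condition at a differentiability point of $\phi$ characterising $(\x,y)\in\partial_{\bc}\phi$ reads $h=y-d\phi'(y)/2$. Consequently, for each fixed $y$, the set of $\x$'s $\pi$-paired with $y$ lies on the affine graph $L_y=\{(y-d\phi'(y)/2,d):\, d>0\}$, which is the graph of a function of the variable $d$, and Assumption \ref{ass:continuity} ensures $\mup(L_y)=0$. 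The set of $\x$'s lying on two distinct lines $L_{y_1},L_{y_2}$, which corresponds to jump points of $\phi'$, is a countable union of singletons and is therefore also $\mup$-null. Hence $\pi$ is concentrated on the graph of a measurable map $\fp:\Omega\to\R$; the same argument produces $\fm$.

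For uniqueness, suppose $\nu_1^{\mathrm{bar}},\nu_2^{\mathrm{bar}}$ are both minimizers with respective optimal plans $\pi_\pm^{(i)}$. By convexity of $\nu\mapsto\OT_{\bc}(\mupm,\nu)$ together with optimality, $\bar\nu=(\nu_1^{\mathrm{bar}}+\nu_2^{\mathrm{bar}})/2$ is also a minimizer and the averaged plans $(\pi_\pm^{(1)}+\pi_\pm^{(2)})/2$ are optimal for $\OT_{\bc}(\mupm,\bar\nu)$. Applying the preceding step to $\bar\nu$, these averaged plans must be induced by transport maps; since Dirac measures are extremal among probability measures, this forces $\pi_\pm^{(1)}=\pi_\pm^{(2)}$ and hence $\nu_1^{\mathrm{bar}}=\nu_2^{\mathrm{bar}}$. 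Lemma \ref{lem:reducOT} then yields the unique $f^*$ of the stated form. The main obstacle is the second step: without the nestedness assumption of \cite{multi-to-one}, there is no off-the-shelf tool giving the map structure, and the argument must proceed purely geometrically by observing that the preimages in $\partial_{\bc}\phi$ are one-dimensional affine graphs in $d$, to which Assumption \ref{ass:continuity} is precisely tailored.
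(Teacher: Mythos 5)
Your overall skeleton (existence, map structure via a dual potential, uniqueness by a convexity trick) is reasonable in spirit, but the crucial middle step has two real gaps, and the route you take is not the one the paper uses — and I do not think it can be made to work without essentially recovering the paper's argument.

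First, you invoke a Kantorovich potential $\phi:\R\to\R\cup\{-\infty\}$ for $\OT_{\bc}(\mup,\nu^{\mathrm{bar}})$, but the dual supremum is only known to be attained when $\bc(\x,y)\leq a(\x)+b(y)$ with $a\in L^1(\mup)$, $b\in L^1(\nu^{\mathrm{bar}})$. Here $\bc(\x,y)=(h-y)^2/|d|$, and the term $y^2/|d|$ cannot be split into a sum, since $1/|d|$ is only bounded in $L^1(\mup)$, not pointwise. So the existence of your potential is already in question. The paper sidesteps this entirely: it reduces the barycenter problem (via the multimarginal formulation and the gluing lemma) to a single two-to-two dimensional problem $\OT_{\C}(\mup,\mum)$ with $\C(\x_1,\x_2)=(h_1-h_2)^2/(|d_1|+|d_2|)$, for which $\C(\x_1,\x_2)\leq 2h_1^2/|d_1|+2h_2^2/|d_2|$ \emph{is} a valid split, with both sides integrable. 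The Kantorovich potential then lives on $\Omega$, not on $\R$.

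Second, and more seriously, your argument does not actually establish that each $\x$ is sent to a unique $y$. Showing that for each fixed differentiability point $y$ the $\x$'s paired with $y$ lie on the graph $L_y=\{(y-d\phi'(y)/2,d):d>0\}$ is a statement about the fibers over $y$; what you need is a statement about the fibers over $\x$. Your remedy — ``the set of $\x$'s lying on two distinct lines $L_{y_1},L_{y_2}$ is a countable union of singletons, corresponding to jump points of $\phi'$'' — does not hold: each pair $y_1\neq y_2$ of differentiability points already contributes one intersection point, and there are uncountably many such pairs, so the union of all intersections can easily have positive $\mup$-mass; and the intersections have nothing to do with jump points of $\phi'$. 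Equivalently: for a given $\x=(h,d)$, the equation $h=y-d\phi'(y)/2$ need not have a unique root in $y$, because $y\mapsto y-d\phi'(y)/2$ is monotone only if $\phi''\leq 2/d$, and $\bc$-concavity of $\phi$ on $\R$ gives no such bound here (the curvatures $2/|d'|$ of the defining parabolas are unbounded above when $|d'|$ approaches $0$). This is precisely the difficulty that the paper's Lemma \ref{lem:regularity} is designed to resolve, and it works on the $\Omega$-side: a $\C$-convex potential $\phi$ on $\Omega$ is, for fixed $d$, of the form $h\mapsto(\text{convex in }h)-h^2/|d|$, so $\partial_h\phi$ exists outside a set that is, slice by slice in $d$, at most countable; a measurable-selection argument (Castaing's theorem) then shows this bad set is a countable union of graphs $d\mapsto F(d)$, to which Assumption \ref{ass:continuity} applies. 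Under this regularity, the first-order condition $\partial_h\phi(\x_1)=2(h_1-h_2)/(|d_1|+|d_2|)$ pins down the barycenter point $m(\x_1,\x_2)=h_1-\tfrac{|d_1|}{2}\partial_{h_1}\phi(\x_1)$ as a function of $\x_1$ alone, which is the map $\fp$. Your uniqueness argument via convexity and extremality of Diracs is fine in principle but rests on the map structure you have not established; the paper instead gets uniqueness for free from the fact that every optimal plan for $\OT_{\C}(\mup,\mum)$ is supported on the subdifferential of the same potential $\phi$, hence induces the same $\fp$ and the same barycenter $m\sharp\pi^*$.
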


\begin{proof}  
Using Lemma \ref{lem:reducOT}, it is enough to show that the barycenter problem admits a unique solution $\nu^{\mathrm{bar}}$ such that the corresponding transport problems $\OT_{\bc}(\mup, \nu^{\mathrm{bar}})$ and $\OT_{\bc}(\mum, \nu^{\mathrm{bar}})$ are solved by transport maps.

\medskip
\textbf{Step 1: reduction to a standard transport problem.} We begin by reducing the barycenter problem \eqref{eq:barycentre} to a single two-to-two dimensional optimal transport problem $\OT_{\C}(\mup, \mum)$. The multimarginal version of the barycenter problem reads
\begin{equation}\label{eq:multimarginal}
\inf_{\nu\in \cP(\R)} \OT_{\bc} (\mup,\nu)+\OT_{\bc}(\mum,\nu)= \inf_{\rho \in \Pi(\cdot, \mup,\mum)} \int ( \bc(\x_1,y)+\bc(\x_2,y)) \dd \rho(y,\x_1,\x_2),
\end{equation}
where $\Pi(\cdot, \mup,\mum)$ stands for the set of measures on $\R \times \Omega\times\Omega$ with second marginal $\mup$ and third marginal $\mum$. Indeed, if $\rho\in \Pi(\cdot, \mup,\mum)$, then its two first marginals provide a transport plan between its first marginal $\nu$ and $\mum$, while the first and last marginals provide a transport plan between $\nu$ and $\mup$. This proves that the left-hand side of Equation \eqref{eq:multimarginal} is smaller than the right-hand side. For the other inequality, consider $\nu\in \cP(\R)$, with associated optimal transport plans $\pi_+\in \Pi(\mup,\nu)$ and $\pi_-\in \Pi(\mum,\nu)$. By the gluing lemma (see, e.g., Lemma 5.5 in \cite{santambrogio2015optimal}), there exists $\rho \in \Pi(\cdot, \mup,\mum)$ such that the joint law of the first two marginals is equal to $\pi_+$, and the joint law of the first and last marginal is equal to $\pi_-$. Then, $\OT_{\bc} (\mup,\nu)+\OT_{\bc}(\mum,\nu)=\int ( \bc(\x_1,y)+\bc(\x_2,y)) \dd \rho(y,\x_1,\x_2)$, proving that the right-hand side is smaller than the left-hand side in \eqref{eq:multimarginal}. This shows the validity of \eqref{eq:multimarginal}.

Furthermore, if $\rho$ solves the right-hand side of \eqref{eq:multimarginal}, then its first marginal $\nu$ is a barycenter. Actually, by optimality, for any $(y,\x_1,\x_2)$ in the support of the optimal $\rho$, the point $y$ necessarily minimizes the function $z\mapsto \bc(\x_1,z)+\bc(\x_2,z)$. Let us compute this minimizer. For  $\x_1 = (h_1, d_1)$ and $\x_2 = (h_2, d_2)$, we have
\begin{align*}
\bc(\x_1,y) + \bc(\x_2,y) = (h_1-y)^2/|d_1| + (h_2-y)^2 /|d_2|.
\end{align*}
This function is convex in $y$. The first order condition for optimality reads
\begin{equation}\label{eq:def_m}
    (y-h_1)/|d_1| + (y-h_2)/|d_2| = 0 \ \Longleftrightarrow \ y = m(\x_1,\x_2)\defeq  \frac{h_1 /|d_1| + h_2 /|d_2|}{1/|d_1|+1/|d_2|}.
\end{equation}
Moreover, the cost $\C(\x_1,\x_2) \defeq \inf_y \bc(\x_1, y) +  \bc(\x_2, y)$ corresponding to this minimum is equal to 
\begin{equation}\label{eq:cost_C}
\C(\x_1,\x_2) = \frac{ (h_2-h_1)^2 }{|d_1|+|d_2|}.
\end{equation}
These considerations show that
\begin{equation}\label{eq:reduction_to_C}
    \inf_{\nu\in \cP(\R)} \OT_{\bc} (\mup,\nu)+\OT_{\bc}(\mum,\nu)=\OT_{\C}(\mup,\mum),
\end{equation}
and that optimal transport plans $\pi^*\in \Pi(\mup,\mum)$ are in correspondence with barycenters $\nu$ through the formula $\nu=m \sharp \pi^*$. In particular, as there exists at least one optimal transport plan, the infimum in the barycenter problem is actually a minimum. 

\medskip
\textbf{Step 2: existence of a transport map.}
Note that 
\begin{equation}
\C(\x_1,\x_2)=\frac{(h_1-h_2)^2}{|d_1|+|d_2|} \leq 2\frac{h_1^2}{|d_1|} + 2\frac{h_2^2}{|d_2|}.
\end{equation}
This quantity is integrable against $\mup\otimes \mum$. Indeed, 
\begin{align*}
    \int \frac{h_1^2}{|d_1|}\dd \mup(h_1,d_1)=\int \frac{\eta(x)^2}{\Delta(x)} \dd \mu_+(x) = \int_{\cX_+} \eta(x)^2 \dd \mu(x) \leq \E[\E[Y|X]^2]\leq \E[Y^2]<\infty.
\end{align*}
In particular, the optimal cost $\OT_{\C}(\mup, \mum)$ is finite. Hence, by Kantorovich duality (see \Cref{sec:OT}), there is a $\C$-convex function (called a Kantorovich potential) $\phi: \Omega\to \R \cup \{+\infty\}$ such that if we let 
\begin{equation}\label{eq:ccm}
\Gamma = \{(\x_1,\x_2):\ \phi(\x_1)-\phi^{\C}(\x_2)=\C(\x_1,\x_2)\}
\end{equation}
be the subdifferential of $\phi$, 
then \textit{any} optimal transport plan $\pi$ satisfies $\pi(\Gamma)=1$, see \Cref{sec:OT}. 
 We show in Appendix \ref{app:regularity} the following lemma.
 
 \begin{lemma}\label{lem:regularity}
    Let  $\phi:\Omega \to \R\cup\{+\infty\}$ be a $\C$-convex function with $\mathrm{dom}(\phi):=\{\x:\ \phi(\x)<+\infty\}$. Then, the set  of points $\x\in \mathrm{dom}(\phi)$ such that the partial derivative $\partial_h \phi(\x)$ does not exist is included in a countable union of graphs of measurable functions $F:d\in \R\backslash\{0\}\mapsto F(d)\in \R$.
\end{lemma}

Let $\Sigma$ be the countable union of graphs given by Lemma \ref{lem:regularity} for the Kantorovich potential $\phi$. According to Assumption \ref{ass:continuity}, if we let $\Omega_0=\Omega\backslash \Sigma$, then $\mup(\Omega_0)=1$. 

Let $\x_1\in \Omega_0$ and let $(\x_1,\x_2)\in \Gamma$. Consider the function $g_{\x_2}:\x\in \Omega \mapsto \phi(\x)-\C(\x,\x_2)$. As $\phi^{\C}(\x_2) =\phi(\x_1)- \C(\x_1,\x_2)$, by definition of the $\C$-transform, the function $g_{\x_2}$ attains its maximum at $\x_1$.  In particular, as $\partial_{h_1}\phi(\x_1)$ exists by assumption, we have
\[ \partial_h \phi(\x_1)= \partial_{h_1}\C(\x_1,\x_2) = \frac{2(h_1-h_2)}{|d_1|+|d_2|}.\]
This implies that
\begin{equation}
h_2 = h_1 - \frac{|d_1|+|d_2|}{2}\partial_{h_1}\phi(\x_1).
\end{equation}
Using this expression, we find that
\begin{equation}
    m(\x_1,\x_2) =  h_1 - \frac{|d_1|\partial_{h_1} \phi(\x_1)}{2}.
\end{equation}
In particular, $m(\x_1,\x_2)$ is uniquely determined by $\x_1$. This defines a  measurable map $\x_1\in \Omega_0 \mapsto \fp(\x_1)$. We extend $\fp$ on $\Omega$ by setting $\fp(\x_1)=0$ if $\x_1\in \Omega\backslash \Omega_0$. As explained in \textbf{Step 1}, for $(\x_1,\x_2)\sim \pi^*$, the law $\nu$ of $m(\x_1,\x_2)$ solves the barycenter problem $\OT_{\C}(\mup, \mum)$.  Hence,  $\nu = m\sharp \pi^*=(\id ,\fp)\sharp \mup$ is a barycenter.

\medskip
\textbf{Step 3: uniqueness of a transport map.} 
Likewise, we show the existence of a function $\fm$ such that  $\nu' =(\id , \fm)\sharp \mum$ is a barycenter. If we show that there is a unique barycenter, then $\nu=\nu'=\nu^{\mathrm{bar}}$, and the theorem is proven.

We now show uniqueness of the barycenter. Let $\nu$ be any measure that solves the barycenter problem. Let $\pi_+$ (resp. $\pi_-$) be an optimal transport plan for $\OT_\bc(\mup,\nu)$ (resp. $\OT_\bc(\mum,\nu)$). By the gluing lemma, there exists $\rho\in \Pi(\cdot, \mup, \mum)$ whose joint law of the two first marginals is equal to $\pi_+$, and whose joint law of the first and last marginal is equal to $\pi_-$. The joint distribution $\pi$ between the second and last marginal is a transport plan between $\mup$ and $\mum$. Furthermore, as $\nu$ is a barycenter and by definition of $\C$, we have 
\begin{align*}
    \OT_{\C}(\mup,\mum)&=\OT_\bc(\mup,\nu)+\OT_\bc(\mum,\nu)= \int (\bc(\x_1,y)+c(\x_2,y))\dd \rho(y,\x_1,\x_2)\\
    &\geq \int \C(\x_1,\x_2)\dd\pi(\x_1,\x_2),
\end{align*} 
so that $\pi$ is an optimal transport plan between $\mup$ and $\mum$, with  $\nu=m\sharp \pi$. But then, recall that \eqref{eq:ccm} holds for \textit{any} optimal transport plan $\pi$ (for the same potential $\phi$). Hence, by the same arguments as before, we have $\nu=(\fp){\sharp} \mup$ for the map $\fp:\x_1\mapsto  h_1 - \frac{d_1\partial_{h_1} \phi(\x_1)}{2}$ (defined $\mup$-almost everywhere). In particular, $\nu$ is uniquely determined  by $\mup$ and $\mum$ through the potential $\phi$.
\end{proof}

Theorem \ref{thm:ot_predictor} is the counterpart of Theorem \ref{thm:awareness}, established by \cite{chzhen2020fair} and \cite{gouic2020projection} within the awareness framework. Both theorems demonstrate that the optimal fair regression function solves a barycenter problem with optimal transport costs. Remark \ref{rem:aware_special_case} further indicates that Theorem \ref{thm:awareness}  generalizes Theorem 2.3 in \cite{chzhen2020fair}, as the awareness framework can be considered as a special case of the unawareness framework. However, unlike in the awareness framework, there is no explicit formulation of the optimal fair prediction function in the unawareness framework, as the corresponding barycenter problem involves multi-to-one dimensional transport costs with no explicit solutions.

Theorem \ref{thm:ot_predictor} reveals that the fair prediction $f^*(x)$ only depends on the bi-dimensional feature $(\eta(x), \Delta(x))$ of the candidate $x$. By definition, $\Delta(x)\propto  \frac{\dd\mu_1}{\dd\mu}(x) - \frac{\dd\mu_2}{\dd\mu}(x)$. Moreover, we have $\mathbb{P}(S= 1\vert X=x) = p_1\frac{\dd\mu_1}{\dd\mu}(x)$ and $\mathbb{P}(S= 2\vert X=x) = p_2\frac{\dd\mu_2}{\dd\mu}(x)$. Thus,  $\Delta(x) \propto \frac{\mathbb{P}(S= 1\vert X=x)}{p_1} - \frac{\mathbb{P}(S= 2\vert X=x)}{p_2}$. In other words, $\Delta(x)$ reflects the probability that $x$ belongs to the different groups. Hence, in the unawareness framework, the optimal fair regression function effectively relies on estimates of $S$ to make its prediction. This result provides a theoretical justification for the empirical observations of \cite{liptonDisparity}. As noted by these authors, this phenomenon may be undesirable, as it means that the predictions can rely on features not relevant to predict the response $Y$, simply because they are predictive of the group $S$.

\section{Links between classification and regression problems}\label{sec:nested}

We now turn to the study of the relationship between fair regression and fair classification problems within the unawareness framework.  When $Y\in \{0,1\}$, classical results show that the Bayes classifier $\gbayes_y$ minimizing the risk
\begin{align*}
    \cR_{y}(g) = y\cdot \mathbb{P}\left[Y = 0,  g(X) = 1\right] + (1-y)\cdot \mathbb{P}\left[Y = 1,  g(X) = 0\right].
\end{align*}
is given by $\gbayes_y(x) = \mathds{1}\left\{\fbayes(x)  \geq y \right\}$, where $\fbayes$ is the Bayes regression function minimizing $\cR_{sq}$. Similarly, recent results by \cite{pmlr-v206-gaucher23a} demonstrate that in the awareness framework, the optimal fair classifier $g_y^*$ minimizing the risk $\cR_y$ is given by $g^*_y(x,s) = \mathds{1}\left\{f^*(x,s)  \geq y \right\}$, where $f^*$ is the optimal fair regression function minimizing $\cR_{sq}$. These results can be leveraged to obtain plug-in classifiers $\hat{g}$ using estimates $\hat{f}$ of the regression function. 

Somewhat less explored is the converse relationship: given a family of optimal classifiers $(g_y)_{y\in [0,1]}$ for the risks $(\cR_y)_{y\in [0,1]}$, one could define a regression function $f$ of the form $f(x) = \sup\{y : g_y(x) = 1\}$. For example, this formulation yields the Bayes regression function when using Bayes classifiers and the optimal fair regression function when using optimal fair classifiers in the awareness framework. In both examples, this relationship may not be particularly useful since there already exists an explicit characterization of the optimal regression function. However, if this relationship were to hold in the unawareness framework, it would be significantly more valuable. Indeed, Theorem \ref{thm:ot_predictor}  rephrases the fair regression problem as a barycenter problem with optimal transport costs but does not provide an explicit solution.

\cite{pmlr-v97-agarwal19d} proposed leveraging this relationship to address the problem of fair regression using cost-sensitive classifiers. The authors demonstrate an equivalence between minimizing a discretized version of the risk $\cR_{sq}$ and minimizing the average of the cost-sensitive risks $(\cR_y(g_{f,y}))_{y \in \cZ}$ for a finite set $\cZ$, where $g_{f,y}$ is defined as $g_{f,y}(x) = \mathds{1}\{f(x) \geq y\}$. To obtain the optimal fair regression function for this discretized risk, the authors assume access to an oracle that returns the regression function $f$ such that $g_{f,y}$ minimizes the average of the risks $(\cR_y(g_{f,y}))_{y \in \cZ}$. We emphasize that minimizing the average of the risks $(\cR_y(g_{f,y}))_{y \in \cZ}$ remains an open and challenging problem. 

In contrast, to define a regression function of the form  $f(x) = \sup\{y : g_y(x) = 1\}$, one only needs to solve independent cost-sensitive classification problems. Recent results by \cite{pmlr-v206-gaucher23a} offer an explicit characterization of these classifiers. This raises the intriguing possibility of constructing the optimal fair regression function in the unawareness framework using these fair classifiers. In this section, we demonstrate that such a construction is not always possible. To do so, we begin by providing some reminders on fair classification in the unawareness framework.

\subsection{Fair classification}
In this section, we assume that $Y\in \{0,1\}$ almost surely. We consider the problem of minimizing a family of risk measures $\cR_y$ under the demographic parity constraint. We show that the optimal fair classifier for the risk $\cR_y$ is of the form $g_y^{\kappa}$ 
for some $\kappa \in \mathbb{R}$, where $g_y^{\kappa}$
is given by 
\begin{align}\label{eq:classifier}
\forall x\in \cX,\quad &\ g_y^\kappa(x)= \ones\{\eta(x) \geq y + \kappa \Delta(x)\}.
\end{align}
The following proposition extends Proposition 5.3 in \cite{pmlr-v206-gaucher23a}, and characterizes the optimal fair classifier.
\begin{proposition}\label{prp:classif}
Let $y\in \R$, and let $\kappa^*\in \R$ verify
\begin{equation*}
\mu_+\p{\eta(X) \geq y + \kappa^*\Delta(X)} = \mu_-\p{\eta(X) \geq y + \kappa^*\Delta(X)}.
\end{equation*}
Under Assumption \ref{ass:continuity}, $g_y^{\kappa^*}$ solves the fair classification problem
\begin{equation}\tag{$C_y$}
        \begin{cases}
             \text{minimize} & \quad \mathcal{R}_y(g)\\
             \text{such that} & \quad \mathbb{E}\left[g(X) \vert S = 1\right] = \mathbb{E}\left[g(X) \vert S = 2\right].
        \end{cases}
\end{equation}
Moreover, all solutions to $(C_y)$ are a.s. equal to $g_y^{\kappa^*}$ on $\cX \setminus \{x\in \cX:\ \eta(x) = y \text { and }\Delta(x) = 0\}$.
\end{proposition}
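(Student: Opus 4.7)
The plan is to reduce the constrained problem to an unconstrained pointwise minimization via a Lagrangian trick, using Lemma~\ref{lem:EvNico} to rephrase the demographic parity constraint. First, I would expand $\cR_y(g)$ to obtain $\cR_y(g) = \int (y - \eta(x)) g(x) \dd\mu(x) + (1-y)\P(Y=1)$, so that minimizing $\cR_y$ is equivalent to minimizing the linear functional $g \mapsto \int (y-\eta) g \dd\mu$. Next, Lemma~\ref{lem:EvNico} applied to the $\{0,1\}$-valued function $g$ turns the demographic parity constraint into $\int g \dd\mu_+ = \int g \dd\mu_-$, which, using that $\Delta$ is the density of $\mu_+$ w.r.t.\ $\mu$ on $\cX_+$ and minus the density of $\mu_-$ w.r.t.\ $\mu$ on $\cX_-$, rewrites compactly as $\int g(x) \Delta(x) \dd\mu(x) = 0$.

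The core argument is a pointwise Lagrangian inequality. For any $\kappa \in \R$ and any admissible $g$, the reformulated constraint gives $\int (y-\eta) g \dd\mu = \int [(y-\eta(x)) + \kappa \Delta(x)] g(x)\dd\mu(x)$. The integrand is of the form $\alpha(x) g(x)$ with $\alpha(x) = (y-\eta(x)) + \kappa\Delta(x)$, and its pointwise minimum over $g(x)\in\{0,1\}$ is attained when $g(x) = \ones\{\alpha(x)\leq 0\} = \ones\{\eta(x) \geq y + \kappa\Delta(x)\}$. Taking $\kappa = \kappa^*$, the pointwise minimizer is precisely $g_y^{\kappa^*}$, so that
\begin{equation*}
[(y-\eta(x))+\kappa^*\Delta(x)] g(x) \geq [(y-\eta(x))+\kappa^*\Delta(x)] g_y^{\kappa^*}(x) \quad \text{pointwise.}
\end{equation*}
Integrating against $\mu$ and using that \emph{both} $g$ and $g_y^{\kappa^*}$ satisfy the reformulated constraint (the latter being exactly the defining property of $\kappa^*$), the $\kappa^*$-terms cancel on both sides, and one concludes $\cR_y(g) \geq \cR_y(g_y^{\kappa^*})$.

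For uniqueness, equality in the integrated inequality forces $\alpha(x)(g(x)-g_y^{\kappa^*}(x)) = 0$ for $\mu$-a.e.\ $x$, so $g = g_y^{\kappa^*}$ $\mu$-a.e.\ on the complement of the tie set $\{\alpha = 0\} = \{\eta(x) = y + \kappa^*\Delta(x)\}$. On $\cX_=$, where $\Delta\equiv 0$, this tie set is exactly $\{\eta = y,\ \Delta=0\}$, which is the exceptional set in the statement. On $\cX_+\cup\cX_-$, the tie set is the $(\eta,\Delta)$-preimage of the graph $\{(h,d):\, h = y + \kappa^* d,\, d\neq 0\}$. Here Assumption~\ref{ass:continuity} comes in: this graph is $\mupm$-null, so its preimage is $\mu_\pm$-null; since $\mu$ and $\mu_+$ (resp.\ $\mu_-$) are mutually absolutely continuous on $\cX_+$ (resp.\ $\cX_-$), with density $\Delta$ (resp.\ $-\Delta$) strictly nonzero there, the preimage is in fact $\mu$-null. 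This gives the claimed $\mu$-a.s.\ uniqueness off $\{\eta=y,\ \Delta=0\}$.

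The main technical subtlety lies in this last step, namely in transferring the pushforward-level null condition provided by Assumption~\ref{ass:continuity} into a $\mu$-null statement on $\cX$ via the mutual absolute continuity of $\mu$ and $\mu_\pm$ on $\cX_\pm$. The rest is a standard Lagrangian computation, made concrete once the demographic parity constraint is put into the linear form $\int g\Delta\dd\mu = 0$.
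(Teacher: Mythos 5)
Your proof is correct and takes a genuinely different, more direct route than the paper's. The paper first splits the objective across the three regions $\cX_+$, $\cX_-$, $\cX_=$ to obtain the reformulated problem $(C_y')$, then proves an auxiliary lemma (Lemma~\ref{lem:aux_classif}, via an exchange argument) that any optimal classifier must be of the form $\cF(g^{\kappa^+},g^{\kappa^-},g_=)$ with a priori distinct thresholds on the two sides, and finally uses c.d.f.\ and quantile-function machinery on $Z_\pm=\pm(\eta(X)-y)/\Delta(X)$ to show that one may take $\kappa^+=\kappa^-$ and that the optimum is essentially unique. Your Lagrangian argument absorbs all of that into one step: encoding the constraint as $\int g\,\Delta\,\dd\mu=0$ (valid because $\Delta$ is $\pm$ the density of $\mu_\pm$ w.r.t.\ $\mu$ on $\cX_\pm$, and $\Delta\equiv 0$ on $\cX_=$), the defining property of $\kappa^*$ is precisely feasibility of $g_y^{\kappa^*}$, and that same $\kappa^*$ acts as a multiplier making $g_y^{\kappa^*}$ the \emph{pointwise} minimizer of $\alpha(x)g(x)$ with $\alpha=(y-\eta)+\kappa^*\Delta$; feasibility of both $g$ and $g_y^{\kappa^*}$ then annihilates the multiplier term, yielding optimality at once and without having to introduce separate thresholds on the two sides. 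Your uniqueness step, via $\alpha\bigl(g-g_y^{\kappa^*}\bigr)\geq 0$ pointwise and the transfer of the $\mupm$-null tie set (the line $\{h=y+\kappa^* d\}$, a graph of a measurable function of $d$, hence null under Assumption~\ref{ass:continuity}) into a $\mu$-null set using the mutual absolute continuity of $\mu$ and $\mu_\pm$ on $\cX_\pm$, is exactly the right mechanism and is cleaner than the paper's quantile-function route. What the paper's longer argument additionally supplies is an implicit re-derivation of the existence of a common threshold $\kappa^*$; you take this as a hypothesis, exactly as the statement of the proposition does, so nothing essential is lost.
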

The optimal classifier is uniquely defined outside of  $\{\eta(x) = y\}$. While the set $\{\eta(x) = y \text { and }\Delta(x) \neq 0\}$ has null measure under Assumption \ref{ass:continuity}, the set $ \{\eta(x) = y \text { and }\Delta(x) = 0\}$ may have positive measure. On this set, the classifiers $\ones\{\eta(x) \geq y\}$ and $\ones\{\eta(x) > y\}$ differ, yet they are both optimal for the risk $\cR_y(g)$.

The proof of this proposition is postponed to Appendix \ref{subsec:proof_classif}. As discussed in the previous section, Assumption \ref{ass:continuity} encompasses as a special case the awareness framework. In this case, the optimal classifier presented in Proposition \ref{prp:classif} reduces to the optimal fair classifier in the awareness framework given by
\begin{equation}
\forall (x,s)\in \cX\times \{1,2\},\quad g_y^{\mathrm{aware}}(x,s)= \begin{cases}
\ones\{\eta(x) \geq y + \frac{\kappa^*}{p_1}\} & \text{ if } s=1\\
\ones\{\eta(x) \geq y - \frac{\kappa^*}{p_2}\} & \text{ if } s=2,
\end{cases}
\end{equation}
as described in \cite{pmlr-v161-schreuder21a,Zeng2022BayesOptimalCU}.

\bigskip

\noindent In the unawareness framework, the optimal classifier relies on the probability that the observation $X$ belongs to the different groups $\Delta(X)$. This behavior is similar to that of the optimal fair regression function, as established in Theorem \ref{thm:ot_predictor}. Next, we investigate whether the optimal classifier is envy-free.

\begin{definition}[Envy-free classifiers]
    We say that a classifier $g : \cX \to \{0,1\}$ is envy-free within group if $\P\otimes \P$-a.s., for $(X,S)$ and $(X',S')$ such that $S = S'$ and $\gbayes(X) > \gbayes(X')$, we have $g(X) \geq g(X')$.
\end{definition}
In essence, this property ensures that no candidate who would have been accepted by the Bayes classifier but is rejected after fairness correction envies another candidate \textit{from the same group} who would have been rejected by the Bayes classifier but is accepted after fairness correction. Note that this property is weaker than order preservation, as a classifier that preserves order is envy-free within groups. 

Proposition \ref{prp:envy-free-classif} reveals that in the unawareness framework, the optimal fair classifier is generally not envy free. This behavior contrasts with that of optimal fair classifiers in the awareness framework: indeed, since these classifiers preserve order, they are also envy-free.

\begin{proposition}\label{prp:envy-free-classif}
Let $y\in \R$. Under Assumption \ref{ass:continuity}, if $\mathbb{P}(S = s\vert X = x)\in (0,1)$ for all $s\in \cS$, $x\in \cX$, then one of the following cases hold:
\begin{enumerate}
    \item \label{case1} $\gbayes_y(x) = 1 \implies g^{\kappa^*}_y(x) = 1$ $\mu$-a.s.
    \item \label{case2} $g^{\kappa^*}_y(x) = 1 \implies \gbayes_y(x) = 1$ $\mu$-a.s.
    \item the classifier $g^{\kappa^*}_y$ is not envy-free within group.
\end{enumerate}
\end{proposition}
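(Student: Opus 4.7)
The proof plan is to exploit the explicit formula $g^{\kappa^*}_y(x) = \ones\{\eta(x) \geq y + \kappa^* \Delta(x)\}$ from Proposition \ref{prp:classif} and to compare it pointwise with the Bayes classifier $\gbayes_y(x) = \ones\{\eta(x) \geq y\}$. I would introduce the two ``disagreement sets''
\begin{equation*}
A = \{x \in \cX : \gbayes_y(x) = 1,\ g^{\kappa^*}_y(x) = 0\}, \qquad B = \{x \in \cX : \gbayes_y(x) = 0,\ g^{\kappa^*}_y(x) = 1\},
\end{equation*}
and observe that case \ref{case1} of the proposition is equivalent to $\mu(A)=0$, while case \ref{case2} is equivalent to $\mu(B)=0$. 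Hence it is enough to prove that when both $\mu(A)$ and $\mu(B)$ are strictly positive, $g^{\kappa^*}_y$ fails to be envy-free within group.

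If $\kappa^*=0$ the two classifiers coincide everywhere, so both cases \ref{case1} and \ref{case2} hold. Otherwise, assume without loss of generality $\kappa^*>0$; the case $\kappa^*<0$ is symmetric by swapping the roles of $\cX_+$ and $\cX_-$. A direct threshold comparison then gives: on $\cX_=$ the two classifiers agree; on $\cX_+$ the threshold $y+\kappa^*\Delta(x)$ is strictly larger than $y$, so $B\cap \cX_+=\emptyset$; on $\cX_-$ the threshold is strictly smaller than $y$, so $A\cap \cX_-=\emptyset$. Consequently $A\subseteq \cX_+$ and $B\subseteq \cX_-$.

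Suppose now $\mu(A)>0$ and $\mu(B)>0$. Writing $\mu_s(A)=p_s^{-1}\int_A \P(S=s\mid X=x)\,\mathrm{d}\mu(x)$, the hypothesis $\P(S=s\mid X=x)\in (0,1)$ for every $x$ gives $\mu_s(A)>0$ and, similarly, $\mu_s(B)>0$ for each $s\in\{1,2\}$. Taking independent copies $(X,S)$ and $(X',S')$, the event $\{S=S'=s,\ X\in A,\ X'\in B\}$ has probability $p_s^2\,\mu_s(A)\,\mu_s(B)>0$; on this event $X$ and $X'$ lie in the same group with $\gbayes_y(X)>\gbayes_y(X')$ yet $g^{\kappa^*}_y(X)<g^{\kappa^*}_y(X')$, contradicting envy-freeness within group.

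The whole argument is a pointwise threshold comparison followed by a measure-theoretic transfer; the only technical point is the implication $\mu(A)>0\Rightarrow \mu_s(A)>0$, which is precisely where the strict positivity assumption $\P(S=s\mid X=x)\in(0,1)$ is invoked. I do not anticipate a substantial obstacle beyond this routine case analysis; the statement is really a structural consequence of the fact that the fairness correction in the unawareness framework shifts the Bayes threshold in opposite directions on $\cX_+$ and $\cX_-$.
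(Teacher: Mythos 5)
Your proof is correct and follows essentially the same route as the paper: assume cases \ref{case1} and \ref{case2} both fail, reduce w.l.o.g. to $\kappa^*>0$, locate the disagreement sets on $\cX_+$ and $\cX_-$ respectively, and use the strict positivity of $\P(S=s\mid X=x)$ to show both disagreement sets have positive mass under a single group measure, hence envy-freeness within that group fails. Your exposition is slightly more explicit (the paper states the same facts in terms of $\mu_+$-, $\mu_-$-, and $\mu_1$-probabilities rather than naming $A$ and $B$), but the key steps and the role of each hypothesis are identical.
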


\begin{proof}
Assume that \ref{case1}. and \ref{case2}. do not hold. Then, we have $\kappa^* \neq 0$, and we can assume without loss of generality that $\kappa^* > 0$. Since \ref{case1}. does not hold, we have that $\mu(\gbayes_y(X) = 1 \text{ and } g^{\kappa^*}_y(X) = 0)>0$.  This implies in turn that $\mu_+(\gbayes(X) = 1 \text{ and } g^{\kappa^*}_y(X) = 0)>0$, since $\gbayes_y$ and $g^{\kappa^*}_y$ coincide on $\cX_=$, and since by definition when $\kappa^* >0$, we have $\mu_-(\gbayes_y(X) = 1 \text{ and } g^{\kappa^*}_y(X) = 0) = 0$. Similarly, we can show that since \ref{case2}. does not hold, ${\mu_-}(\gbayes_y(X) = 0 \text{ and } g^{\kappa^*}_y(X) = 1) >0$. Now, $\mathbb{P}(S = s\vert X = x)\in (0,1)$ for all $s\in \cS$, so $\mu_1 \gg \mu_+$ and $\mu_1 \gg \mu_-$. Therefore, ${\mu_1}(\gbayes_y(X) = 1 \text{ and } g^{\kappa^*}_y(X) = 0) > 0$, and ${\mu_1}(\gbayes_y(X) = 0 \text{ and } g^{\kappa^*}_y(X) = 1)> 0$. This implies $$\mathbb{P}\left(\gbayes_y(X) > \gbayes_y(X') \text{ and } g^{\kappa^*}_y(X) <  g^{\kappa^*}_y(X')\big \vert S = S' = 1\right)>0$$
which concludes the proof.
\end{proof}

\paragraph{Extending cost-sensitive classification to $\cY = \mathbb{R}$} In the following, we consider the more general case where $\cY = \mathbb{R}$. Although the interpretation in terms of optimal classification is no longer applicable, we can still analyze the family of functions $g_y^{\kappa}$ defined in Equation \eqref{eq:classifier}. The following proposition characterizes the values of the parameter $\kappa^*$ such that $g_y^{\kappa^*}$ satisfies demographic parity. These values partition the feature space equally, see \Cref{fig:def_kappa}.

\begin{figure}
    \centering
    \includegraphics[width=0.7\linewidth]{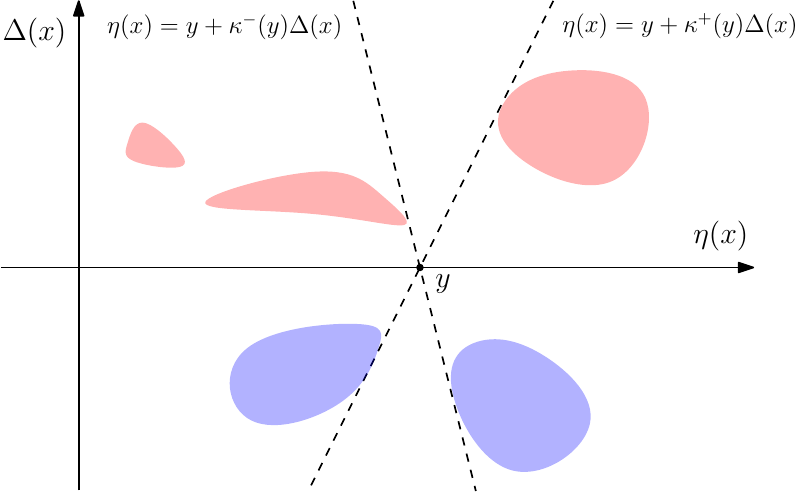}
    \caption{The measure $\mup$ is displayed in red and the measure $\mum$ is displayed in blue. By definition of $\kappa^+(y)$ and $\kappa^-(y)$,  the red region and the blue region to the right of the two dotted lines have equal masses. The region in between the two lines contains no mass.}
    \label{fig:def_kappa}
\end{figure}

\begin{proposition}\label{prp:intervalle}
Let $y\in \R$. Under Assumption \ref{ass:continuity}, the set of numbers $\kappa \in \R$ such that
\begin{equation}\label{eq:def_kappa}
\mu_+\p{\eta(X)\geq y + \kappa\Delta(X)} = \mu_-\p{\eta(X)\geq y + \kappa\Delta(X)}
\end{equation}
is a nonempty closed interval $I(y) = [\kappa^-(y),\kappa^+(y)]$. The function $y\mapsto \kappa^+(y)$ is upper semicontinuous and the function $y\mapsto \kappa^-(y)$ is lower semicontinuous.  
Moreover, it holds that for $\mu$-almost all $x$, for all $y \in \mathbb{R}$ and all $\kappa, \kappa' \in I(y)$, $g_y^{\kappa}(x) = g_y^{\kappa'}(x)$.
\end{proposition}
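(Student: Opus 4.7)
The plan is to recast the defining condition \eqref{eq:def_kappa} through the map $\psi(y,\kappa)\defeq \phi_+(y,\kappa)-\phi_-(y,\kappa)$, where $\phi_\pm(y,\kappa)\defeq\mu_\pm(\eta(X)\geq y+\kappa\Delta(X))$. On $\supp(\mu_+)$ one has $\Delta>0$, so $\kappa\mapsto\phi_+(y,\kappa)$ is non-increasing with limits $1$ at $-\infty$ and $0$ at $+\infty$; symmetrically, $\kappa\mapsto\phi_-(y,\kappa)$ is non-decreasing with limits $0$ and $1$. Hence $\psi(y,\cdot)$ is non-increasing and passes from $+1$ to $-1$. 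I then show that $\psi$ is jointly continuous in $(y,\kappa)$: when $(y_n,\kappa_n)\to(y,\kappa)$, the indicator $\ones\{h\geq y_n+\kappa_n d\}$ converges pointwise to $\ones\{h\geq y+\kappa d\}$ everywhere except on the line $\{h=y+\kappa d,\ d\neq 0\}$, which is the graph of $F(d)=y+\kappa d$ and therefore carries no mass under $\mup$ or $\mum$ by Assumption \ref{ass:continuity}. Dominated convergence gives the continuity, and combined with the monotonicity in $\kappa$ and the intermediate value theorem it yields that $I(y)=\psi(y,\cdot)^{-1}(0)$ is a nonempty closed interval, which I denote $[\kappa^-(y),\kappa^+(y)]$.

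For the semicontinuity of the endpoints, I argue by contradiction. Suppose $y_n\to y$ satisfies $\kappa^+(y_n)\to L>\kappa^+(y)$ and pick $\kappa_0\in(\kappa^+(y),L)$. For $n$ large enough one has $\kappa_0<\kappa^+(y_n)$, so the monotonicity of $\psi(y_n,\cdot)$ and the identity $\psi(y_n,\kappa^+(y_n))=0$ force $\psi(y_n,\kappa_0)\geq 0$; passing to the limit via the joint continuity of $\psi$ gives $\psi(y,\kappa_0)\geq 0$. But $\kappa_0>\kappa^+(y)=\max I(y)$, and together with the non-increasingness of $\psi(y,\cdot)$ and its limit $-1$ at $+\infty$, this forces $\psi(y,\kappa_0)<0$, a contradiction. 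Thus $\kappa^+$ is upper semicontinuous, and the lower semicontinuity of $\kappa^-$ follows by the symmetric argument.

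Finally, for the $\mu$-almost sure equality of the classifiers, the key observation is that $\phi_+(y,\cdot)$ and $\phi_-(y,\cdot)$ are \emph{individually} constant on $I(y)$: indeed, their difference $\psi$ vanishes on $I(y)$, while the two functions are monotone in opposite directions, so each must be constant on this interval. Fix $\kappa<\kappa'$ in $I(y)$; the discrepancy set $\{g_y^{\kappa}\neq g_y^{\kappa'}\}$ splits as $\{y+\kappa\Delta\leq \eta<y+\kappa'\Delta\}\cap\cX_+$ and $\{y+\kappa'\Delta\leq\eta<y+\kappa\Delta\}\cap\cX_-$, with no contribution from $\cX_=$ since $g_y^\kappa=\ones\{\eta\geq y\}$ there does not depend on $\kappa$. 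The respective masses are $\phi_+(y,\kappa)-\phi_+(y,\kappa')=0$ and $\phi_-(y,\kappa')-\phi_-(y,\kappa)=0$. Because $\mu$ is equivalent to $\mu_+$ on $\cX_+$ and to $\mu_-$ on $\cX_-$, the discrepancy set is $\mu$-null, and the monotonicity of $\kappa\mapsto g_y^{\kappa}(x)$ promotes the equality from the endpoint pair $(\kappa^-(y),\kappa^+(y))$ to all $\kappa,\kappa'\in I(y)$. I expect the main technical step to be the joint continuity of $\psi$, which is where Assumption \ref{ass:continuity} is used in an essential way; the rest of the argument is a clean combination of monotonicity, the intermediate value theorem, and Radon--Nikodym equivalence.
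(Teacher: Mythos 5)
The first two assertions are handled correctly, and along essentially the same lines as the paper: you set up the difference $\psi(y,\kappa)$, note monotonicity in $\kappa$, prove continuity from Assumption \ref{ass:continuity} (the paper works with the same function, calling it $G$, and also derives the semicontinuity of $\kappa^{\pm}$ directly from the continuity of this difference rather than by a sequential contradiction, but these are cosmetic differences).

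The final claim, however, has a genuine gap. You prove: \emph{for each fixed $y$}, the set of $x$ where $g_y^{\kappa}(x)\neq g_y^{\kappa'}(x)$ for some $\kappa,\kappa'\in I(y)$ is $\mu$-null. But the proposition asserts the stronger statement: there is a \emph{single} $\mu$-full set of $x$ that works simultaneously for \emph{all} $y\in\R$. Since the $y$ range over an uncountable set, you cannot pass from the pointwise-in-$y$ statement to the uniform one by countable additivity, and your write-up never confronts this. The paper flags this exact issue explicitly (``the set of points $x$ where this equality is satisfied depends on $y$, so that it is not trivial to show that this equality holds simultaneously for all $y$'') and resolves it by showing that the set
\[
C=\bigcup_{y\in\R}\ens{(h,d)\in\Omega:\ y+\kappa^-(y)d\leq h\leq y+\kappa^+(y)d}
\]
is $\mupm$-null. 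The argument splits $C$ into the union $C_1$ of the open strips $\mathring C_y$ (handled via inner regularity and compactness: any compact $K\subset C_1$ is covered by finitely many $\mathring C_{y_i}$, each of measure zero) and the residual boundary $C\setminus C_1$, which is shown to lie in the union of two graphs $d\mapsto\sup_y(y+\kappa^-(y)d)$ and $d\mapsto\inf_y(y+\kappa^+(y)d)$; measurability of these envelopes uses the semicontinuity of $\kappa^{\pm}$, and Assumption \ref{ass:continuity} then kills the boundary. This is the technically substantive part of the proof, and your proposal's closing sentence — that the main technical step is the joint continuity of $\psi$ and the rest is a clean combination of monotonicity, IVT, and Radon--Nikodym — misidentifies where the real work lies. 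You need to supply an argument for the uniform-in-$y$ null set to close the proof.
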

\begin{proof}
Introduce the function 
\[ G:(\kappa,y)\mapsto \mu_+\p{\eta(X)\geq y + \kappa\Delta(X)} -\mu_-\p{\eta(X)\geq y + \kappa\Delta(X)}.\] Under Assumption \ref{ass:continuity}, the measures $\mup$ and $\mum$ give zero mass to non-horizontal lines, implying that the function $G$ is continuous. Furthermore, for $y\in \R$, the function $G(\cdot,y)$ is nonincreasing (recall that $\Delta(X)<0$ for $X\sim \mu_-$). Hence, its zeroes form a closed interval $I(y)$. For $\kappa\in \R$, the set $\{y\in \R:\ \kappa^-(y)>\kappa\}$ is equal to the set $\{y\in \R:\ G(\kappa,y)>0\}$, which is an open set by continuity of $G$. This proves that $\kappa^-$ is lower semicontinuous. We prove similarly that $\kappa^+$ is upper semicontinuous.

It remains to prove the last statement. Fix $y\in \R$. First, we may assume without loss of generality that $\kappa=\kappa^-(y)$ and that $\kappa'=\kappa^+(y)$. 
 We have $\mu_+\p{\eta(X)\geq y + \kappa^+(y)\Delta(X)} = \mu_+\p{\eta(X)\geq y + \kappa^-(y)\Delta(X)}$ (and likewise for $\mu_-$). Thus, we have 
\begin{align*}
 \mu_+(g^\kappa_y(X)\neq g^{\kappa'}_y(X)) &= \mu_+(g^\kappa_y(X)=1,\  g^{\kappa'}_y(X)=0)  \\
 &= \mu_+\p{ \frac{\eta(X)-y}{\Delta(X)} \in [\kappa,\kappa']}=0. 
\end{align*}
The same equality holds for $\mu_-$. Also,  the equality $g^{\kappa^+(y)}_y(x)=g^{\kappa^-(y)}_y(x)$ holds on $\cX_=$ (as $\Delta(x)=0$ on $\cX_=$). Hence, for a fixed $y$, the equality $g^{\kappa^+(y)}_y(x)=g^{\kappa^-(y)}_y(x)$ holds for $\mu$-almost all $x$.

However, the set of points $x$ (of full measure) where this equality is satisfied depends on $y$, so that it is not trivial to show that this equality holds simultaneously for all $y\in \R$, almost surely.

To do so, we show that the set $\{(h,d)\in \Omega:\ \exists y\in \mathbb{R}, \ y+\kappa^-(y)d\leq  h \leq y+\kappa^+(y)d\}$ has mass 0 under $\mup$ and $\mum$. For $y\in \R$, let $C_y=\{(h,d)\in \Omega:\ y+\kappa^-(y)d\leq  h \leq y+\kappa^+(y)d\}$. We have previously shown that for any given $y\in \mathbb{R}$, $g_y^{\kappa^-(y)}(x)=g_y^{\kappa^+(y)}(x)$ for $\mu$-almost all $x$, implying  that $\mupm(C_y)=0$. Let $C=\bigcup_{y\in \R} C_y$. Let us show that $\mup(C)=0$. Let $C_1 =\bigcup_{y\in \R} \mathring C_y$. First, it holds that $\mup(C_1)=0$. 
If it were not the case, as the measure $\mup$ is inner regular, there would exist a compact set $K\subset C_1$ with $\mup(K)>0$. But then, the compact set $K$ is covered by the family of open sets $(\mathring C_y)_{y\in \R}$. By compactness, there exists a finite cover $\mathring C_{y_1},\dots,\mathring C_{y_N}$ covering $K$. As each $\mathring C_{y_i}$ has zero mass, we obtain a contradiction with the positivity of $\mup(K)$. Furthermore, if $(h,d)\in C\backslash C_1$, then there exists $y_0$ with either $h=y_0+\kappa^-(y_0)d$ or $h= y_0+\kappa^+(y_0)d$, with also $ y+\kappa^-(y)d\leq  h \leq y+\kappa^+(y)d$ for all $y\in \R$. This implies that $C\backslash C_1$ is included in the union of the graphs of the two functions $d\mapsto \sup_y (y+\kappa^-(y)d)$ and $d\mapsto \inf_y(y+\kappa^+(y) d)$. These two functions can easily be seen to be measurable because of the semicontinuity of $\kappa^-$ and $\kappa^+$. Thus, by Assumption \ref{ass:continuity}, $\mup(C\backslash C_1)=0$. In conclusion, we have proven that $\mup(C)=0$. We show likewise that $\mum(C)=0$.
\end{proof}

\subsection{The nestedness assumption} Recall that our goal is to determine whether the optimal fair regression function can be expressed as $f^*(x) = \sup\{ y : g_y^{\kappa(y)}(x) = 1\}$ for a certain  choice $\kappa(y)\in I(y)$.  In this section, we introduce an assumption regarding the family of classifiers $g_y^{\kappa(y)}$ and demonstrate that this assumption is necessary for the relationship to hold. Specifically, we wish to use the decision boundaries of optimal classifiers at different risk levels to define the regression function. For this to be possible, the function $y \mapsto g_y^{\kappa(y)}(x)$ must be nonincreasing for any choice of $x$: in other words, the rejection regions $\{x : g^{\kappa(y)}_y(x) < y\}$ must be nested. We formalize this assumption in the following definition.

\begin{definition}[Nestedness]\label{def:nested}
We say that the problem corresponding to $(X,Y,S)\sim \P$ is nested if there  exists a 
choice of $\kappa(y)\in I(y)$ for all $y\in \R$ such that 
\begin{equation}\label{eq:nested}\tag{\textbf{Nested}}
\text{for }\mu\text{-almost all } x\in \cX,\ \text{the map } y\in \R\mapsto g^{\kappa(y)}_y(x) \text{ is nonincreasing.}
\end{equation}
\end{definition}
A straightforward (but key) property implied by nestedness is the fact that the sets 
\begin{equation}
     \forall y\in \R,\ A(y) = \{x\in\cX:\ \eta(x)< y + \kappa(y)\cdot \Delta(x)\}
\end{equation}
are ``almost'' nested, in the sense that there exists a set $\tilde \cX$ of full $\mu$-measure such that for all $y'\leq y$, it holds that $A(y')\cap \tilde \cX\subseteq A(y)\cap \tilde \cX$. 

\begin{lemma}\label{lem:charac_nested_easy}
Assume that Assumption \ref{ass:continuity} holds. Then, 
the problem is nested with choice $\kappa(y)\in I(y)$ for all $y\in \R$ if and only if for all $y<y'$, 
 \begin{equation}\label{eq:charac_nested_easy}
     \mu(g_y^{\kappa(y)}(X)=0 \text{ and } g_{y'}^{\kappa(y')}(X)=1)=0.
 \end{equation} 
 Furthermore, if the problem is nested, one can always choose $\kappa(y)=\kappa^+(y)$ for all $y\in \R$ in the definition of $g_y^{\kappa(y)}$.
\end{lemma}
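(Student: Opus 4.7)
The ``only if'' direction and the final assertion are quick. First, nestedness applied pointwise to each fixed pair $y<y'$ immediately yields \eqref{eq:charac_nested_easy}. Second, for the ``furthermore'' statement, my plan is to invoke the last clause of \Cref{prp:intervalle}: for $\mu$-almost every $x$, the classifier $g_y^{\kappa(y)}(x)$ is the same for every $\kappa(y)\in I(y)$, simultaneously over all $y\in\R$. Hence, if the problem is nested for some choice, replacing $\kappa$ by $\kappa^+$ leaves $y\mapsto g_y^{\kappa(y)}(X)$ unchanged on a set of full $\mu$-measure, and the nonincreasing property survives.

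For the substantive ``if'' direction, my plan is in two steps: establish the nonincreasing property on a countable dense set, then promote it to $\R$ using the semicontinuity of $\kappa^{\pm}$. In the first step, I would fix a countable dense $D\subset\R$, apply \eqref{eq:charac_nested_easy} to each of the countably many pairs $(d_1,d_2)\in D^2$ with $d_1<d_2$, and combine the resulting countable union with the null set from \Cref{prp:intervalle} to obtain a $\mu$-null set $\cN$ outside of which $d\in D\mapsto g_d^{\kappa^+(d)}(x)$ is nonincreasing and coincides with $g_d^{\kappa^-(d)}(x)$, with the equality between $g_y^{\kappa(y)}$, $g_y^{\kappa^+(y)}$, and $g_y^{\kappa^-(y)}$ holding for every $y\in\R$ simultaneously.

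In the second step, I would fix $x\in\cX\setminus\cN$ with $\Delta(x)\neq 0$ (the case $\Delta(x)=0$ is trivial, since then $g_y^{\kappa(y)}(x)=\ones\{\eta(x)\geq y\}$ is automatically nonincreasing in $y$) and argue by contradiction. Suppose $g_{y_1}^{\kappa^+(y_1)}(x)=0$ and $g_{y_2}^{\kappa^+(y_2)}(x)=1$ for some $y_1<y_2$. Depending on the sign of $\Delta(x)$, exactly one of $y\mapsto y+\kappa^+(y)\Delta(x)$ and $y\mapsto y+\kappa^-(y)\Delta(x)$ is lower semicontinuous and the other upper semicontinuous, thanks to the semicontinuity of $\kappa^{\pm}$ established in \Cref{prp:intervalle}. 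Using the lsc threshold, I would produce some $d_1\in D$ close to $y_1$ with $g_{d_1}^{\kappa^+(d_1)}(x)=0$; using the usc one, I would produce $d_2\in D$ close to $y_2$ with $g_{d_2}^{\kappa^+(d_2)}(x)=1$ and $d_1<d_2$, contradicting nestedness on $D$. The hard part will be the boundary configuration $\eta(x)=y_2+\kappa^+(y_2)\Delta(x)$, where upper semicontinuity yields only a non-strict inequality preventing direct approximation; my plan for this is to combine \Cref{ass:continuity} (which says $\mupm$ give zero mass to graphs of functions) with the semicontinuity of $\kappa^{\pm}$ to confine such configurations to a $\mu$-null set that can be absorbed into $\cN$.
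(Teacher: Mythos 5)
Your proposal is correct and takes essentially the same approach as the paper: the paper argues by contradiction (starting from a positive-measure set $\cX_0$ where nestedness fails, intersecting with the full-measure complement of the exceptional set from \Cref{prp:intervalle}, then using the semicontinuity of $\kappa^{\pm}$ to reduce to a rational pair $y<y'$ violating \eqref{eq:charac_nested_easy}), whereas you build the same $\mu$-null set upfront and verify monotonicity pointwise on its complement---two packagings of the identical argument. Your identification of the boundary configuration $\eta(x)=y_2+\kappa^+(y_2)\Delta(x)$ as the potential obstruction, and your plan to absorb it into the exceptional set using \Cref{ass:continuity} together with semicontinuity, is precisely what the paper's definition of $\tilde\cX$ accomplishes: outside that null set, one has the strict alternative $\eta(x)<y+\kappa^-(y)\Delta(x)$ or $\eta(x)>y+\kappa^+(y)\Delta(x)$ for every $y$, which is exactly what lets the lsc/usc approximation go through.
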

\begin{proof}
The direct implication is clear. For the converse one, 
    assume that the nestedness assumption does not hold. By definition, there exists a measurable set $\cX_0$ of positive $\mu$-mass such that $y\mapsto g_y^{\kappa(y)}(x)$ is not nonincreasing for all $x\in \cX_0$. It holds that either $\mu_+(\cX_0)>0$ or that $\mu_-(\cX_0)>0$. Assume without loss of generality that the first condition is satisfied, and let $\tilde \cX$ be the set of points $x$ in $\cX_0\cap \cX_+$ that satisfy 
\begin{equation}\label{eq:it_is_in_C}
    \forall y\in \R,\ \frac{\eta(x)-y}{\Delta(x)}\not \in [\kappa^-(y),\kappa^+(y)]
\end{equation}
According to Proposition \ref{prp:intervalle}, $\mu_+(\tilde \cX)=\mu_+(\cX_0\cap\cX_+)>0$. 
    For $x \in \tilde \cX$, there exists $y<y'$ with $g_y^{\kappa(y)}(x) = 0$ and $g_{y'}^{\kappa(y')}(x) = 1$. As $x$ satisfies \eqref{eq:it_is_in_C}, we have that 
    \begin{equation}\label{eq:conditions_for_non_nestedness}
        \begin{cases}
\eta(x)< y+ \kappa^-(y)\Delta(x)\\
\eta(x)> y'+\kappa^+(y')\Delta(x).
\end{cases}
    \end{equation}
Because the function $\kappa^-$ is lower semicontinuous, for $\tilde y$ close enough to $y$, we also have $\eta(x)<\tilde y+\kappa^-(\tilde y)\Delta(x)$. Likewise, there exists $\tilde y'$ close enough to $y'$ with $\eta(x)>\tilde y'+\kappa^+(\tilde y')\Delta(x)$. In conclusion, we have shown that
\begin{equation}
    \tilde \cX \subset \bigcup_{\substack{y,y'\in \Q\\ y<y'}} \{x:\ \eta(x)< y+ \kappa^-(y)\Delta(x) \text{ and }\eta(x)> y'+\kappa^+(y')\Delta(x)\}
\end{equation}
In particular, as $\mu_+(\tilde \cX)>0$, there exists $y<y'\in \Q$ with 
\[\mu_+(\eta(X)< y+ \kappa^-(y)\Delta(X) \text{ and }\eta(X)> y'+\kappa^+(y')\Delta(X))>0.\]  According to Proposition \ref{prp:intervalle} and Assumption \ref{ass:continuity}, as the equality $ \eta(X)= y'+\kappa^+(y')\Delta(X)$ happens with zero $\mu_+$-probability, we have 
    \[ \mu_+(g_y^{\kappa(y)}(X)=0 \text{ and } g_{y'}^{\kappa(y')}(X)=1)>0,\]
proving the first claim.

The second claim follows from the characterization of nestedness that we have just established. Indeed, let $y<y'$. By Proposition \ref{prp:intervalle}, for $\mu$-almost all $x$, $g_y^{\kappa(y)}(x)=g_y^{\kappa^+(y)}(x)$ and $g_{y'}^{\kappa(y')}(x)=g_{y'}^{\kappa^+(y')}(x)$. Thus, if \eqref{eq:charac_nested_easy} holds for $\kappa(y)$ and $\kappa(y')$, it also holds for $\kappa^+(y)$ and $\kappa^+(y')$.
\end{proof}

As a warm-up, we begin by showing that the nestedness assumption is always verified in the awareness setting.

\begin{proposition}\label{prop:nested_awareness}
    Assume that $S$ is $X$-measurable. Then, under Assumption \ref{ass:continuity}, the classification problem is nested.
\end{proposition}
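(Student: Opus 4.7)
The plan is to exhibit an explicit choice of $\kappa(y) \in I(y)$ for which the associated classifiers form a nested family. By Remark \ref{rem:aware_special_case}, when $S$ is $X$-measurable the partition satisfies $\cX_= = \emptyset$, $\cX_+ = \{S=1\}$, $\cX_- = \{S=2\}$, with $\Delta \equiv 1/p_1$ on $\cX_+$ and $\Delta \equiv -1/p_2$ on $\cX_-$. Under Assumption \ref{ass:continuity}, the measures $\nu_s := \eta \sharp \mu_s$ are atomless; denote their (continuous) c.d.f.\ and quantile functions by $F_s$ and $Q_s$ respectively. The classifier $g_y^\kappa$ thresholds $\eta(x)$ at $\tau_1(y, \kappa) := y + \kappa/p_1$ on $\cX_+$ and at $\tau_2(y, \kappa) := y - \kappa/p_2$ on $\cX_-$; the identity $p_1 \tau_1 + p_2 \tau_2 = y$ always holds, and the demographic parity equation \eqref{eq:def_kappa} reduces to $F_1(\tau_1) = F_2(\tau_2)$.

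The idea is to parametrize admissible pairs $(\tau_1, \tau_2)$ by their common c.d.f.\ level $t := F_1(\tau_1) = F_2(\tau_2) \in [0,1]$. In terms of $t$, the natural representation is $\tau_s = Q_s(t)$ (this equality holds modulo intervals of zero $\nu_s$-mass, which are irrelevant for $g_y^\kappa$ $\mu_s$-almost surely), and the constraint becomes $y = p_1 \tau_1 + p_2 \tau_2 = \psi(t)$, where $\psi(t) := p_1 Q_1(t) + p_2 Q_2(t)$ is a nondecreasing function on $[0,1]$. Define
\begin{equation*}
t(y) := \inf\{t \in [0,1] : \psi(t) \geq y\}
\end{equation*}
(with the obvious extensions when $y$ lies below or above the range of $\psi$), and set $\kappa(y) := p_1(Q_1(t(y)) - y)$. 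Then $\tau_1(y, \kappa(y)) = Q_1(t(y))$ and $\tau_2(y, \kappa(y)) = Q_2(t(y))$, and the continuity of $F_s$ yields $F_s(Q_s(t(y))) = t(y)$, so $\kappa(y) \in I(y)$.

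Since both $y \mapsto t(y)$ and $t \mapsto Q_s(t)$ are nondecreasing, each threshold $\tau_s(y, \kappa(y))$ is nondecreasing in $y$. Therefore, for any $x \in \cX_\pm$, the map $y \mapsto g_y^{\kappa(y)}(x) = \ones\{\eta(x) \geq \tau_{\pm}(y, \kappa(y))\}$ is nonincreasing, which is exactly the nestedness condition \eqref{eq:nested}. The main technical point is the measurable monotone selection of $t(y)$ at flats and jumps of $\psi$, i.e., where $I(y)$ is nondegenerate or where the range of $\psi$ misses $y$; both cases are controlled by the atomlessness of $\nu_s$, which ensures that the classifier $g_y^{\kappa(y)}$ is unchanged $\mu$-almost surely regardless of which representative in $I(y)$ is picked (consistently with Proposition \ref{prp:intervalle}). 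Alternatively, one may verify the criterion of Lemma \ref{lem:charac_nested_easy} directly: for $y < y'$ the thresholds satisfy $\tau_s(y, \kappa(y)) \leq \tau_s(y', \kappa(y'))$, so $\{g_y^{\kappa(y)} = 0, g_{y'}^{\kappa(y')} = 1\}$ is empty on each $\cX_\pm$.
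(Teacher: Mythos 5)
Your proof takes a genuinely different, constructive route from the paper's: the paper argues by contradiction via Lemma \ref{lem:charac_nested_easy}, producing an admissible $\kappa$ strictly below $\kappa^-(y)$, whereas you exhibit an explicit nested choice of $\kappa(y)$ by matching quantile levels of the group-wise score distributions. The quantile picture is the right one, but the explicit formula $\kappa(y)=p_1\bigl(Q_1(t(y))-y\bigr)$ has a genuine gap at the jumps of $\psi=p_1Q_1+p_2Q_2$, and the appeal to atomlessness of $\nu_s$ does not repair it. Under Assumption \ref{ass:continuity} the $F_s$ are continuous but not necessarily strictly increasing, so the $Q_s$ can jump. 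Suppose $Q_1$ jumps at $t_0=t(y)$, so that $y$ lies strictly inside the gap $\bigl(\psi(t_0),\psi(t_0^+)\bigr)$. Your formula pins $\tau_1=Q_1(t_0)$ and therefore forces $\tau_2=(y-p_1Q_1(t_0))/p_2>Q_2(t_0^+)$, giving $F_2(\tau_2)>t_0=F_1(\tau_1)$: the resulting $g_y^{\kappa(y)}$ does not satisfy demographic parity, i.e.\ $\kappa(y)\notin I(y)$. The shift of $\tau_1$ is indeed within a zero-$\nu_1$-mass interval, but it induces a positive-$\nu_2$-mass shift of $\tau_2$, so the null-mass argument you invoke does not apply. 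Worse, $y\mapsto\tau_2(y)$ increases linearly across the gap and then drops by $(p_1/p_2)\bigl(Q_1(t_0^+)-Q_1(t_0)\bigr)$ as $y$ passes $\psi(t_0^+)$, so your thresholds are not nondecreasing and nestedness fails for the family you wrote down.

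The fix preserves the spirit of your argument: interpolate both thresholds affinely across each gap of $\psi$. Set $\lambda(y)=\bigl(y-\psi(t(y))\bigr)/\bigl(\psi(t(y)^+)-\psi(t(y))\bigr)\in[0,1]$ (with the convention $\lambda(y)=0$ when $\psi(t(y)^+)=\psi(t(y))$), let $\tau_s(y)=(1-\lambda(y))\,Q_s(t(y))+\lambda(y)\,Q_s(t(y)^+)$, and put $\kappa(y)=p_1\bigl(\tau_1(y)-y\bigr)$. Then $p_1\tau_1+p_2\tau_2=y$ by construction, each $\tau_s(y)$ remains in the flat interval $\bigl[Q_s(t(y)),Q_s(t(y)^+)\bigr]$ of $F_s$, so $F_1(\tau_1)=F_2(\tau_2)=t(y)$ and $\kappa(y)\in I(y)$, and both $y\mapsto\tau_s(y)$ are nondecreasing, which is precisely the criterion of Lemma \ref{lem:charac_nested_easy}. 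With that correction, your constructive argument is a valid and arguably more transparent alternative to the paper's contradiction argument.
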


\begin{proof}
We prove Proposition \ref{prop:nested_awareness} by contradiction. Assume that the problem is not nested. Using Lemma \ref{lem:charac_nested_easy}, there exist $y < y'$ and a set $\cX_0$ of positive $\mu$ probability such that for all $x\in \cX_0$, $g_y^{\kappa(y)}(x) = 0$ and $g_{y'}^{\kappa(y')}(x) = 1$. Using Proposition \ref{prp:intervalle}, we can also assume without loss of generality that $\eta(x) < y + \kappa^-(y)\Delta(x)$ and $\eta(x) > y' + \kappa^+(y')\Delta(x)$ for $x\in \cX_0$.
Letting $x\in \cX_0$, that we assume without loss of generality is in $ \cX_+$, the previous inequalities become
\begin{equation*}
   y' + \frac{\kappa^+(y')}{p_1} < \eta(x) < y + \frac{\kappa^-(y)}{p_1}.  
\end{equation*}
In words, the threshold for admission is lower at level $y'$ that at level $y$. This implies in particular that $\mu_+(g_{y'}^{\kappa^+(y')}(X) = 1) \geq \mu_+(g_{y}^{\kappa^-(y)}(X) = 1)$. On the other hand, since $y<y'$, it also implies that $\kappa^+(y')<\kappa^-(y)$. Therefore,  $y - \frac{\kappa^-(y)}{p_2} < y' - \frac{\kappa^+(y')}{p_2} $, so $\mu_-(g_{y'}^{\kappa^+(y')}(X) = 1) \leq \mu_-(g_{y}^{\kappa^-(y)}(X) = 1)$. Using $\mu_+(g_{y'}^{\kappa^+(y')}(X) = 1) = \mu_-(g_{y'}^{\kappa^+(y')}(X) = 1)$ and $\mu_+(g_{y}^{\kappa^-(y)}(X) = 1) = \mu_-(g_{y}^{\kappa^-(y)}(X) = 1)$, we find that $\mu_+(g_{y'}^{\kappa^+(y')}(X) = 1) = \mu_+(g_{y}^{\kappa^-(y)}(X) = 1)$. It implies that
\begin{equation*}
    \mu_+\left(\eta(X) \in \left[y' + \frac{\kappa^+(y')}{p_1}, y + \frac{\kappa^-(y)}{p_1}\right]\right) = 0.
\end{equation*}
Likewise, 
\begin{equation*}
    \mu_-\left(\eta(X) \in \left[y - \frac{\kappa^-(y)}{p_2}, y' - \frac{\kappa^+(y')}{p_2} \right]\right) = 0.
\end{equation*}
In particular, for $\kappa = \kappa^+(y') + p_1(y' - y)$, we see that $y + \frac{\kappa}{p_1} = y' + \frac{\kappa^+(y')}{p_1}< y + \frac{\kappa^-(y)}{p_1}$. Thus, $\kappa < \kappa^-(y)$, and $y - \frac{\kappa}{p_2} \geq y - \frac{\kappa^-(y)}{p_2}$. Moreover, $\kappa > \kappa^+(y')$, and $y < y'$, so $y - \frac{\kappa}{p_2} < y' - \frac{\kappa^+(y')}{p_2}$. This implies that 
\begin{align*}
    y' - \frac{\kappa^+(y')}{p_2} - (y - \frac{\kappa}{p_2}) = y' - y + \frac{1}{p_2}\left(\kappa^+(y') + p_1(y' - y) - \kappa^+(y') \right) > 0.
\end{align*}
so $y - \frac{\kappa}{p_2} \in \left[y - \frac{\kappa^-(y)}{p_2}, y' - \frac{\kappa^+(y')}{p_2} \right]$. Thus, 
\begin{equation*}
    \mu_+\left(\eta(X) \geq y + \frac{\kappa}{p_1}\right) = \mu_-\left(\eta(X) \geq y - \frac{\kappa}{p_2}\right)
\end{equation*}
and $\kappa \in I(y)$. Since $\kappa < \kappa^-(y)$, this yields a contradiction.
\end{proof}
Somewhat surprisingly, although the nestedness assumption may  appear  intuitive, it is not always verified. In Section \ref{sec:examples}, we present examples where this assumption holds and others where it does not. 

Before proving  in the next section that under the nestedness assumption, the optimal fair classification functions $g_y^{\kappa(y)}$ can be recovered by thresholding the optimal fair regression function $f^*$, we prove the converse: if the problem is not nested, there exists a value of $y\in \R$ where the classifier $\ones\{f^*(x)\geq y\}$ is suboptimal for the fair classification problem $(C_y)$.


\begin{proposition}\label{prp:not_opt}
Assume that $\cY = \{0,1\}$, that 
Assumption \ref{ass:continuity} holds and that the classification problem is not nested. Let $f^*$ be the optimal fair regression function in the unawareness framework. Then, there exists $y\in \R$ such that the classifier $x\mapsto\ones\{f^*(x)\geq y\}$ is not the optimal fair classifier for the risk $\cR_y$.
\end{proposition}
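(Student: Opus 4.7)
My plan is to prove the contrapositive: I will show that if $\tilde g_y(x) := \ones\{f^*(x) \geq y\}$ is an optimal fair classifier for $(C_y)$ for every $y \in \R$, then the classification problem must be nested, contradicting the hypothesis. First I would verify that each $\tilde g_y$ is a legitimate candidate for $(C_y)$: because $f^*$ solves the fair regression problem, $f^*(X)$ is independent of $S$, so $\tilde g_y(X)$, being a measurable function of $f^*(X)$, is independent of $S$ as well.

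Next I would exploit the structural fact that the family $(\tilde g_y)_{y\in \R}$ is \emph{automatically} nested by construction: for every $x \in \cX$ and every $y < y'$, $\{f^* \geq y'\} \subseteq \{f^* \geq y\}$, so $\tilde g_y(x) \geq \tilde g_{y'}(x)$. Under the contrapositive hypothesis, Proposition \ref{prp:classif} provides, for every $y$, some $\kappa(y) \in I(y)$ such that $\tilde g_y$ coincides with $g_y^{\kappa(y)}$ $\mu$-almost everywhere outside the exceptional set $N_y := \{x \in \cX :\ \eta(x) = y,\ \Delta(x) = 0\}$. The goal is to transport the built-in monotonicity of $(\tilde g_y)$ onto the family $(g_y^{\kappa(y)})$ and then conclude using the characterization of nestedness given in Lemma \ref{lem:charac_nested_easy}.

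Concretely, for $y < y'$, I would show that the event $E := \{g_y^{\kappa(y)}(X) = 0,\ g_{y'}^{\kappa(y')}(X) = 1\}$ has $\mu$-measure zero by splitting along $\cX = \cX_+ \sqcup \cX_- \sqcup \cX_=$. On $\cX_+ \cup \cX_-$, one has $\Delta \neq 0$, so both $N_y$ and $N_{y'}$ are disjoint from this region; the identities $g_y^{\kappa(y)} = \tilde g_y$ and $g_{y'}^{\kappa(y')} = \tilde g_{y'}$ are therefore in force $\mu$-almost everywhere there, after which the pointwise monotonicity $\tilde g_y \geq \tilde g_{y'}$ kills the event. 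On $\cX_=$, the parameter $\kappa$ drops out of the defining formula \eqref{eq:classifier}, leaving $g_y^{\kappa}(x) = \ones\{\eta(x) \geq y\}$ regardless of $\kappa$, and for $y < y'$ the joint event $\{\eta(x) < y,\ \eta(x) \geq y'\}$ is simply empty. Summing the two contributions, Lemma \ref{lem:charac_nested_easy} yields nestedness of the problem, which is the desired contradiction.

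The main obstacle I anticipate is the degenerate region $\cX_=$, where Proposition \ref{prp:classif} does not pin down the optimal classifier uniquely and $\mu(\cX_=)$ can be positive, so one cannot naively transfer $\mu$-almost-everywhere identities across different values of $y$ there. The resolution is the observation above that on $\cX_=$ all members of the parametric family $g_y^{\kappa}$ collapse to the single threshold $\ones\{\eta \geq y\}$, so nestedness on $\cX_=$ follows directly from the elementary monotonicity of sublevel sets of $\eta$ and requires no information about $f^*$ on this region whatsoever.
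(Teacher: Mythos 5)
Your proposal is correct and is essentially the contrapositive reformulation of the paper's own argument: both hinge on the characterization of nestedness in Lemma \ref{lem:charac_nested_easy} combined with the $\mu$-a.e.\ uniqueness of optimal classifiers from Proposition \ref{prp:classif}, and both exploit the built-in monotonicity of the threshold family $y \mapsto \ones\{f^* \geq y\}$. The paper argues directly: given non-nestedness it picks $y<y'$ and a positive-measure set $\cX_0$ where $g_y^{\kappa(y)}=0$ and $g_{y'}^{\kappa(y')}=1$, then case-splits on whether $f^*<y$ holds on a positive-measure portion of $\cX_0$, identifying in each case which of $g_y$ or $g_{y'}$ disagrees with the unique optimum (and observing that $\cX_0$ is automatically disjoint from both exceptional sets $N_y$, $N_{y'}$, so uniqueness applies). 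You instead assume optimality of every $\tilde g_y$ and push monotonicity through the a.e.\ identities to conclude $\mu(E)=0$ for every pair, splitting along $\cX_+\sqcup\cX_-\sqcup\cX_=$ to handle the exceptional sets; your explicit treatment of $\cX_=$ (where the family collapses to $\ones\{\eta\geq y\}$) is a correct and slightly more systematic way to neutralize the non-uniqueness there, whereas the paper sidesteps it by noting that $\cX_0$ never meets $N_y$ or $N_{y'}$. The two routes buy essentially the same thing; the paper's direct version is a bit more constructive in that it names a concrete level where the thresholded classifier fails.
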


\begin{proof}
According to Lemma \ref{lem:charac_nested_easy}, there exists $y<y'$ with 
\[ \mu(g_y^{\kappa(y)}(X)=0 \text{ and } g_{y'}^{\kappa(y')}(X)=1)>0.\]
    Let $ \cX_0$ be the set corresponding to this event. 
Let us consider a classifier of the form $g_y(x) = \ones\{f(x) \geq y\}$. On the one hand, if $\mu\left(X\in  \cX_0 \text{ and } f(X) < y \right) >0$, then the probability $\mu\left(X\in  \cX_0 \text{ and } f(X) < y' \right)$ is also positive, so $g_{y'}(x)$ and $g_{y'}^{\kappa(y')}(x)$ disagree on a set of positive probability. Now, Proposition \ref{prp:classif} implies that all optimal classifiers are a.s. equal, so $g_{y'}$ is sub-optimal. On the other hand, if $\mu\left(X\in  \cX_0 \text{ and } f(X) < y \right) = 0$, then $g_y(x)=1$ for almost all $x\in  \cX_0$. This implies that $g_y(x)$ and $g_y^{\kappa(y)}(x)$ disagree on a set of positive probability, so $g_y$ is sub-optimal.
\end{proof}

\subsection{Constructing a regression function using nested classifiers}

In the previous section, we proved that under mild assumptions, nestedness is a necessary condition for the relationship $g^*_y(x) = \ones \{f^*(x) \geq y\}$ between the optimal fair classification and regression functions to hold. We now conclude by showing that nestedness is also a sufficient condition for this relationship to hold.

We begin by defining the function $f^* : \cX \rightarrow \R$ as
\begin{equation}
    \forall x \in \cX,\ f^*(x) = \sup\{y : g_y^{\kappa(y)}(x) = 1\}
\end{equation}
where $g_y^{\kappa(y)}$ is given by Equation \eqref{eq:classifier}. We assume without loss of generality (using Lemma \ref{lem:charac_nested_easy}) that $\kappa(y)=\kappa^+(y)$ for all $y\in \R$. Remark that $f^*$ is then almost measurable because of the upper semicontinuity of $\kappa^+$, in the sense that its restriction to some set of full measure is measurable (here given by the set of full measure where $y\mapsto g_y^{\kappa(y)}(x)$ is nonincreasing).

\begin{theorem}\label{thm:eq_classif}
Assume that the classification problem is nested and that Assumption \ref{ass:continuity} is satisfied. Then, the regression function $f^*$ is optimal for the fair regression problem \eqref{eq:unaware_regression}.
\end{theorem}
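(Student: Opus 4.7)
The plan is to establish Theorem \ref{thm:eq_classif} in two stages: first, that $f^*$ satisfies the demographic parity constraint; second, that it minimizes $\cR_{sq}$ over fair regression functions. The heart of the second stage is a layer-cake reduction to the cost-sensitive optimality of the classifiers $g_y^{\kappa^+(y)}$.

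For fairness, I would invoke nestedness to produce a set $\tilde{\cX}$ of full $\mu$-measure on which $y \mapsto g_y^{\kappa^+(y)}(x)$ is nonincreasing. On this set, $\{y \in \R : g_y^{\kappa^+(y)}(x) = 1\}$ is a half-line of the form $(-\infty, f^*(x))$ or $(-\infty, f^*(x)]$, which gives the sandwich $\ones\{f^*(x) > y\} \leq g_y^{\kappa^+(y)}(x) \leq \ones\{f^*(x) \geq y\}$ for every $y$. Integrating against $\mu_+$ and $\mu_-$ and invoking $\kappa^+(y) \in I(y)$, i.e.\ $\mu_+(g_y^{\kappa^+(y)}(X)=1) = \mu_-(g_y^{\kappa^+(y)}(X)=1)$, the left-continuous survival functions $y \mapsto \mu_\pm(f^*(X) \geq y)$ must agree outside the at most countable set of common atoms of $f^*\sharp\mu_+$ and $f^*\sharp\mu_-$, and hence everywhere. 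Lemma \ref{lem:EvNico} then yields demographic parity.

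For optimality, I would use the identity
\[
\cR_{sq}(f) - \cR_{sq}(f^*) = 2\,\E[(f^*(X) - f(X))(\eta(X) - f^*(X))] + \E[(f^*(X) - f(X))^2]
\]
to reduce the task to showing $\E[(f^*-f)(\eta-f^*)] \geq -\tfrac{1}{2}\E[(f^*-f)^2]$ for every fair $f$ with $\E[f(X)^2] < \infty$. The key input is the cost-sensitive optimality $\E[g_y^{\kappa^+(y)}(X)(\eta(X)-y)] \geq \E[g(X)(\eta(X)-y)]$ for every fair classifier $g$, which follows from a Lagrangian argument identical to that of Proposition \ref{prp:classif} (it extends verbatim to $Y \in \R$ since both the classifier family and the objective depend on $Y$ only through $\eta$). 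Applying this with $g(x) = \ones\{f(x) \geq y\}$ (fair because $f$ is) and using the almost-everywhere identity $g_y^{\kappa^+(y)}(x) = \ones\{f^*(x) \geq y\}$ from the first stage, I integrate the resulting pointwise-in-$y$ inequality over $y \in \R$, exchange integral and expectation by Fubini, and apply the layer-cake identities $\int_\R (\ones\{a \geq y\} - \ones\{b \geq y\})\,dy = a - b$ and $\int_\R y(\ones\{a \geq y\} - \ones\{b \geq y\})\,dy = (a^2-b^2)/2$. The inner integral evaluates to $(f^*-f)(\eta - (f^*+f)/2) = (f^*-f)(\eta - f^*) + \tfrac{1}{2}(f^*-f)^2$, which gives precisely the desired inequality.

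The main technical obstacle is ensuring $f^* \in L^2(\mu)$ so that all quantities above make sense; this is handled either by a bootstrap that compares $f^*$ to a constant fair function through the inequality proved above (any constant $c$ is fair with finite risk, forcing $\cR_{sq}(f^*) < \infty$ a posteriori), or by identifying $f^*$ $\mu$-a.s.\ with the square-integrable optimal regression function produced by Theorem \ref{thm:ot_predictor}. The Fubini exchange is justified by the pointwise bound $\int_\R |\ones\{f^* \geq y\} - \ones\{f \geq y\}| \cdot |\eta - y|\,dy \leq |f^*-f|(|\eta| + |f^*| + |f|)$ together with Cauchy-Schwarz.
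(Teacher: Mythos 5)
Your proposal is correct in outline and takes a genuinely different route from the paper's proof. The paper establishes Theorem \ref{thm:eq_classif} through the barycenter machinery: Lemma \ref{lem:def_F} shows $f^*$ is fair with pushforward $\nu^*$ having finite second moment, Lemma \ref{prop:potential} exhibits the explicit Kantorovich potential $v(y)=-2\int_0^y\kappa(t)\,\dd t$ (relying on the linear tail bound of Lemma \ref{lem:tail_kappa} to get $v\in L^1(\nu^*)$) and deduces that $\f^*$ is an optimal $\bc$-transport map, and Lemma \ref{lem:nu_star_barycenter} chains the dualities to show $\nu^*$ solves the barycenter problem, so that Lemma \ref{lem:reducOT} finishes. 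You bypass all of this: the layer-cake identity
\[
\cR_{sq}(f)-\cR_{sq}(f^*)=2\int_{\R}\E\bigl[(\ones\{f^*(X)\geq y\}-\ones\{f(X)\geq y\})(\eta(X)-y)\bigr]\,\dd y
\]
reduces optimality in regression directly to the pointwise-in-$y$ cost-sensitive optimality of Proposition \ref{prp:classif} (which, as you correctly observe, extends verbatim to $Y\in\R$ because the objective and the classifier family depend on $Y$ only through $\eta$). This is essentially inverting the reduction of \cite{pmlr-v97-agarwal19d}, and it avoids both the multimarginal reformulation and the construction of the explicit potential $v$, which is a real simplification. Your fairness argument via left-continuous survival functions is also correct; note that for the layer-cake step you do not actually need atomlessness of $\nu^*$ (or the $\mu$-a.e.\ identity $g_y^{\kappa^+(y)}=\ones\{f^*\geq y\}$): since for each $x$ in the nested full-measure set the two functions of $y$ disagree at most at the single point $y=f^*(x)$, the inner $\dd y$-integral is unchanged.

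The one genuine gap is the Fubini justification, and both fixes you propose are, as written, circular. The identification of $f^*$ with the output of Theorem \ref{thm:ot_predictor} presupposes the very optimality being proven. The bootstrap ``through the inequality proved above'' compares $f^*$ to a constant, but that inequality is exactly what requires $f^*\in L^2(\mu)$ via Fubini in the first place. The subtlety you should emphasize is that finite second moment of $\nu^*=f^*\sharp\mu_\pm$ (the easy part, as in Lemma \ref{lem:def_F}) does \emph{not} give $\E[(\eta(X)-f^*(X))^2]<\infty$, because the latter equals $\int\bc(\x,\f^*(\x))\,\dd\mup+\int\bc(\x,\f^*(\x))\,\dd\mum$ and involves the unbounded weight $1/|\Delta|$. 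The circularity can be broken by truncating the $y$-integration: for fixed $M>0$ the Fubini exchange over $y\in[-M,M]$ is licit because the integrand is dominated by $|\eta(x)|+M$, and applying the resulting inequality with $f\equiv 0$ gives, after the layer-cake computation, $\E[f^*_M(X)\eta(X)]\geq\tfrac12\E[f^*_M(X)^2]$ where $f^*_M$ truncates $f^*$ to $[-M,M]$; Cauchy--Schwarz then yields $\|f^*_M\|_{L^2(\mu)}\leq 2\|\eta\|_{L^2(\mu)}$ uniformly in $M$, and monotone convergence gives $f^*\in L^2(\mu)$. With that in hand the pointwise bound you state does justify the full Fubini exchange against any fair $f\in L^2$, and the proof closes. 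I'd recommend replacing the two hand-waved remedies by this truncation argument; it also has the pleasant side effect of making Lemma \ref{lem:tail_kappa} unnecessary for this theorem.
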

Before proving Theorem \ref{thm:eq_classif}, we state the following corollary.
\begin{corollary}\label{cor:eq_classif}
Assume that the classification problem is nested, that Assumption \ref{ass:continuity} is satisfied, and that $\cY = \{0,1\}$. Then, the classification function $g_y : y \mapsto \ones \{f^*(x) \geq y\}$ is optimal for the fair classification problem problem with cost $\cR_y$, where $f^*$ is the solution to the fair regression problem \eqref{eq:unaware_regression}.
\end{corollary}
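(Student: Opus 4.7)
The plan is to derive Corollary \ref{cor:eq_classif} from Theorem \ref{thm:eq_classif} via a layer-cake identity relating the squared risk to the family of classification risks $(\cR_y)_{y\in[0,1]}$, and then invoke Proposition \ref{prp:classif}. Since $f^*$ is the optimal fair regression function (Theorem \ref{thm:eq_classif}), $f^*(X)\perp S$; hence $g_y(x)=\ones\{f^*(x)\geq y\}$ is $\sigma(f^*(X))$-measurable and independent of $S$, so $g_y$ satisfies demographic parity. Moreover, $\eta(X)\in[0,1]$ a.s.\ forces the first marginals of $\mup,\mum$ to lie in $[0,1]$, so by Theorem \ref{thm:ot_predictor} the barycenter $\nu^{\mathrm{bar}}$ lives in $[0,1]$ and $f^*(X)\in[0,1]$ a.s.

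Using $Y^2=Y$ together with the layer-cake formulas $f(X)^2=2\int_0^1 y\,\ones\{f(X)\geq y\}\dd y$ and $Yf(X)=\int_0^1 Y\,\ones\{f(X)\geq y\}\dd y$ (valid for $f\in[0,1]$), a Fubini argument yields
\begin{equation*}
\cR_{sq}(f) \;=\; 2\int_0^1 \cR_y\bigl(\ones\{f(X)\geq y\}\bigr)\dd y
\end{equation*}
for every measurable $f:\cX\to[0,1]$. Applied to $f^*$, this gives $\cR_{sq}(f^*)=2\int_0^1\cR_y(g_y)\dd y$; applied to any fair $f$ and combined with $\cR_y(\ones\{f(X)\geq y\})\geq \cR_y(g_y^{\kappa^+(y)})$ from Proposition \ref{prp:classif}, it yields $\cR_{sq}(f)\geq 2\int_0^1\cR_y(g_y^{\kappa^+(y)})\dd y$. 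Under nestedness, $g_y(x)$ and $g_y^{\kappa^+(y)}(x)$ can differ only on $\{x:f^*(x)=y\}$; by Fubini, $\mu(\{x:f^*(x)=y\})=0$ for Lebesgue-a.e.\ $y\in[0,1]$, so $\cR_y(g_y)=\cR_y(g_y^{\kappa^+(y)})$ for a.e.\ such $y$, proving optimality of $g_y$ for Lebesgue-a.e.\ $y\in[0,1]$.

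To upgrade this to every $y$, I would examine the disagreement set $A_y=\{x:f^*(x)=y,\ g_y^{\kappa^+(y)}(x)=0\}$ across the partition $\cX=\cX_+\sqcup\cX_-\sqcup\cX_=$. On $\cX_=$ one has $f^*=\eta$, so $f^*(x)=y$ forces $\eta(x)=y$, contradicting $g_y^{\kappa^+(y)}(x)=\ones\{\eta(x)\geq y\}=0$. On $\cX_-$, nestedness with the choice $\kappa(y)=\kappa^+(y)$ (afforded by Lemma \ref{lem:charac_nested_easy}) gives $\eta(x)\geq y'+\kappa^+(y')\Delta(x)$ for every $y'<y$; combined with $\Delta(x)<0$ and the upper semicontinuity of $\kappa^+$ from Proposition \ref{prp:intervalle}, passing to the limit $y'\uparrow y$ contradicts $\eta(x)<y+\kappa^+(y)\Delta(x)$. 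The main obstacle is the residual set $A_y\cap\cX_+$, which in general can form a wedge in $(\eta,\Delta)$-coordinates determined by the USC-jumps of $\kappa^+$ at $y$; resolving this requires a finer argument exploiting Assumption \ref{ass:continuity} (no mass on graphs in $\Omega$) to show $\mu(A_y\cap\cX_+)=0$, at which point $g_y=g_y^{\kappa^+(y)}$ $\mu$-a.s.\ for every $y$, completing the proof.
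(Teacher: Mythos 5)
You take a genuinely different route from the paper, but the proposal has an acknowledged gap that you do not close, so as written it is incomplete.

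The paper's own argument is a one-liner: by Theorem~\ref{thm:ot_predictor}, the optimal fair regression function is \emph{unique}, so it coincides $\mu$-a.s.\ with the function $\tilde f^*(x)=\sup\{y: g_y^{\kappa(y)}(x)=1\}$ shown to be optimal in Theorem~\ref{thm:eq_classif}; nestedness then gives $\ones\{\tilde f^*(x)\geq y\}=g_y^{\kappa(y)}(x)$ everywhere except on the boundary set $\{x: f^*(x)=y,\ g_y^{\kappa(y)}(x)=0\}$, and this set has $\mu$-measure zero because $\nu^*=f^*\sharp\mu_\pm$ has a \emph{continuous} c.d.f.\ $F$ (Lemma~\ref{lem:def_F}) and $\mu$ restricted to $\cX_\pm$ is absolutely continuous with respect to $\mu_\pm$ (with density $1/|\Delta|$). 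Proposition~\ref{prp:classif} then gives optimality. Your layer-cake step is correct -- for $Y\in\{0,1\}$ and $f$ taking values in $[0,1]$ one indeed has $\cR_{sq}(f)=2\int_0^1\cR_y(\ones\{f(X)\geq y\})\,\dd y$ -- but it only yields optimality of $g_y$ for Lebesgue-a.e.\ $y$. Once you are forced into proving $\mu(A_y)=0$ for \emph{every} $y$ to upgrade, the layer-cake detour becomes redundant: $\mu(A_y)=0$ directly gives $g_y=g_y^{\kappa^+(y)}$ $\mu$-a.s., and Proposition~\ref{prp:classif} finishes it without the integral identity.

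The genuine gap is the one you flag on $\cX_+$: there $\Delta(x)>0$ reverses the inequality in the USC limiting argument, and the disagreement set can a priori be a full-dimensional wedge in $(\eta,\Delta)$-coordinates, so no further application of Assumption~\ref{ass:continuity} in its raw ``no mass on graphs'' form will close it. The correct ingredient is Lemma~\ref{lem:def_F}, which is already available before the corollary: $F$ is continuous, so $\mu_+(\{x: f^*(x)=y\})=F(y)-F(y^-)=0$ for every $y\in\R$, and since $\mu|_{\cX_+}\ll\mu_+$ this gives $\mu(A_y\cap\cX_+)=0$. The same observation handles $\cX_-$ at one stroke, rendering your bespoke $\cX_-$ USC computation unnecessary. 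Your treatment of $\cX_=$ is correct. Filling the $\cX_+$ case with Lemma~\ref{lem:def_F} (and, for completeness, citing Theorem~\ref{thm:ot_predictor} to identify $f^*$ with the construction of Theorem~\ref{thm:eq_classif}) would make the proof correct, though at that point it reduces to the paper's direct argument with an extra integral detour on top.
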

The proof of Corollary \ref{cor:eq_classif} follows immediately by noticing that by Theorem \ref{thm:ot_predictor}, $f^*$ is uniquely defined, and that the nestedness assumption and Theorem \ref{thm:eq_classif} imply that $g_y(x) = g_y^{\kappa(y)}(x)$ a.s.

\medskip

The rest of the section is devoted to proving Theorem \ref{thm:eq_classif}. To do so, we begin by proving that $f^*$ is a fair regression function, and by defining $F$, the c.d.f. of the predictions under $\mu_+$ and $\mu_-$.

\begin{lemma}\label{lem:def_F}
Assume that the problem is nested  and that Assumption \ref{ass:continuity} is satisfied.  Let $F:\R\to \R$ be defined by 
\begin{equation}
\forall y\in \R,\ F(y) =  \mu_+\p{\eta(X)\leq y+\kappa(y)\Delta(X)}= \mu_-\p{\eta(X)\leq y+\kappa(y)\Delta(X)}.
\end{equation}
Then, there exists a probability measure $\nu^*$ with continuous c.d.f. $F$ and finite second moment such that $f^*\sharp \mu_+ = f^*\sharp \mu_-= \nu^*$. In particular, $f^*$ is a fair regression function.
\end{lemma}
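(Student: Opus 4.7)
My plan is to show that $F$ is a continuous c.d.f., identify it as the common c.d.f. of $f^*\sharp\mu_+$ and $f^*\sharp\mu_-$, and then handle the limits, finite second moment, and demographic parity. First, I would combine the equivalent characterization of nestedness in \Cref{lem:charac_nested_easy} (used with $\kappa=\kappa^+$) with \Cref{ass:continuity} (which ensures $\mu_\pm(\eta(X) = y+\kappa^+(y)\Delta(X))=0$ for each $y$) to conclude that $F$ is non-decreasing and that, on a $\mu$-full-measure ``good'' set on which $y\mapsto g_y^{\kappa^+(y)}(X)$ is non-increasing,
\begin{align*}
\{f^*(X)\leq y\} = \bigcap_{y'>y}\{g_{y'}^{\kappa^+(y')}(X)=0\}.
\end{align*}
Monotone convergence of measures then yields $\mu_\pm(f^*(X)\leq y) = F(y+)$ and $\mu_\pm(f^*(X)<y) = F(y-)$.

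The main obstacle is proving continuity of $F$, that is, $F(y+)=F(y-)$ for every $y\in\R$. I would exploit a symmetry trick: both $F(y+)-F(y)$ and $F(y)-F(y-)$ are defined identically under $\mu_+$ and under $\mu_-$, so to kill both jumps it suffices to show that the right jump vanishes under $\mu_+$ and that the left jump vanishes under $\mu_-$. For the right jump, consider the set $E_y^+=\{g_y^{\kappa^+(y)}(X)=1\}\cap\{g_{y'}^{\kappa^+(y')}(X)=0\ \forall y'>y\}$; restricting to $\cX_+$ (where $\Delta>0$), the constraint $\kappa^+(y)\leq(\eta(X)-y)/\Delta(X)$ from $g_y=1$, together with $\kappa^+(y')>(\eta(X)-y')/\Delta(X)$ for all $y'>y$ and the upper semicontinuity of $\kappa^+$, pinches $\eta(X) = y+\kappa^+(y)\Delta(X)$, a graph of a measurable function of $\Delta(X)$ and therefore of zero $\mup$-mass by \Cref{ass:continuity}. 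For the left jump, consider the set $E_y^-=\{g_y^{\kappa^+(y)}(X)=0,\ f^*(X)=y\}$; restricting to $\cX_-$ (where $\Delta<0$), the strict inequality $\kappa^+(y) < (y-\eta(X))/|\Delta(X)|$ from $g_y=0$, together with the existence (from $f^*(X)=y$ with $g_y=0$ and nestedness) of a sequence $y_n\uparrow y$ with $g_{y_n}(X)=1$ which gives $\kappa^+(y_n)\geq (y_n-\eta(X))/|\Delta(X)|$, yields the contradictory chain
\begin{align*}
\kappa^+(y) \ < \ \frac{y-\eta(X)}{|\Delta(X)|} \ \leq \ \liminf_n \kappa^+(y_n) \ \leq \ \limsup_n \kappa^+(y_n) \ \leq \ \kappa^+(y).
\end{align*}

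Once continuity of $F$ is in hand, the first step gives $\mu_\pm(f^*(X)\leq y) = F(y)$, so both pushforwards equal the common probability measure $\nu^*$ with c.d.f.\ $F$, and demographic parity of $f^*$ follows from \Cref{lem:EvNico}. The limits $F(-\infty)=0$ and $F(+\infty)=1$ reduce to $\mu_\pm(f^*(X)=\pm\infty)=0$: a positive mass at $+\infty$ under $\mu_+$ would force $\kappa^+(y)\to-\infty$ fast enough as $y\to+\infty$ to keep $g_y^{\kappa^+(y)}(X)=1$ there, but the balance constraint $\mu_+(g_y^{\kappa^+(y)}=1)=\mu_-(g_y^{\kappa^+(y)}=1)$ then simultaneously forces $g_y^{\kappa^+(y)}(X)=0$ for $X\sim\mu_-$, a contradiction (and symmetrically at $-\infty$). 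Finally, the finite second moment of $\nu^*$ follows from $\int y^2 \dd\nu^*(y) = \E_{\mu_+}[f^*(X)^2]$ combined with $\E_{\mu_+}[\eta(X)^2]\leq m^{-1}\E[\eta(X)^2]<\infty$ and an $L^2$-control of $f^*-\eta$ inherited from the transport structure of \Cref{sec:barycenter}.
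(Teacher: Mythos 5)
Your overall plan matches the paper's: show $F$ is a continuous c.d.f., identify $\nu^*$, verify $f^*\sharp\mu_\pm=\nu^*$ (using upper semicontinuity of $\kappa^+$ and nestedness), and deduce fairness from Lemma \ref{lem:EvNico}. Where you diverge, the results are mixed.

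Your continuity argument is a genuinely different (and correct) route. Instead of the paper's direct bound on $F(y_1)-F(y_0)$ by the $\mupm$-mass of a shrinking strip (split into the cases $\kappa(y_0)\gtrless\kappa(y_1)$), you kill the right jump on $\cX_+$ and the left jump on $\cX_-$ separately, using the upper semicontinuity of $\kappa^+$ to pinch the offending point onto the line $\{\eta(X)=y+\kappa^+(y)\Delta(X)\}$, which has zero mass by Assumption \ref{ass:continuity}. Your observation that each one-sided jump is a common quantity under $\mu_+$ and $\mu_-$ (since $F$ is a single function) is exactly what makes the two one-sided arguments suffice. This is a clean alternative. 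Your argument for $F(\pm\infty)$ is also correct but more circuitous than the paper's, which just bounds $F(y)$ (resp. $1-F(y)$) by $\max\{\mu_+(\eta(X)\leq y),\mu_-(\eta(X)\leq y)\}$ (resp. $\max\{\mu_+(\eta(X)>y),\mu_-(\eta(X)>y)\}$) via the sign of $\kappa(y)$.

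The genuine gap is in the finite second moment. You write that it follows from ``an $L^2$-control of $f^*-\eta$ inherited from the transport structure of Section \ref{sec:barycenter}.'' This is circular: at this point in the argument it has not yet been established that $f^*$ is the optimal fair regression function or that $\f^*$ is an optimal transport map. Indeed, the very next lemma (Lemma \ref{prop:potential}) uses the finite second moment of $\nu^*$ — via Lemma \ref{lem:tail_kappa} and the growth $|v(y)|\lesssim 1+y^2$ — to show that the candidate potential $v$ lies in $L^1(\nu^*)$ so that Kantorovich duality can be invoked; one cannot appeal to the transport structure to get the moment bound that is needed to establish the transport structure. The paper instead argues directly: the same sign split on $\kappa(y)$ that handles the tails gives $F(y)\leq\max\{\mu_+(\eta(X)\leq y),\mu_-(\eta(X)\leq y)\}$ and the symmetric bound on $1-F(-y)$, and then
\begin{equation*}
    \int z^2 \dd \nu^*(z) = \int_0^{+\infty}\bigl(F(-\sqrt{t})+1-F(\sqrt{t})\bigr)\dd t < \infty
\end{equation*}
follows from the finiteness of the second moment of $\eta(X)$ under $\mu_+$ and $\mu_-$ (which in turn follows from $\E[Y^2]<\infty$). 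You should replace the transport-structure reference by this direct tail-integral argument.
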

\begin{proof}
The ``almost'' nestedness of the sets $(A(y))_y$ implies that $F$ is nondecreasing. Let us show that the function $F$ is the c.d.f. of some continuous random variable, i.e., that it goes to $0$ in $-\infty$, that it goes to $1$ in $+\infty$, and that it is continuous.

First, recall that $\Delta(X)>0$ for $X\sim \mu_+$ and that $\Delta(X)<0$ for $X\sim \mu_-$. Thus, if $\kappa(y)\leq 0$, then $F(y)\leq  \mu_+\p{\eta(X)-y \leq 0}$ and if $\kappa(y)\geq 0$, then $F(y)\leq  \mu_-\p{\eta(X)-y \leq 0}$. Hence, 
\begin{equation}\label{eq:naive_bound_F}
           F(y)\leq \max\left\{ \mu_+\p{\eta(X)-y \leq 0},  \mu_-\p{\eta(X)-y\leq 0}\right\}, 
\end{equation}
and $F$ converges to $0$ in $-\infty$. Similarly, $F(y)\to 1$ when $y$ converges to $+\infty$.

Next, let us show that $F$ is continuous. Let $y_0, y_1\in \R$ be such that $y_0 < y_1$. Now, if $\kappa(y_0)\geq \kappa(y_1)$, then 
\begin{align*}
 F(y_1) - F(y_0) &= \mu_+\p{\eta(X)\leq y_1+\kappa(y_1)\Delta(X)} - \mu_+\p{\eta(X)\leq y_0+\kappa(y_0)\Delta(X)}\\
&\leq  \mu_+\p{\eta(X)\leq y_1+\kappa(y_0)\Delta(X)} - \mu_+\p{\eta(X)\leq y_0+\kappa(y_0)\Delta(X)}
\end{align*}
Similarly, if $\kappa(y_0)\leq \kappa(y_1)$, then, (recalling that $\Delta(X)<0$ for $X\sim \mu_-$)
\begin{align*}
F(y_1) - F(y_0) &=\mu_-\p{\eta(X)\leq y_1+\kappa(y_1) \Delta(X) } -  \mu_-\p{\eta(X)\leq y_0+\kappa(y_0) \Delta(X) } \\
&\leq  \mu_-\p{\eta(X)\leq y_1+\kappa(y_0) \Delta(X) } -  \mu_-\p{\eta(X)\leq y_0+\kappa(y_0) \Delta(X) }.   
\end{align*}
Thus, 
\begin{align*}
F(y_1) - F(y_0) \leq \mu_+\p{\eta(X) - \kappa(y_0)\Delta(X) \in \left[y_0, y_1\right]} + \mu_-\p{\eta(X) - \kappa(y_0)\Delta(X)  \in \left[y_0, y_1\right]}
\end{align*}
We have shown that $F$ is non-decreasing, so $F(y_1) - F(y_0) \geq 0$. Under Assumption \ref{ass:continuity}, $\mu_+$ and $\mu_-$ give zero mass to the sets $\{\eta(X) = y_0 + \kappa(y_0)\Delta(X)\}$, so $F(y_1) - F(y_0) \rightarrow 0$ as $y_1 \rightarrow y_0^+$. This proves that $F$ is right-continuous. To show that $F$ is left-continuous, we note that if $\kappa(y_0)\geq \kappa(y_1)$, then 
\begin{align*}
 F(y_1) - F(y_0) &\leq \mu_+\p{\eta(X)\leq y_1+\kappa(y_1)\Delta(X)} - \mu_+\p{\eta(X)\leq y_0+\kappa(y_1)\Delta(X)}.
\end{align*}
Similarly, if $\kappa(y_0)\leq \kappa(y_1)$, then, 
\begin{align*}
F(y_1) - F(y_0) &\leq  \mu_-\p{\eta(X)\leq y_1+\kappa(y_1) \Delta(X) } -  \mu_-\p{\eta(X)\leq y_0+\kappa(y_1) \Delta(X) }.   
\end{align*}
Thus, 
\begin{align*}
F(y_1) - F(y_0) \leq \mu_+\p{\eta(X) - \kappa(y_1)\Delta(X) \in \left[y_0, y_1\right]} + \mu_-\p{\eta(X) - \kappa(y_1)\Delta(X)  \in \left[y_0, y_1\right]}
\end{align*}
and $F$ is also left-continuous.

Then, let us show that $\nu^*$ has finite second moment. Let $Z\sim \nu^*$. We have
\begin{align*}
     \E[Y^2] = \int_0^{+\infty} \P(Z^2>t) \dd t= \int_0^{+\infty} (F(\sqrt{t})+(1-F(-\sqrt{t})))\dd t
\end{align*}
We use \eqref{eq:naive_bound_F} to obtain that for $y\in \R$, $F(y)\leq \max(\mu_+(\eta(X)\leq y), \mu_-(\eta(X)\leq y))$. But, as $\E[Y^2]<+\infty$, the random variable $\eta(X)$ has a finite second moment under the law of either $\mu_+$ or $\mu_-$. In particular, $ \int_0^{+\infty} F(\sqrt{t}) \dd t$ is finite. Similarly, $\int_0^{+\infty} (1-F(-\sqrt{t}))\dd t$ is finite.
    
Finally, we prove the statement $f^*\sharp \mu_+ = \nu^*$. Indeed, for all $y_0 \in \R$, we have using that upper semicontinuity of $\kappa(y)=\kappa^+(y)$ that
\begin{align*}
  f^*\sharp \mu_+((-\infty,y_0])&= \mu_+ \p{ \sup \{y : g_y^{\kappa(y)}(X) = 1\}\leq  y_0 }\\
    &= \mu_+ \p{ g_{y_0}^{\kappa(y_0)}(X) = 0}=\mu_+(\eta(X)<y_0+\kappa(y_0)\Delta(X))=F(y_0).
\end{align*}
where the second line follows from the nestedness assumption and the fact that the line $\{\eta(X)=y_0+\kappa(y_0)\Delta(X)\}$ has zero mass. We show similarly that $f^*\sharp \mu_- = \nu^*$, thus concluding the proof of Lemma \ref{lem:def_F}. 
\end{proof}

The function $f^*$ depends only on $x$ through the pair $(\eta(x),\Delta(x))$. Let $\f^*:\Omega\to \R$ be defined by the relation $f^*(x) = \f^*(\eta(x),\Delta(x))$ for $x\in \cX_{\pm}$. 
We show that $\f^*$ defines an optimal transport map between $\mup$ and $\nu^*$ with respect to the cost $\bc$.

\begin{lemma}\label{prop:potential}
Assume that the problem is nested. Then, $\f^*$ is an optimal transport map between $\mup$ and $\nu^*$ for the cost $\bc$, with Kantorovich potential between $\nu^*$ and $\mup$ given by $v:y\mapsto -2\int_0^y \kappa(t)\dd t$. The same holds for $\mum$, with Kantorovich potential given by $-v$.
\end{lemma}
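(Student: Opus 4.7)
The plan is to invoke Kantorovich duality: to show that $\f^*$ is an optimal transport map for $\OT_\bc(\mup, \nu^*)$ with potential $v$, it suffices to verify that the pair $(v, v^\bc)$ is admissible and that for $\mup$-almost every $\x$, the sup defining $v^\bc(\x)$ is attained at $y = \f^*(\x)$, i.e.\ $(\x, \f^*(\x)) \in \partial_\bc v$. Combined with $\f^*\sharp \mup = \nu^*$ from Lemma \ref{lem:def_F}, this will yield
\begin{equation*}
\int v\,\dd \nu^* - \int v^\bc\,\dd\mup = \int \left(v(\f^*(\x)) - v^\bc(\x)\right) \dd\mup(\x) = \int \bc(\x, \f^*(\x)) \dd\mup(\x),
\end{equation*}
so both the duality inequality and the trivial primal inequality become equalities.

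The core computation is to show that for $\mup$-a.e.\ $\x = (h, d)$ (hence $d > 0$), the function $\varphi_\x(y) \defeq v(y) - \bc(\x, y) = -2\int_0^y \kappa(t)\,\dd t - (h-y)^2/d$ attains its supremum at $\f^*(\x)$. Integration gives the identity
\begin{equation*}
\varphi_\x(y_2) - \varphi_\x(y_1) = 2\int_{y_1}^{y_2}\left(\frac{h-t}{d} - \kappa(t)\right)\dd t,
\end{equation*}
so the sign of the integrand at $t$ coincides with the sign of $\ones\{g_t^{\kappa(t)}(x) = 1\}$ (recall $g_t^{\kappa(t)}(x) = \ones\{h \geq t + \kappa(t) d\}$ when $\Delta(x) = d$). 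By the nestedness assumption together with the definition of $\f^*(\x)$ as a supremum: for any $y_2 < \f^*(\x)$ there exists $y' \in (y_2, \f^*(\x)]$ with $g_{y'}^{\kappa(y')}(x) = 1$, which by monotonicity gives $g_t^{\kappa(t)}(x) = 1$ for all $t \in (y_1, y_2)$, so $\varphi_\x$ is nondecreasing on $(-\infty, \f^*(\x))$; for any $y_1 > \f^*(\x)$, by the sup definition $g_{y_1}^{\kappa(y_1)}(x) = 0$, hence by monotonicity $g_t^{\kappa(t)}(x) = 0$ for $t \in (y_1, y_2)$, and $\varphi_\x$ is nonincreasing on $(\f^*(\x), +\infty)$. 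Continuity of $\varphi_\x$ (from absolute continuity of $v$ and continuity of $\bc(\x,\cdot)$) then yields that the supremum is achieved at $\f^*(\x)$, i.e.\ $v^\bc(\x) = v(\f^*(\x)) - \bc(\x, \f^*(\x))$.

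The statement for $\mum$ is entirely analogous: for $\x = (h, d)$ with $d < 0$, one has $|d| = -d$ and a direct computation shows $-v$ satisfies the same first-order condition $h = y + \kappa(y) d$, with the role of ``$g_t^{\kappa(t)}(x) = 1$'' and ``$g_t^{\kappa(t)}(x) = 0$'' reversed in the sign analysis, so the same nestedness argument produces $(-v)^\bc(\x) = -v(\f^*(\x)) - \bc(\x, \f^*(\x))$. The main obstacle I anticipate is the routine but delicate bookkeeping of integrability: one must check that $v \in L^1(\nu^*)$ (using that $\nu^*$ has finite second moment from Lemma \ref{lem:def_F} and that $\kappa$ is locally bounded on the support of $\nu^*$, since it coincides with $\kappa^+$ which is upper semicontinuous) and that $v^\bc \in L^1(\mup)$, which follows from the identity $v^\bc(\x) = v(\f^*(\x)) - \bc(\x, \f^*(\x))$ once the finiteness of $\int \bc(\x, \f^*(\x))\dd \mup(\x)$ is guaranteed by the finiteness of $\OT_\bc(\mup, \nu^*)$ established in Theorem \ref{thm:ot_predictor}.
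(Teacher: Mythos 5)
Your proof follows the same strategy as the paper's: compute $v^\bc$ by showing the supremum is attained at $\f^*(\x)$ via the sign analysis on $t\mapsto \frac{h-t}{d}-\kappa(t)$ controlled by the nestedness assumption, then close the argument by Kantorovich duality combined with $\f^*\sharp\mupm=\nu^*$ from Lemma~\ref{lem:def_F}. The core sign argument (nondecreasing on $(-\infty,\f^*(\x))$, nonincreasing beyond) is exactly the paper's inequality $\bc(\x,y)-\bc(\x,\f^*(\x))\geq v(y)-v(\f^*(\x))$.

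The one step where your reasoning does not go through is the integrability of $v$ against $\nu^*$. You argue $v\in L^1(\nu^*)$ from the finite second moment of $\nu^*$ together with \emph{local boundedness} of $\kappa$, inferred from the upper semicontinuity of $\kappa^+$. This is insufficient on two counts: upper semicontinuity only controls $\kappa^+$ from above on compact sets and gives no lower bound; and even two-sided local boundedness of $\kappa$ would only bound $v$ on compact intervals, which does not yield $\int|v|\,\dd\nu^*<\infty$ when the support of $\nu^*$ is unbounded. What is actually needed is a \emph{global} growth estimate, and this is precisely what the paper isolates in Lemma~\ref{lem:tail_kappa}: $|\kappa(y)|\leq C(1+|y|)$, which gives $|v(y)|\lesssim 1+y^2$ and hence $v\in L^1(\nu^*)$ by the finite second moment. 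Proving that growth bound is a separate (nontrivial) argument, not a consequence of semicontinuity, so this step should be made explicit rather than treated as routine bookkeeping. The remainder of your proposal — the argument for $\mum$ via $-v$, and the duality bookkeeping once $v\in L^1(\nu^*)$ is in hand — is correct.
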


The proof of Lemma \ref{prop:potential} relies on the following technical lemma, whose proof is postponned to Appendix \ref{proof:kappa}.
\begin{lemma}\label{lem:tail_kappa}
    The function $y\mapsto \kappa(y)$ satisfies $|\kappa(y)|\leq C(1+|y|)$ for some $C>0$.
\end{lemma}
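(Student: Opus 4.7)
The plan is to prove the bound $|\kappa(y)|\leq C(1+|y|)$ by deriving a contradiction for $|y|$ large whenever $|\kappa(y)|$ grows faster than linearly in $|y|$, and by separately bounding $\kappa^{\pm}$ on compact sets of $y$'s. Since $\kappa(y)=\kappa^+(y)$ lies in $I(y)=[\kappa^-(y),\kappa^+(y)]$ and both endpoints satisfy the defining equation
\[ \mu_+(\eta(X)\geq y+\kappa\Delta(X)) = \mu_-(\eta(X)\geq y+\kappa\Delta(X)), \]
it suffices to bound any $\kappa\in I(y)$. For brevity I will abbreviate $F_\pm(y,\kappa):=\mu_\pm(\eta(X)\geq y+\kappa\Delta(X))$.

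Two preliminary estimates are needed. First, the second moments $\int\eta^2\,\mathrm{d}\mu_\pm$ are finite: the Jordan decomposition yields $(\mu_1-\mu_2)_\pm\leq \mu_{1,2}$, hence $\mu_\pm\leq \mu_i/m$, and combining with Jensen's inequality $\eta(X)^2\leq \E[Y^2\mid X]$ and $\E[Y^2]<\infty$ gives $\int \eta^2\,\mathrm{d}\mu_\pm\leq \E[Y^2]/(m\, p_i)<\infty$. Second, since $\mu_\pm$ is concentrated on $\{|\Delta|>0\}$ by definition of $\cX_\pm$, one can fix $\alpha>0$ such that $\mu_\pm(|\Delta|\geq \alpha)\geq 3/4$.

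The main step treats four sign combinations, which reduce to two by the symmetry $\kappa\leftrightarrow -\kappa$ (which swaps the roles of $\mu_+$ and $\mu_-$). In the representative case $y>0,\ \kappa>0$: on the one hand, $\Delta>0$ $\mu_+$-a.s.\ implies $F_+(y,\kappa)\leq \mu_+(\eta\geq y)\leq \int \eta^2\,\mathrm{d}\mu_+/y^2=O(1/y^2)$ by Markov. On the other hand, if $\kappa\geq 3y/(2\alpha)$, then the intersection of the events $\{\eta\geq -y/2\}$ (of $\mu_-$-mass $\geq 3/4$ for $y$ large enough, again by Markov) and $\{|\Delta|\geq\alpha\}$ has $\mu_-$-probability $\geq 1/2$, and on it $\eta+\kappa|\Delta|\geq y$, forcing $F_-(y,\kappa)\geq 1/2$. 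Since $F_+=F_-$ on $I(y)$, this contradicts the $O(1/y^2)$ bound once $y$ is large enough, so $\kappa<3y/(2\alpha)=O(y)$. The regime $y<0$ is handled dually by controlling the complementary events $1-F_\pm$: one side is $O(1/y^2)$ via Markov applied to $\{\eta<y\}$, while the other is forced above $1/2$ on the intersection $\{\eta\leq M,|\Delta|\geq\alpha\}$ (with $M$ a constant chosen so that $\mu_\pm(\eta\leq M)\geq 3/4$) as soon as $|\kappa|\gtrsim |y|$.

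Finally, on any bounded window $|y|\leq R$, the interval $I(y)$ is nonempty and compact (since $F_+(y,\kappa)\to 0$ while $F_-(y,\kappa)\to 1$ as $\kappa\to +\infty$, with opposite limits at $-\infty$), and the semicontinuity statements of Proposition~\ref{prp:intervalle} ensure that $\kappa^+$ is bounded above and $\kappa^-$ is bounded below on $[-R,R]$. Combining with the linear bound from the previous step yields $|\kappa(y)|\leq C(1+|y|)$ globally. The principal technical difficulty is the bookkeeping across the four sign cases: the Markov thresholds and the constant $\alpha$ must be chosen consistently so that the probability mass in the intersections survives subtraction and beats the $O(1/y^2)$ tail on the complementary side.
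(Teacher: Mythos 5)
Your proof is correct and follows essentially the same route as the paper's: first establish local boundedness of $\kappa$, then derive a contradiction for $|y|$ large from the parity equality $\mu_+(\eta\geq y+\kappa\Delta)=\mu_-(\eta\geq y+\kappa\Delta)$ by combining a Markov tail bound on $\eta$ (using $\int\eta^2\,\dd\mu_\pm<\infty$) with the fact that $|\Delta|$ stays away from $0$ with high $\mu_\pm$-probability. The only cosmetic differences are that you deduce the bounded-window estimate from the semicontinuity of $\kappa^\pm$ in Proposition~\ref{prp:intervalle} (the paper reproves local boundedness directly by a monotonicity argument), and you quantify the tail via a fixed $\alpha$-level set of $|\Delta|$ where the paper constructs the explicit region $A_L$.
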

\begin{proof}
To prove Lemma \ref{prop:potential}, we begin by remarking that the potential $v$ is in $L^1(\nu^*)$ because of Lemma \ref{lem:def_F} and Lemma \ref{lem:tail_kappa}. 
Let us now show that for almost all $x\in \cX_+$,
\begin{equation}\label{eq:c_concave}
v^c(x) \defeq \sup_{y\in \R} (v(y)-c(x,y)) = v(f^*(x))- c(x,f^*(x)). 
\end{equation}
Let $x\in \cX_+$ be a point such that $y\mapsto g_y^{\kappa(y)}(x)$ is nonincreasing (almost all points satisfy this condition by nestedness). 
 We remark that $\partial_y c(x,y) = \frac{2(y-\eta(x))}{\Delta(x)}$. Hence,
\begin{align*}
c(x,y)-c(x,f^*(x)) &= \int_{f^*(x)}^y \frac{2(t-\eta(x))}{\Delta(x)}\dd t \\
&= -2\int_{f^*(x)}^y \p{\frac{\eta(x)-t}{\Delta(x)}-\kappa(t)}\dd t  -2\int_{f^*(x)}^y \kappa(t) \dd t \\
&= -2\int_{f^*(x)}^y \p{\frac{\eta(x)-t}{\Delta(x)}-\kappa(t)}\dd t  + v(y)-v(f^*(x)).
\end{align*}
Assume that $y\geq f^*(x)$. For $t\in (f^*(x),y]$, by definition of $f^*$ and by nestedness, $g_t^{\kappa(t)}(x)=0$. Thus, 
\[ \frac{\eta(x)-t}{\Delta(x)}-\kappa(t)< 0.\]
This implies that
\[ - 2\int_{f(x)}^y \p{\frac{\eta(x)-t}{\Delta(x)}-\kappa(t)}\dd t \geq 0.\]
We obtain that
\[ c(x,y)-c(x,f^*(x)) \geq v(y)-v(f^*(x)).\]
The same result holds when $y<f^*(x)$. Indeed, in that case, for all $t\in [y,f^*(x))$, 
\[ \frac{\eta(x)-t}{\Delta(x)}-\kappa(t)\geq 0. \]
Hence, 
\[ - 2\int_{f^*(x)}^y \p{\frac{\eta(x)-t}{\Delta(x)}-\kappa(t)}\dd t = 2\int_y^{f^*(x)} \p{\frac{\eta(x)-t}{\Delta(x)}-\kappa(t)}\dd t \geq 0.\]
This proves \eqref{eq:c_concave}. This relation implies that the $\bc$-transform of the function $v\in L^1(\nu^*)$ is a function $w:\Omega\to \R$ satisfying for $\mu_+$-almost all $x\in \cX_+$ (with $\x=(\eta(x),\Delta(x))$)
\[ w(\x) = v(f^*(x))- c(x,f^*(x)) = v(\f^*(\x))-\bc(\x,\f^*(\x)). \]

As $v\in L^1(\nu^*)$, Kantorovich duality  implies
\begin{align*}
    \OT_{\bc}(\mup,\nu^*) &\geq \int v(y)\dd\nu^*(y) -\int w(\x) \dd \mup(\x)= \int v(\f^*(\x))\dd\mup(\x)-\int w(\x) \dd \mup(\x) \\
    &= \int \bc(\x,\f^*(\x)) \dd \mup(\x),
\end{align*} 
see  \Cref{sec:OT}. 
This shows that $\f^*$ is the optimal transport map between $\mup$ and $\nu^*$. The same holds for $\mum$, where we use the potential $-v$ instead of $v$: precisely, we can show that we have for almost all $x\in \cX_-$
\begin{equation}\label{eq:c_concave_minus}
(-v)^c(x) \defeq \sup_{y\in \R} (-v(y)-c(x,y)) = -v(f^*(x))- c(x,f^*(x)). 
\end{equation}
This concludes the proof of Lemma \ref{prop:potential}.
\end{proof}

Lemma \ref{prop:potential} shows that $\f^*$ defines an optimal transport map from $\mup$ to $\nu^*$, and from $\mum$ to $\nu^*$. To conclude the proof of Theorem \ref{thm:awareness}, it remains to show that $\nu^*$ is solution to the barycenter problem described in Lemma \ref{lem:reducOT}.

\begin{lemma}\label{lem:nu_star_barycenter}
The distribution $\nu^*$ is solution to the barycenter problem described in Lemma \ref{lem:reducOT}.
\end{lemma}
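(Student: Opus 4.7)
The plan is to exploit Kantorovich duality with the explicit potentials $v$ and $-v$ identified in \Cref{prop:potential}. Specifically, I will use the fact that the pair $(v,-v)$ provides a lower bound for the sum $\OT_{\bc}(\mup,\nu)+\OT_{\bc}(\mum,\nu)$ that does \emph{not} depend on $\nu$, and that this bound is attained when $\nu=\nu^*$.

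First, I would write, for any candidate $\nu\in \cP(\R)$, the two duality inequalities
\begin{align*}
\OT_{\bc}(\mup,\nu) &\geq \int v(y)\dd \nu(y) - \int v^{\bc}(\x)\dd \mup(\x), \\
\OT_{\bc}(\mum,\nu) &\geq \int (-v)(y)\dd \nu(y) - \int (-v)^{\bc}(\x)\dd \mum(\x),
\end{align*}
which are valid because $v\in L^1(\nu^*)$ by Lemma \ref{lem:def_F} and Lemma \ref{lem:tail_kappa} (and the same is true for $-v$, which in turn means these are legitimate admissible potentials for the dual problem in general, as they are bounded by integrable functions against $\nu$ by a routine estimate using $|v(y)|\leq C(1+y^2)$). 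Summing the two inequalities, the $\nu$-integrals cancel out, yielding
\begin{equation*}
\OT_{\bc}(\mup,\nu)+\OT_{\bc}(\mum,\nu) \geq -\int v^{\bc}(\x)\dd \mup(\x) - \int (-v)^{\bc}(\x)\dd \mum(\x).
\end{equation*}

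Next, I would show that this lower bound is attained at $\nu=\nu^*$. By \Cref{prop:potential}, $\f^*$ is an optimal transport map from $\mup$ to $\nu^*$ with Kantorovich potential $v$, so
\begin{equation*}
\OT_{\bc}(\mup,\nu^*)=\int v(y)\dd \nu^*(y)-\int v^{\bc}(\x)\dd\mup(\x),
\end{equation*}
and similarly $\OT_{\bc}(\mum,\nu^*)=\int (-v)(y)\dd \nu^*(y)-\int (-v)^{\bc}(\x)\dd\mum(\x)$. Adding these two equalities and using that $(v)+(-v)=0$, the $\nu^*$-integrals cancel, leaving precisely the lower bound above. Hence $\nu^*$ achieves the minimum of $\nu\mapsto \OT_{\bc}(\mup,\nu)+\OT_{\bc}(\mum,\nu)$ over $\cP(\R)$, which is exactly the content of Lemma \ref{lem:nu_star_barycenter}.

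The only subtlety I anticipate is checking that $v$ and $-v$ are genuinely admissible dual variables, i.e.\ that they are in $L^1$ of all the relevant measures so that the duality inequality applies verbatim; but this is a routine integrability check using the linear growth of $\kappa$ from Lemma \ref{lem:tail_kappa} and the second moment bound on $\nu^*$ from Lemma \ref{lem:def_F}. Once this is dispatched, the cancellation structure of the pair of potentials $(v,-v)$ makes the barycenter optimality essentially immediate.
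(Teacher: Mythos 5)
Your argument is correct, and it takes a route that is genuinely different from (and arguably more direct than) the paper's. The paper does not apply Kantorovich duality to $\OT_{\bc}(\mupm,\nu)$ for an arbitrary $\nu$; instead it recycles the multi-marginal reduction $\inf_\nu \OT_{\bc}(\mup,\nu)+\OT_{\bc}(\mum,\nu)=\OT_{\C}(\mup,\mum)$ established in Step 1 of the proof of Theorem \ref{thm:ot_predictor}, and then exhibits the pair $\phi(\x_1)=\bc(\x_1,\f^*(\x_1))-v(\f^*(\x_1))$, $\psi(\x_2)=\bc(\x_2,\f^*(\x_2))+v(\f^*(\x_2))$, checks $\phi(\x_1)+\psi(\x_2)\leq \C(\x_1,\x_2)$, and uses this as a dual-feasible pair for the \emph{single} two-to-two transport problem $\OT_{\C}(\mup,\mum)$. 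The chain $\OT_\C(\mup,\mum)\geq \int\psi\dd\mum+\int\phi\dd\mup = \OT_{\bc}(\mup,\nu^*)+\OT_{\bc}(\mum,\nu^*)\geq \OT_\C(\mup,\mum)$ then closes the argument. In contrast, you bypass the $\C$-cost reformulation entirely and plug the antipodal pair $(v,-v)$ directly into the two one-to-one dual problems, exploiting the observation that the $\int v\,\mathrm{d}\nu$ terms cancel, producing a $\nu$-free lower bound that is saturated at $\nu^*$ by Lemma \ref{prop:potential}. This is a cleaner certificate of optimality. Each approach buys something: yours avoids invoking the multimarginal machinery again, whereas the paper's keeps all integrability checks against the \emph{fixed} measures $\mup,\mum$, for which second-moment control was already established. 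Your version does need one small patch that you only wave at: $v\in L^1(\nu)$ for an arbitrary competitor $\nu$ is not automatic. The cleanest fix is to note that $|d|$ is bounded on $\supp(\mupm)$, so $\bc(\x,y)\geq (h-y)^2/M$; combined with the finiteness of $\int h^2/|d|\,\dd\mupm$, this forces any $\nu$ with finite objective value to have a finite second moment, at which point your estimate $|v(y)|\leq C(1+y^2)$ (from Lemma \ref{lem:tail_kappa}) gives $v\in L^1(\nu)$. One should also confirm $v^{\bc}\in L^1(\mup)$ and $(-v)^{\bc}\in L^1(\mum)$, which follows from $v^{\bc}(\x)\geq -\bc(\x,0)$ on one side and $v^{\bc}(\x)=v(\f^*(\x))-\bc(\x,\f^*(\x))$ $\mup$-a.e.\ on the other, together with $v\in L^1(\nu^*)$; this is implicit in your reliance on Lemma \ref{prop:potential} but deserves a sentence.
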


\begin{proof}
Let $\phi:\x_1\in \Omega\mapsto \bc(\x_1,\f^*(\x_1))-v(\f^*(\x_1))$ and let $\psi:\x_2\in \Omega\mapsto \bc(\x_2,\f^*(\x_2)) + v(\f^*(\x_2))$. Using \eqref{eq:c_concave} and \eqref{eq:c_concave_minus}, we see that for all $y\in \R$, for  $\mup$-almost all $\x_1$ and $\mum$-almost all $\x_2$, it holds that
\begin{align*}
\phi(\x_1)+\psi(\x_2) &= \bc(\x_1,\f^*(\x_1))-v(\f^*(\x_1))+ \bc(\x_2,\f^*(\x_2)) + v(\f^*(\x_2))\\
&\leq \bc(\x_1,y)-v(y) + \bc(\x_2,y)+v(y) \\
&= \bc(\x_1,y)+\bc(\x_2,y).
\end{align*}
By taking the value $y$ that minimizes this last term, we obtain that
\[ \phi(\x_1)+\psi(\x_2)\leq \C(\x_1,\x_2),\]
where $\C$ is the cost function defined in \eqref{eq:cost_C}. In particular, $-\phi(\x_1)\geq \psi^\C(\x_1)$.
Furthermore, remark that
\[  -v(\f^*(\x_1))\leq \phi(\x_1)\leq c(\bx_1,0). \]
Thus, as $v\in L^1(\nu^*)$ and $\int \frac{h^2}{d}\dd \mup(h,d)<+\infty$, it holds that $\phi\in L^1(\mup)$. Likewise, $\psi\in L^1(\mum)$.  
By Kantorovich duality, it holds that
\begin{align*}
\OT_{\C}(\mup,\mum) &\geq \int \psi(\x_2)\dd \mum(\x_2)-\int \psi^\C(\x_1) \dd \mup(\x_1)\\
&\geq \int \psi(\x_2)\dd \mum(\x_2)+\int \phi(\x_1) \dd \mup(\x_1)\\
&= \int \bc(\x_1,\f^*(\x_1))\dd \mup(\x_1)+ \int \bc(\x_2,\f^*(\x_2))\dd \mum(\x_2) \\
&\quad - \int v(\f^*(\x_1))\dd \mup(\x_1) + \int v(\f^*(\x_2))\dd \mum(\x_2)  \\
&= \OT_{\bc}(\mup,\nu^*) + \OT_{\bc}(\mum,\nu^*) + \int v(y)\dd \nu^*(y)-\int v(y)\dd \nu^*(y) \\
&= \OT_{\bc}(\mup,\nu^*) + \OT_{\bc}(\mum,\nu^*) \geq \OT_\C(\mup,\mum).
\end{align*}
This proves that $\nu^*$ is the solution to the barycenter problem, and that $f^*$ is an optimal regression function.
\end{proof}

\section{Building examples and counterexamples}\label{sec:examples}
In the previous section, we  proved that under mild assumptions, the relationship $g^*_y(x) = \ones \{f^*(x) \geq y\}$ only holds under the nestedness assumption. In this section, we now explain how to build large classes of triplets $(X,Y,S)\in \cX\times \R\times \{1,2\}$ whose distributions $\P$ either satisfy or do not satisfy this criterion. The starting point of our approach consisted in associating to each distribution $\P$ a pair of distributions $(\mup,\mum)=(\mup(\P),\mum(\P))$ on $\Omega$, where we recall that $\mupm(\P)$ is the distribution of $(\eta(X),  \Delta(X) )$ when $X \sim \mu_{\pm}$. Then, both the optimal fair regression function and the nestedness criterion are best understood in terms of the pair  $(\mup(\P),\mum(\P))$.

However, given a pair of measure $(\mup,\mum)$ on $\Omega$, it is not a priori clear whether there exists a triplet $(X,Y,S)\sim \P$ with $(\mup,\mum)=(\mup(\P),\mum(\P))$. We give a definitive answer to this problem by providing a list of necessary and sufficient conditions for the existence of such a probability distribution $\P$. We then use this theoretical result to build probability distributions $\P$ for which the associated fair classification problem is either nested or not nested.

Let $\P$ be the distribution of a triplet $(X,Y,S)\in \cX\times \R\times \{1,2\}$, with $\E[Y^2]<+\infty$. Let $(\mup,\mum)=(\mup(\P),\mum(\P))$ be the associated pair of measures on $\Omega$.  Then, it always holds that
\[ \int_\Omega |d|^{-1}\dd \mup(h,d) = \int_{\cX} \frac{1}{\Delta(x)} \dd \mu_+(x) = \int_{\cX}\frac{\dd \mu}{\dd \mu_+}(x) \dd \mu_+(x) =\mu(\cX_+), \]
while $\int_\Omega| d|^{-1}\dd \mum(h,d)=\mu(\cX_-)$. In particular, 
\begin{equation}\label{eq:bound_on_inverse_d}
  0< \int_\Omega |d|^{-1}\dd \mup(h,d)+\int_\Omega |d|^{-1}\dd \mum(h,d) \leq 1. 
\end{equation}
Also, note that $\mu=p_1\mu_1+p_2\mu_2$, so that $\Delta(x) = \frac{\dd\mu_+}{\dd\mu}(x)\leq \frac{1}{p_1m}$ when $x \in \cX_+$, whereas $ \Delta(x)\geq- \frac{1}{p_2m}$ when $x\in \cX_-$ (recall that $m$ is the mass of the measure $(\mu_1-\mu_2)_+$). In particular, the supports of $\mup$ and $\mum$ are located in an horizontal strip of the form $\{(h,d):\ -M\leq d\leq M\}$ for some $M>0$. The next proposition states that these two conditions are actually sufficient for the existence of a probability measure $\P$ with $(\mup,\mum)=(\mup(\P),\mum(\P))$.

\begin{proposition}\label{prop:how_to_build_examples}
Assume that $\cX$ is an uncountable standard Borel space (e.g., $\cX=[0,1]$). Then, the set of pairs of measures $(\mup(\P),\mum(\P))$ that can be obtained from a distribution $\P$ of a triplet  $(X,Y,S)\in \cX\times \R\times \{1,2\}$ with $\E[Y^2]<\infty$ is exactly equal to the set of  pairs $(\mup,\mum)$ supported on bounded horizontal strips, satisfying \Cref{eq:bound_on_inverse_d}, with $\mup$ supported on $\{d>0\}$ and $\mum$ supported on $\{d<0\}$.
\end{proposition}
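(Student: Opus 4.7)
The implications already made explicit in the discussion preceding the proposition constitute the necessity half of the characterization, so the plan is to prove sufficiency by reverse-engineering a distribution $\P$ from a pair $(\mup,\mum)$ satisfying the hypotheses. The key observation is that once a partition $\cX = \cX_+\sqcup\cX_-\sqcup\cX_{=}$ together with Borel isomorphisms identifying $\cX_\pm$ with the supports of $\mupm$ is chosen, the measures $\mu_+,\mu_-,\mu,\mu_1,\mu_2$ and the function $\eta$ are forced by the required compatibility relations, and the construction then assembles into a valid $\P$.

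Concretely, the plan is to set $a := \int|d|^{-1}\dd\mup$, $b := \int|d|^{-1}\dd\mum$, $c := 1-a-b\geq 0$, fix $p_1=p_2=1/2$, and pick $m\in(0,2/M]$, where $M$ is the $|d|$-bound from the bounded horizontal strip hypothesis. Using that $\cX$ is an uncountable standard Borel space, one can partition it into three Borel subsets $\cX_+,\cX_-,\cX_{=}$ and choose Borel isomorphisms $\iota_\pm:\cX_\pm\to\{(h,d):\pm d>0\}$. Setting $(\eta(x),\Delta(x)):=\iota_\pm(x)$ on $\cX_\pm$ and $\eta\equiv 0$, $\Delta\equiv 0$ on $\cX_{=}$, and defining $\mu_\pm := \iota_\pm^{-1}\sharp\mupm$, one then declares $\mu$ by $\mu|_{\cX_\pm}(\dd x):=|\Delta(x)|^{-1}\mu_\pm(\dd x)$ (whose total mass on $\cX_+\cup\cX_-$ is $a+b$) together with any probability of mass $c$ on $\cX_{=}$. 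The group conditionals are specified through their $\mu$-densities
\[
\frac{\dd\mu_1}{\dd\mu} := 1+\tfrac{m}{2}\Delta,\qquad \frac{\dd\mu_2}{\dd\mu} := 1-\tfrac{m}{2}\Delta,
\]
which are nonnegative by the choice of $m$; finally, set $\P(S=s) := p_s$, $X|S=s\sim\mu_s$, and $Y := \eta(X)$.

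The verification reduces to three short computations: $p_1\mu_1+p_2\mu_2 = \mu$ by inspection; the normalization $\mu_s(\cX)=1$ follows from the cancellation $\int\Delta\,\dd\mu = \mu_+(\cX_+)-\mu_-(\cX_-) = 1-1 = 0$; and $\dd(\mu_1-\mu_2)/\dd\mu = m\Delta$ yields $(\mu_1-\mu_2)_+/m = \mu_+$ and $(\mu_1-\mu_2)_-/m = \mu_-$, so the scaled Jordan decomposition reproduces the prescribed $\mupm$ once pushed forward by $(\eta,\Delta)$. The identity $(\eta,\Delta)\sharp\mu_\pm = \mupm$ is immediate from $\mu_\pm := \iota_\pm^{-1}\sharp\mupm$, giving $(\mup(\P),\mum(\P)) = (\mup,\mum)$.

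The main obstacle is structural rather than algebraic: the construction must be arranged so that $\Delta$ is simultaneously the second coordinate of the map pushing $\mu_+$ to $\mup$ \emph{and} the Radon--Nikodym density $\dd\mu_+/\dd\mu$. This is achieved by defining $\mu$ from $\mu_\pm$ and $\Delta$ rather than the reverse, which is what makes the bookkeeping self-consistent. The bounded-strip hypothesis is precisely what allows the uniform choice of $m$ securing nonnegative densities, while $a+b\leq 1$ leaves room for $\cX_{=}$. The finiteness $\E[Y^2] = \int h^2|d|^{-1}(\dd\mup+\dd\mum) < \infty$ is a further necessary condition (by Jensen applied to $\E[Y|X]$) that should be understood as an additional hypothesis, unless the ``bounded horizontal strip'' is interpreted as constraining both coordinates, in which case it follows automatically.
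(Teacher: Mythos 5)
Your construction is essentially the paper's: both take $p_1=p_2=1/2$, build $\mu$ by reweighting $\mu_\pm$ by $|\Delta|^{-1}$ and depositing the residual mass $1-a-b$ on $\cX_=$, and recover $\mu_1,\mu_2$ via the densities $1\pm\tfrac{m}{2}\Delta$ with $m$ chosen small enough to keep them nonnegative; the only cosmetic difference is that the paper invokes the Borel isomorphism theorem once to identify $\cX$ with $\R^2$ and then reads $\eta$ and $\Delta$ off as coordinate functions, whereas you set up the isomorphisms piecewise. Your closing observation is also correct and points to a real imprecision in the statement of the proposition: the hypothesis $\E[Y^2]<\infty$ forces $\int h^2|d|^{-1}\,\dd\mupm<\infty$ (by Jensen, $\E[\eta(X)^2]\le\E[Y^2]$, combined with the change of measure $\dd\mu=|d|^{-1}\dd\mupm$ on $\cX_\pm$), yet this moment condition is not implied by boundedness of the $d$-coordinate alone and is not listed among the stated conditions, and the paper's own construction sets $Y=\eta(X)$ without verifying that the resulting second moment is finite. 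The characterization should therefore be read with the additional hypothesis $\int h^2|d|^{-1}\,\dd\mupm<\infty$, as you rightly flag.
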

This proposition allows us to easily build examples where either nestedness or nonnestedness is satisfied: one does not need to build from scratch a joint distribution on $\cX\times \R\times \{1,2\}$, but can simply define a pair of measures $(\mup,\mum)$ on $\Omega$. As long as this pair satisfies the conditions given in Proposition \ref{prop:how_to_build_examples}, the existence of a probability distribution $\P$ such that $(\mup,\mum)=(\mup(\P),\mum(\P))$ is ensured. 
\medskip

\begin{proof}
We have already established that the  pairs of measures $(\mup(\P),\mum(\P))$ satisfy the conditions stated in Proposition \ref{prop:how_to_build_examples}. Reciprocally, consider a pair $(\mup,\mum)$ satisfying \Cref{eq:bound_on_inverse_d}, supported on bounded horizontal strips, with $\mup$ supported on $\{d>0\}$ and $\mum$ supported on $\{d<0\}$. 
Let $a_{\pm} = \int_\Omega |d|^{-1}\dd \mupm(h,d)$. 

Due to the Borel isomorphism theorem, $\cX$ is Borel isomorphic to $\R^2$, so we may assume without loss of generality that $\cX=\R^2$.  Let $\cX_+ = \{(h,d)\in \R^2:\ d>0\}$, $\cX_-=\{(h,d)\in \R^2:\ d<0\}$ and $\cX_{=} = \{(h,0):\ h\in \R\}$. Let $\mu_==\delta_0$. We let $\mu_+ = \mup$ and $\mu_-=\mum$.

Let
\begin{equation}
    \dd\mu(h,d) = \frac{1}{|d|}\dd\mu_+(h,d) + \frac{1}{|d|}\dd\mu_-(h,d) + (1-a_+-a_-)\dd\mu_=(h,d).
\end{equation}
Remark that $\mu$ is a probability measure:
\begin{align*}
    \int \dd \mu& = \int  \frac{1}{|d|}\dd \mu_+(h,d) + \int\frac{1}{|d|}\dd\mu_-(h,d) + (1-a_+-a_-)\int\dd\mu_= =1.
\end{align*} 
Consider $m$ small enough so that the inequality $m|d|/2\leq 1$  holds on the support of $\mu$ (this is possible because the $d$ coordinate is bounded in the support of $\mup$ and $\mum$). We define
\begin{equation}
    \dd \mu_1(h,d)  = (1+\frac{m}{2}d) \dd \mu(h,d) \quad \text{and} \quad  \dd \mu_2(h,d)  = (1-\frac{m}{2}d) \dd \mu(h,d)
\end{equation}
Note that $\mu = \frac 12\mu_1+\frac 12 \mu_2$. Also, $\mu_1$ and $\mu_2$ are probability measures, as
\[ \int d \dd \mu = \int \frac{d}{|d|} \dd \mu_+(h,d) + \int \frac{d}{|d|} \dd \mu_-(h,d) = 1-1=0.  \]
Let $\eta(h,d)=h$.  
We define the triplet $(X,Y,S)$ by letting $S$ be uniform on $\{1,2\}$. If $S=1$, we draw $X\sim \mu_1$ and let $Y=\eta(X)$. If $S=2$, we draw $X\sim \mu_2$ and let $Y=\eta(X)$. Let $\P$ be the distribution of $(X,Y,S)$. One can easily check that $(\mup(\P),\mum(\P))=(\mup,\mum)$, as desired.
\end{proof}

To build examples of nested and non-nested problems, we consider probability measures $\mup$ and $\mum$ supported on small horizontal segments:
\begin{equation}\label{eq:simple_mupm}
    \mupm = \frac{1}{K} \sum_{i=1}^K \nu_{\pm}^{(i)}
\end{equation}
where $\nu_\pm^{(i)}$ is the uniform measure on $[a^{(i)}_\pm,a^{(i)}_\pm+1]\times \{d_\pm^{(i)}\}$. 


\begin{figure}
    \centering
    \includegraphics[width=0.45\linewidth]{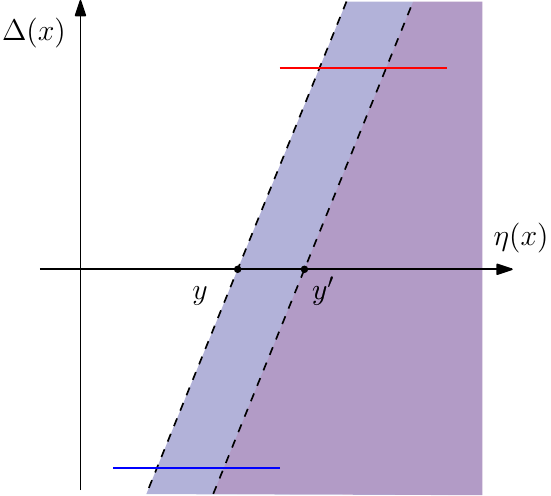}
    \hfill 
        \includegraphics[width=0.45\linewidth]{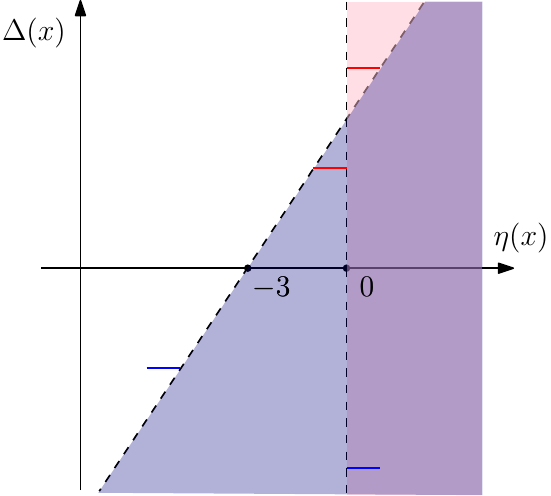}
    \caption{Left: example of a nested problem.  The distributions of $\mup$ and $\mum$ are depicted in red and blue, corresponding to the distributions given in Example \ref{ex:1}. The acceptance region for $g_y^{\kappa(y)}$ and $g_{y'}^{\kappa'(y)}$ are so that the masses of $\mup$ and $\mum$ to the right of the decision boundaries are equal. One can observe that these two regions are nested. Right: example of a non-nested problem. The distributions $\mup$ and $\mum$ are the ones described in Example \ref{ex:2}. The region in pink is rejected for $y=-3$ but accepted for $y=0$, contradicting the nestedness assumption.}
    \label{fig:example_nestedness}
\end{figure}

\begin{example}[A nested classification problem]\label{ex:1}
    Take $K=1$, $d_+^{(1)}=d_-^{(1)}=1$ and $a^{(1)}_+=0$, $a^{(1)}_-=-1$. Let $\P$ be the probability associated with the pair $(\mup,\mum)$ defined for this choice of parameters. Then, it holds that $1/2 \in I(y)$ for all $y\in \R$. By choosing $\kappa(y)=1/2$ for all $y\in \R$, we see that the classification problem associated with $\P$ is nested. See also Figure \ref{fig:example_nestedness}.
\end{example}

\begin{example}[A non-nested classification problem]\label{ex:2}
    Take $K=2$. Let $d_+^{(1)}=d_-^{(1)}=1$ and $a_+^{(1)}=a_-^{(1)}=0$. Let $d_+^{(2)}=d_-^{(2)}=1/2$, and $a_+^{(2)}=-1$, $a_-^{(2)}=-6$. Then, for $y=0$, $I(y)=\{0\}$, so the support of $\nu_+^{(2)}$ is to the left of the classification threshold for $y=0$. But for $y=-3$, we have $I(y)=\{4 \}$ and the support of $\nu_+^{(2)}$  is to the right of the classification threshold. Hence, the classification problem is non-nested. See also Figure \ref{fig:example_nestedness}.
\end{example}


\section{Conclusion and future work}
This work presents the first theoretical characterization of the optimal fair regression function as the solution to a barycenter problem with an optimal transport cost. Our results also demonstrate that, under the nestedness assumption, the optimal fair regression function can be represented by the family of classifiers $g_y^{\kappa(y)}$. Although both approaches—whether based on optimal transport or cost-sensitive classifiers—depend on the underlying distribution 
$\mathbb{P}$ which is generally unknown, they pave the way for developing new algorithms that estimate these unknown quantities from observed data. Designing such estimators, along with bounding their excess risk and potential unfairness, represents a critical step toward the development of fair algorithms.

While this work provides an initial characterization of the optimal fair regression function in the unawareness framework, it also has notable limitations. For instance, our results are currently limited to cases where the sensitive attribute is binary and apply only to univariate regression. Addressing these limitations and extending our findings to more general cases would be a valuable direction for future research.


\section*{Acknowledgements}
The authors gratefully acknowledge valuable and insightful discussions with Evgenii Chzhen and Nicolas Schreuder. S. G. gratefully acknowledges funding from the Fondation Mathématique Jacques Hadamard and from the ANR TopAI chair (ANR–19–CHIA–0001).
\bibliography{biblio}
\bibliographystyle{alpha}

\newpage

\appendix
\section{Additional proofs}
\subsection{Proof of Lemma \ref{lem:regularity}}\label{app:regularity}

Let $f:\R \to \R\cup\{+\infty\}$ be a lower semicontinuous convex function. The domain of such a function is an interval $\dom(f)$. Its right derivative $f'_+$ is defined and finite everywhere on $\dom(f)$, except on the right endpoint of the interval (should the right endpoint be included in $\dom(f)$) where it is equal to $+\infty$. 
Such a function is  upper semicontinuous, with the representation:
\begin{equation}\label{eq:left_derivative}
    \forall h\in \dom(f),\ f'_+(h) = \inf_{u>h}\frac{f(u)-f(h)}{u-h},
\end{equation}
where the infimum can be restricted to a countable dense collection of values $u$ if needed.
Likewise, the right derivative $f'_-$ can be defined on $\dom(f)$, and is lower semicontinuous. Note also that the oscillation function $\osc(f)=f'_+-f'_-\in [0,+\infty]$ can be defined on $\dom(f)$, and is upper semicontinuous. Indeed, only one of $f'_+$ and $f'_-$ can be infinite on $\dom(f)$ (and only at one of the endpoints of the domain), so that the difference is well defined. 

Recall that a function $\phi$ is $\C$-convex if
\begin{equation}\label{eq:def_phi}
\forall (h,d)\in \Omega,\ \phi(h,d) = \sup_{(h',d')\in \Omega}\p{ \phi^{\C}(h',d') - \frac{(h-h')^2}{|d|+|d'|}}.
\end{equation}
The function $\phi$ is lower semicontinuous as a supremum of continuous functions. Furthermore,  for any $d\neq 0$, the function
\[ \phi_{d}: h\mapsto \phi(h,d)+ \frac{h^2}{|d|} =\sup_{(h',d')\in \Omega}\p{ \phi^{\C}(h',d') + \frac{h^2}{|d|}- \frac{(h-h')^2}{|d|+|d'|} }\]
is convex as a supremum of lower semicontinuous convex functions. 
Let $G:\dom(\phi)\to [0,+\infty]$ be defined as $G(h,d)=\osc(\phi_d)(h) $ for $(h,d)\in \dom(\phi)$.  
Let $L>0$, and consider the  set  $\Sigma_{d,L}$ defined as  the set of points $h\in \dom(\phi_d)$ such that  $G(h,d)\geq L^{-1}$, $(\phi_d)'_+(h)\geq -L$ and $(\phi_d)'_-(h)\leq L$. As the left and right derivatives of $\phi_d$ are nondecreasing, the set $\Sigma_{d,L}$ is finite and its cardinality is bounded by a constant depending only on $L$. 
Let $\Sigma_L = \bigcup_{d\neq 0} \Sigma_{d,L}$ and $\Sigma=\bigcup_{L\in \N}\Sigma_L$. 
The set of points $\x\in \dom(\phi)$ such that $\partial_h\phi(\x)$ does not exist is equal to $\Sigma$. 
 Let us show that for any integer $L$, the set $\Sigma_L$ is included in a countable union of graphs of measurable functions. 
 

 To do so, we use the following general result, see \cite[Corollary 18.14]{Aliprantis1999}. A correspondence $\Phi$ from a measurable set $S$ to a topological space $X$ assigns to each $s\in S$ a subset $\Phi(s)$ of $X$. We say that the correspondence is (weakly) measurable if for each open subset $U\subset X$, the set $\Phi^\ell(U)=\{s\in S:\ \Phi(s)\cap U\neq \emptyset\}$ is measurable.
 
\begin{theorem}[Castaing's theorem]
    If $X$ is a Polish space and $\Phi$ is a measurable correspondence with non-empty closed values  between $S$ and $X$, then there exists a sequence $(f_n)_n$ of measurable functions $S\to X$ such that for every $s\in S$, $\Phi(s)=\overline{\{f_1(s),f_2(s),\cdots\}}$. 
\end{theorem}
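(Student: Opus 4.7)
The plan is to derive Castaing's representation from the Kuratowski--Ryll-Nardzewski measurable selection theorem (KRN) by producing a countable family of measurable selectors whose values are dense in $\Phi(s)$ for every $s$.

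First I would fix a complete metric $d$ compatible with the Polish topology on $X$ and enumerate a countable base $(V_n)_{n\geq 1}$ of that topology, for instance the open balls $B(y_i,r_j)$ with $y_i$ in a countable dense subset and $r_j\in \mathbb{Q}_{>0}$. By the weak measurability of $\Phi$, each set $S_n\defeq \Phi^\ell(V_n)=\{s\in S:\ \Phi(s)\cap V_n\neq\emptyset\}$ is measurable. On the measurable subspace $S_n$, define the restricted correspondence $\Phi_n(s)=\overline{\Phi(s)\cap V_n}$: it has nonempty closed values, and it is measurable on $S_n$ since for any open $W\subset X$,
\[ \{s\in S_n:\ \Phi_n(s)\cap W\neq \emptyset\} = S_n\cap \Phi^\ell(V_n\cap W), \]
which is measurable because $V_n\cap W$ is open.

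Next I would invoke KRN to extract, for each $n$, a measurable selector $g_n:S_n\to X$ of $\Phi_n$; and separately a measurable global selector $g_0:S\to X$ of $\Phi$ itself. Setting
\[ f_n(s)=\begin{cases} g_n(s) & \text{if } s\in S_n,\\ g_0(s) & \text{if } s\in S\setminus S_n,\end{cases} \]
produces a measurable function with $f_n(s)\in\Phi(s)$ for every $s$. To verify the density representation, fix $s\in S$ and $y\in \Phi(s)$. For $\epsilon>0$, choose $y_i$ with $d(y_i,y)<\epsilon/3$ and a rational $r_j\in(\epsilon/3,\epsilon/2)$; then $y\in V_n=B(y_i,r_j)$ for the corresponding index $n$, so $s\in S_n$ and $f_n(s)\in \overline{V_n}\subset \bar B(y_i,r_j)$, whence $d(f_n(s),y)\leq d(f_n(s),y_i)+d(y_i,y)<\epsilon$. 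Since $\Phi(s)$ is closed and contains every $f_n(s)$, this yields $\Phi(s)=\overline{\{f_n(s):n\geq 1\}}$.

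The main obstacle is the appeal to KRN, which is the genuinely nontrivial input; once taken as a black box, the rest of the argument is topological bookkeeping on a countable base. If KRN has to be established from scratch, the standard route is an inductive construction: build measurable maps $h_k:S\to X$ with $d(h_k(s),\Phi(s))<2^{-k}$ and $d(h_k(s),h_{k-1}(s))<2^{-k+2}$ by partitioning $S$ along the measurable sets $\Phi^\ell(B(y_i,2^{-k}))$ and choosing, on each piece, a ball that is within $2^{-k+1}$ of $h_{k-1}(s)$ and meets $\Phi(s)$; the resulting sequence is pointwise Cauchy, so by completeness of $d$ and closedness of $\Phi(s)$ its limit is a measurable selector. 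All measurability issues in that bootstrap are controlled by the assumed measurability of the inverse-image sets $\Phi^\ell(V_n)$.
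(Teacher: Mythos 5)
Your argument is correct, but note that the paper does not prove this statement at all: it is quoted as a known result with a citation to Aliprantis--Border (Corollary 18.14), and is used as a black box in the proof of Lemma \ref{lem:regularity}. What you have written is the standard derivation of Castaing's representation from the Kuratowski--Ryll-Nardzewski selection theorem, and the details check out: the identity $\{s\in S_n:\ \overline{\Phi(s)\cap V_n}\cap W\neq\emptyset\}=S_n\cap\Phi^\ell(V_n\cap W)$ holds because $W$ is open (a point of the closure lying in $W$ forces $W$ to meet $\Phi(s)\cap V_n$), so each restricted correspondence $\Phi_n$ is weakly measurable with nonempty closed values and KRN applies; the gluing with a global selector $g_0$ keeps each $f_n$ a selector of $\Phi$, and the $\epsilon/3$-argument over the ball base gives density, while closedness of $\Phi(s)$ gives the reverse inclusion. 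The only caveat is the one you flag yourself: the entire weight rests on KRN, which is of comparable depth to Castaing's theorem itself; your closing sketch of the KRN bootstrap (successive measurable approximations $h_k$ with $d(h_k(s),\Phi(s))<2^{-k}$, Cauchy by construction, limit in $\Phi(s)$ by completeness and closedness) is the standard one and would need the usual bookkeeping on the partition refinement to be fully rigorous. In short: the paper buys the result by citation; you reprove it correctly by the classical route, at the cost of importing KRN.
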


Let $\Phi$ be the correspondence that assigns to each $d\neq 0$ the subset $\Sigma_{d,L}\cup\{0\}$ of $\R$. As each set $\Sigma_{d,L}$ is finite, this correspondence takes non-empty closed values. If we show that this correspondence is measurable, then Castaing theorem asserts the existence of a sequence of measurable functions $(f_n)_n$ such that for every $d\neq 0$, $\Sigma_{d,L}\cup\{0\} = \overline{\{f_1(d),f_2(d),\cdots\}}$. For each $d$, the set $\Sigma_{d,L}$ is finite, so that $\Sigma_{d,L}\cup\{0\} =  \{f_1(d),f_2(d),\cdots\}$, implying that $\Sigma_L$ is included in a countable union of graphs of measurable functions. It remains to show the measurability of $\Phi$.


\begin{lemma}\label{lem:meas}
 The function $G$ is measurable.
\end{lemma}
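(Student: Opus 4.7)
The plan is to express both one-sided derivatives of $h\mapsto \phi_d(h)$ as pointwise monotone limits of jointly Borel functions in $(h,d)$, and then deduce measurability of their difference. First, I would observe that since $\phi:\Omega\to \R\cup\{+\infty\}$ is lower semicontinuous it is Borel measurable, and therefore the map $(h,d)\mapsto \phi_d(h)=\phi(h,d)+h^2/|d|$ is Borel on $\dom(\phi)$, because it is an algebraic combination of $\phi$ with the continuous (hence Borel) function $(h,d)\mapsto h^2/|d|$ on $\Omega$.

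Next, I would use the convexity of $\phi_d$ (recalled just before the lemma) to realize the right derivative as a decreasing limit of explicit Borel functions. For each integer $n\geq 1$, set
\begin{equation*}
R_n^+(h,d)=n\bigl[\phi_d(h+1/n)-\phi_d(h)\bigr].
\end{equation*}
This is Borel on $\dom(\phi)$ with values in $\R\cup\{+\infty\}$, since $(h,d)\mapsto \phi_d(h+1/n)$ is Borel as the composition of $\phi_d$ with the continuous shift $(h,d)\mapsto(h+1/n,d)$. By convexity of $\phi_d$, the difference quotient $\epsilon\mapsto (\phi_d(h+\epsilon)-\phi_d(h))/\epsilon$ is nondecreasing in $\epsilon>0$, so $(R_n^+(h,d))_n$ is nonincreasing in $n$ and converges pointwise to $(\phi_d)'_+(h)$. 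Therefore $(h,d)\mapsto (\phi_d)'_+(h)$ is the infimum of countably many Borel functions, hence Borel on $\dom(\phi)$. Running the same argument with
\begin{equation*}
R_n^-(h,d)=n\bigl[\phi_d(h)-\phi_d(h-1/n)\bigr],
\end{equation*}
which is nondecreasing in $n$ with $\sup_n R_n^-(h,d)=(\phi_d)'_-(h)$, yields Borel measurability of $(h,d)\mapsto (\phi_d)'_-(h)$.

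I would conclude by writing $G(h,d)=(\phi_d)'_+(h)-(\phi_d)'_-(h)$. As emphasized in the text immediately preceding the lemma, on $\dom(\phi)$ at most one of the two one-sided derivatives can be infinite, and only at an endpoint of $\dom(\phi_d)$; hence no $\infty-\infty$ indeterminacy arises and $G$ is a well-defined $[0,+\infty]$-valued function. The difference of two $[-\infty,+\infty]$-valued Borel functions is Borel on the set where it is well defined, so $G$ is Borel measurable on $\dom(\phi)$, as required. The only genuine subtlety in this argument is the boundary behaviour that underlies well-definedness of the subtraction, and it is already handled by the convex-function facts quoted in the text; no deeper obstacle appears.
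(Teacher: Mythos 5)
Your proof is correct and takes essentially the same route as the paper: both express $(\phi_d)'_+$ and $(\phi_d)'_-$ as countable monotone limits (infima/suprema) of jointly Borel difference-quotient functions, using convexity of $\phi_d$ for the monotonicity, and then take the difference, invoking the fact quoted just before the lemma that no $\infty-\infty$ ambiguity can occur. The only cosmetic difference is that you use the explicit sequence $u = h + 1/n$ while the paper's equation \eqref{eq:left_derivative} says the infimum can be restricted to a countable dense set of values $u>h$; both give a countable family of Borel functions whose monotone limit is the one-sided derivative.
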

\begin{proof}
The representation \eqref{eq:left_derivative} implies that $(h,d)\mapsto (\phi_d)'_+(h)$ is given by a countable infimum of measurable functions, and is therefore measurable. Likewise, $(h,d)\mapsto (\phi_d)'_-(h)$ is measurable, so that $G$ is also measurable.
\end{proof}

Let $U\subset \R$ be an open set.  If $0\in U$, then $\Phi^\ell(U)=\R\backslash\{0\}$ is measurable. If $0\not \in U$, we have
\[ \Phi^\ell(U) =\{d\neq 0:\ \exists h\in [-L,L]\cap U,\ G(h,d)\geq L^{-1},\ (\phi_d)'_+(h)\geq -L,\  (\phi_d)'_-(h)\leq L \}. \]
This set is the projection on the $d$-axis of the measurable set 
\[
B=\{(h,d)\in \Omega:\ h\in [-L,L]\cap U,\ G(h,d)\geq L^{-1},\ (\phi_d)'_+(h)\geq -L,\  (\phi_d)'_-(h)\leq L \}
\]
Furthermore, for each $d$, the section $\{h\in \R:\ (h,d)\in B\}=\Sigma_{d,L}$ is compact. By \cite[Theorem 4.7.11]{srivastava2008course}, this implies that $\Phi^\ell(U)$ is measurable, concluding the proof of Lemma \ref{lem:regularity}.

\subsection{Proof of Proposition \ref{prp:classif}}\label{subsec:proof_classif}

Classical manipulations show that the risk $\cR_y(g)$ of a classifier $g$ can be expressed as
\begin{align*}
\mathcal{R}_y(g) &= y\mathbb{E}\left[(1-Y)g(X)\right] + (1-y) \mathbb{E}\left[Y(1-g(X))\right]\\
& = (1-y)\mathbb{E}[Y] + \mathbb{E}\left[g(X)\left(y-\eta(X)\right)\right].
\end{align*}
Using the definition of $\mu_+$, $\mu_-$ and $\Delta$ given in Section \ref{sec:barycenter}, we find that 
\begin{align*}
    &\mathbb{E}\left[g(X)\left(y-\eta(X)\right)\right]\\
    \qquad &= \int_{\mathcal{X}_+} g(x)(y-\eta(x))\frac{\dd\mu}{\dd\mu_+}(x) \dd\mu_+(x) + \int_{\mathcal{X}_-} g(x)(y-\eta(x))\frac{\dd\mu}{\dd\mu_-}(x) \dd\mu_-(x) \\
    \qquad &\qquad+ \int_{\mathcal{X}_=} g(x)(y-\eta(x)) \dd\mu(x)\\
    \qquad &= \int_{\mathcal{X}_+} g(x)\frac{y-\eta(x)}{|\Delta(x)|} \dd\mu_+(x) + \int_{\mathcal{X}_-} g(x)\frac{y-\eta(x)}{|\Delta(x)|} \dd\mu_-(x) + \int_{\mathcal{X}_=} g(x)(y-\eta(x)) \dd\mu(x).
\end{align*}
Moreover, Lemma \ref{lem:EvNico} implies that the demographic parity constraint is equivalent to the constraint $\mathbb{E}_{X\sim \mu_+}[g(X)] = \mathbb{E}_{X\sim \mu_-}[g(X)]$. Using the decomposition $g = \cF(g_+, g_-, g_=)$, we see that the fair classification problem can be rephrased as follows
\begin{equation}\tag{$C_y'$}
        \begin{cases}
             \text{minimize} & \mathbb{E}_{ \mu_+}\left[g_+(X)\frac{y-\eta(X)}{\Delta(X)}\right] -\mathbb{E}_{ \mu_-}\left[g_-(X)\frac{y-\eta(X)}{\Delta(X)}\right]  \\
             &\quad + \mathbb{E}_{ \mu}\left[\mathds{1}_{\cX_=}
             (X)g_+(X)(y-\eta(X))\right] \\
             \text{such that} & \mathbb{E}_{ \mu_+}[g_+(X)] = \mathbb{E}_{ \mu_-}[g_-(X)].
        \end{cases}
\end{equation}
The following lemma characterizes the solutions to the problem $(C_y')$.

\begin{lemma}\label{lem:aux_classif}
Under Assumption \ref{ass:continuity}, for any optimal classifier $g$, there exist $\kappa^+$, $\kappa^-$ such that $g= \cF(g^{\kappa^+}, g^{\kappa^-}, g_=)$, with 
\begin{align*}
    &g_=(x) = \mathds{1}\{\eta(x) > y\} \quad \text{ or }\quad g_=(x) = \mathds{1}\{\eta(x) \geq y\},\\
    \text{and}\quad   &g^{\kappa}(x) = \mathds{1}\left\{\eta(x) \geq y+ \kappa \Delta(x)\right\}.
\end{align*}
\end{lemma}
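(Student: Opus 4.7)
The plan is to exploit that the partition $\cX = \cX_+ \sqcup \cX_- \sqcup \cX_=$ lets the objective in $(C_y')$ split additively into three terms---one for each of $g_+$, $g_-$, $g_=$---while the fairness constraint couples only $g_+$ and $g_-$. Consequently, the optimization over $g_=$ decouples completely, and its optimal value is obtained by pointwise minimization of $g_=(x)(y-\eta(x))$ on $\cX_=$: this yields $g_=(x)=1$ when $\eta(x)>y$, $g_=(x)=0$ when $\eta(x)<y$, and indifference on $\{\eta=y\}\cap\cX_=$, so any optimal $g_=$ agrees $\mu$-almost surely with either $\mathds{1}\{\eta(x)>y\}$ or $\mathds{1}\{\eta(x)\geq y\}$.

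For the coupled subproblem in $(g_+,g_-)$, I would fix the common accepted-mass value $c=\E_{\mu_+}[g_+(X)]=\E_{\mu_-}[g_-(X)]$ attained by the given optimum. Once $c$ is fixed, the two optimizations fully decouple, and each becomes a Neyman--Pearson problem: for $g_+$, minimize $\E_{\mu_+}[g_+(X)T(X)]$ with $T(x)=(y-\eta(x))/\Delta(x)$ under $\E_{\mu_+}[g_+(X)]=c$ and $g_+\in\{0,1\}$; for $g_-$, \emph{maximize} $\E_{\mu_-}[g_-(X)T(X)]$ under $\E_{\mu_-}[g_-(X)]=c$ and $g_-\in\{0,1\}$. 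The classical Neyman--Pearson lemma identifies the optima as sublevel (resp. superlevel) sets of $T$: $g_+(x)=\mathds{1}\{T(x)\leq -\kappa^+\}$ and $g_-(x)=\mathds{1}\{T(x)\geq -\kappa^-\}$ for thresholds $-\kappa^\pm$ chosen to satisfy the mass constraints.

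It remains to translate these thresholds back into the original variables. Since $\Delta(x)>0$ on $\cX_+$, the inequality $T(x)\leq -\kappa^+$ is equivalent to $\eta(x)\geq y+\kappa^+\Delta(x)$; and since $\Delta(x)<0$ on $\cX_-$, the reversed inequality $T(x)\geq -\kappa^-$ likewise rearranges to $\eta(x)\geq y+\kappa^-\Delta(x)$. Both pieces therefore coincide with $g^{\kappa^\pm}$, yielding the announced decomposition $g=\cF(g^{\kappa^+},g^{\kappa^-},g_=)$.

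The main obstacle is making the Neyman--Pearson step apply to \emph{any} optimum rather than producing one particular threshold rule. This is precisely where Assumption~\ref{ass:continuity} enters: because $\mupm$ gives zero mass to graphs of measurable functions of $d$, the indifference set $\{x\in\cX_\pm:\eta(x)=y+\kappa^\pm\Delta(x)\}$ has zero $\mu_\pm$-measure, so ties at the threshold are negligible. Any deviation of an optimal $g_\pm$ from the threshold form on a set of positive $\mu_\pm$-measure would permit a strict improvement by swapping acceptance mass between above- and below-threshold regions while preserving the value $c$, contradicting optimality. This argument also explains why Assumption~\ref{ass:continuity} is invoked in the hypothesis of the lemma.
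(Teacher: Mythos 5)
Your proof is correct and follows essentially the same strategy as the paper: decouple $g_=$ by pointwise optimization, hold the common accepted mass $c$ fixed so that the $g_+$ and $g_-$ subproblems separate, and then argue that each must a.s.\ coincide with a threshold rule, with Assumption~\ref{ass:continuity} making ties negligible. The only difference is presentational---you invoke the Neyman--Pearson lemma as a black box, whereas the paper carries out the corresponding exchange argument by an explicit risk-difference computation---but the underlying mechanism is identical.
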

To conclude the proof of Proposition \ref{prp:classif}, it remains to prove that all optimal classifier are a.s. equal when $\Delta(X)\neq 0$, and that the optimal classifier can be chosen as $g^* = \cF(g^{\kappa^*}, g^{\kappa^*}, g_=)$ for some $\kappa^*$.

Denote by $F_+$ the c.d.f. of the random variable $Z_+ = \frac{\eta(X)-y}{\Delta(X)}$ when $X\sim \mu_+$ and by $F_-$ the c.d.f. of $Z_-= \frac{y-\eta(X)}{\Delta(X)}$ when $X\sim \mu_-$. Let $\cQ_+$ (resp. $\cQ_-$) be the associated quantile function. To verify the demographic parity constraint, the classifier $\cF(g^{\kappa^+}, g^{\kappa^-}, g_=)$ must be such that
$$F_+(\kappa^+) = F_-(-\kappa^-)$$
(recall that $\Delta(X)<0$ when $X\sim \mu_-$, so that $g^{\kappa^-}(X)=1$ if and only if $Z_- \geq -\kappa^-$).
Denoting $\beta = F_+(\kappa^+)= F_-(-\kappa^-)$ and using the definition of the quantile function, we see that the law of $g^{\kappa^{\pm}}(X)$, where $X \sim \mu_{\pm}$ is equal to the law of 
\[ \ones\{U\geq \beta\} = \ones\{ \cQ_{\pm}(U)\geq \pm\kappa^{\pm}\},  \]
where $U$ is a uniform random variable on $[0,1]$.
 
Then, minimizing the risk of the classifier is equivalent to maximizing
\begin{equation}\label{eq:function_to_be_maximized}
    \mathbb{E}\left[\left(\cQ_+(U) + \cQ_-(U)\right)\mathds{1}\left\{U\geq  \beta \right\}\right] 
\end{equation} 
Since $F_+$ and $F_-$ are continuous, $u \rightarrow \cQ_+(u)+\cQ_-(u)$ is strictly increasing and left-continuous. Straightforward computations show that the expression in \eqref{eq:function_to_be_maximized} has a unique maximum, which is attained for 
\begin{align*}
    \beta^* = \max \left\{\beta : \cQ_+(\beta)+\cQ_-(\beta) \leq 0 \right\}.
\end{align*}
Hence, it holds that $F_+(\kappa^+)=F_-(-\kappa^-)=\beta^*$. Let $g^*_1 = \cF(g^{\kappa^+_1}, g^{\kappa^-_1}, g_=)$ and $g^*_2 = \cF(g^{\kappa^+_2}, g^{\kappa^-_2}, g_=)$ be two optimal classifiers. For $\Delta(X)> 0$, they take different values only if $Z_+ \in [\kappa^+_1, \kappa^+_2]$. As $F_+(\kappa^+_1)=F_+(\kappa^+_2)=\beta^*$, this happens with zero probability. Likewise, the two classifiers are a.s. equal when $\Delta(X)<0$.

It remains to show that we can pick $\kappa^+=\kappa^-$. If $\cQ_++\cQ_-$ is continuous at $\beta^*$, the proof is complete: in this case, $\cQ_+(\beta^*)+\cQ_-(\beta^*)=0$, and the choice $\kappa^* = \cQ_+(\beta^*) = - \cQ_-(\beta^*)$ satisfies $F_+(\kappa^*)=F_-(-\kappa^*)=\beta^*$.

Otherwise, $\cQ_+(\beta^*)+\cQ_-(\beta^*)<0$. Defining 
\[ q_+ = \liminf_{\beta \rightarrow \beta^*_+} \cQ_+(\beta)\ \text{ and }\ q_- = \liminf_{\beta \rightarrow \beta^*_+} \cQ_-(\beta),\]
we have $q_+ +  q_->0$. Because $q_+>-q_-$ and $\cQ_+(\beta^*)<-\cQ_-(\beta^*)$, there exists $\kappa^*\in [\cQ_+(\beta^*), q_+] \cap [-q_-, - \cQ_-(\beta^*)]$. By construction, $F_+(\kappa^*)=F_-(-\kappa^*)=\beta^*$, concluding the proof.

\subsection{Proof of Lemma \ref{lem:aux_classif}}

Let $g^*$ be a solution to the problem $(C_y')$, and let $g_+^*$, $g_-^*$ and $g_=^*$ be the restrictions of $g^*$ to $\cX_+$, $\cX_=$ and $\cX_=$, so that $g^* = \cF(g_+^*, g_-^*, g_=^*)$. Straightforward computations show that  we necessarily have $g_=^*(X) = \mathds{1}\{\eta(X) \geq y\}\mathds{1}_{\cX_=}(X)$ or $g_=^*(X) = \mathds{1}\{\eta(X) > y\}\mathds{1}_{\cX_=}(X)$.

Now, let us assume (without loss of generality) that $g_+^*$ is not of the form $g^{\kappa^+}$. More precisely, assume that for $\kappa^+$ such that $\mathbb{E}_{\mu_+}[g^{\kappa^+}(X)] = \mathbb{E}_{\mu_+}[g^*_+(X)]$, we have $g^{\kappa^+}(X)\neq g^*_+(X)$ with positive $\mu_+$-probability. Then, the classifier $\cF(g^{\kappa^+}, g^*_-, g^*_=)$ verifies the demographic parity constraint. Moreover, we have
\begin{align*}
    &\mathbb{E}_{\mu_+}\left[g^{\kappa^+}(X)\frac{y-\eta(X)}{\Delta(X)} \right] - \mathbb{E}_{\mu_+}\left[g_+^*(X)\frac{y-\eta(X)}{\Delta(X)} \right] \\
    &= \mathbb{E}_{\mu_+}\left[\frac{y-\eta(X)}{\Delta(X)} g^{\kappa^+}(X)(1-g^*_+(X)) - \frac{y-\eta(X)}{\Delta(X)} (1-g^{\kappa^+}(X))g^*_+(X)\right]\\
    &= \mathbb{E}_{\mu_+}\left[\left(\kappa^+- \frac{\eta(X)-y}{\Delta(X)}\right) g^{\kappa^+}(X)(1-g^*_+(X))\right] - \kappa^+\mathbb{E}_{\mu_+}\left[g^{\kappa^+}(X)(1-g^*_+(X))\right] \\
    &\quad - \mathbb{E}_{\mu_+}\left[\left(\kappa^+- \frac{\eta(X)-y}{\Delta(X)}\right) (1-g^{\kappa^+}(X))g^*_+(X)\right] + \kappa^+\mathbb{E}_{\mu_+}\left[(1-g^{\kappa^+}(X))g^*_+(X)\right].
\end{align*}
Since $\mathbb{E}_{\mu_+}[g^{\kappa^+}(X)] = \mathbb{E}_{\mu_+}[g^*_+(X)]$, we obtain that \[\mathbb{E}_{\mu_+}\left[g^{\kappa^+}(X)(1-g^*_+(X))\right] = \mathbb{E}_{\mu_+}\left[(1-g^{\kappa^+}(X))g^*_+(X)\right].\] Then, the definition of $g^{\kappa^+}$ implies 

\begin{align*}
    &\mathbb{E}_{\mu_+}\left[g^{\kappa^+}(X)\frac{y-\eta(X)}{\Delta(X)} \right] - \mathbb{E}_{\mu_+}\left[g_+^*(X)\frac{y-\eta(X)}{\Delta(X)} \right] \\
     &=- \mathbb{E}_{\mu_+}\left[\left(\kappa^+- \frac{\eta(X)-y}{\Delta(X)}\right)_-(1-g^*_+(X))\right] - \mathbb{E}_{\mu_+}\left[\left(\kappa^+- \frac{\eta(X)-y}{\Delta(X)}\right)_+g^*_+(X)\right] < 0.
\end{align*}
This implies that $\cR_y(\cF(g^{\kappa^+}, g_-^*, g_=^*)) <  \cR_y(\cF(g_+^*, g_-^*, g_=^*))$, which is absurd. Using a similar argument for $g_-^*$, we arrive to the conclusion that  any optimal classifier is of the form $\cF(g^{\kappa^+}, g^{\kappa^-}, g_=^*)$ with $g_=^*(X) = \mathds{1}\{\eta(X) \geq y\}\mathds{1}_{\cX_=}(X)$ or $g_=^*(X) = \mathds{1}\{\eta(X) > y\}\mathds{1}_{\cX_=}(X)$.

\subsection{Proof of Lemma \ref{lem:tail_kappa}}
\label{proof:kappa}
Let us first show that $\kappa$ is locally bounded. Let $[y_0,y_1]$ be a bounded interval. Then, if $\kappa(y)\geq M$ for some $y\in [y_0,y_1]$,
\begin{align*}
   F(y)&= \mu_+\p{\eta(X)\leq y+\kappa(y)\Delta(X)}\geq \mu_+\p{\eta(X)\leq y_0+M\Delta(X)} 
\end{align*}
and
\begin{align*}
   F(y)&= \mu_-\p{\eta(X)\leq y+\kappa(y)\Delta(X)}\leq \mu_-\p{\eta(X)\leq y_1+M\Delta(X)}. 
\end{align*}
However, for $M$ large enough,
\[   \mu_-\p{\eta(X)\leq y_1+M\Delta(X)} < \mu_+\p{\eta(X)\leq y_0+M\Delta(X)},\]
and therefore it holds that $\kappa(y)\leq M$ for all $y\in [y_0,y_1]$. Likewise, we show that for $M$ large enough, $\kappa(y)\geq -M$ for all $y\in [y_0,y_1]$. Hence, $\kappa$ is locally bounded.

We refine this argument to obtain a control on $\kappa$ for large values of $y$. Let $L>1$ and let $y>1$ be such that $\kappa(y)\geq L y$. Then, because of the definition of $\kappa(y)$ and as $\Delta(X)<0$ for $X\sim \mu_-$,
    \begin{align*}
      \mu_+(\eta(X)\geq y)&\geq \mu_+(\eta(X)\geq y+ \kappa(y)\Delta(X)) \\
      &= \mu_-(\eta(X)\geq y+ \kappa(y)\Delta(X)) \geq   \mu_-(\eta(X)\geq y+ Ly\Delta(X)).
    \end{align*}
    The complementary of the region $\{(h,d)\in \Omega:\ d<0,\ h\geq y+Lyd\}$ in the lower half-plane $\{(h,d)\in \Omega:\ d<0\}$ is contained in the set $A_L$ given by the union of the horizontal strip $\{d\geq -1/L\}$ with the region $\{(h,d)\in \Omega:\ d<0,\ h\leq 1+Ld\}$. For $L$ large enough, $\mum(A_L)<1/2$. Then it holds that $\mu_+(\eta(X)\geq y)\geq 1/2$. For $y$ large enough, this is not possible. Thus, we have shown that there exist $L>0$ and $C>0$ such that for $y>C$, we have $\kappa(y)\leq Ly$. Likewise, we show that there exist constants $L,C>0$ such that for $|y|>C$, $|\kappa(y)|\leq L|y|$. As $\kappa$ is also locally bounded, the conclusion follows.

\end{document}